\newcommand{\ds}{\displaystyle}
\newcommand{\R}{\mathbb{R}}
\newcommand{\N}{\mathbb{N}}
\newcommand{\Z}{\mathbb{Z}}
\newcommand{\T}{\mathbb{T}}
\newcommand{\x}{{\boldsymbol x}}
\newcommand{\bu}{{\boldsymbol u}}
\newcommand{\bv}{{\boldsymbol v}}
\newcommand{\bk}{{\boldsymbol k}}
\newcommand{\br}{{\boldsymbol r}}
\renewcommand{\Im}{{\mathcal I_m}}
\newcommand{\IM}{{\mathcal I_m}}
\newcommand{\e}{{\mathrm e}}
\newcommand{\erf}{{\mathrm{erf}}}
\newcommand{\erfc}{{\mathrm{erfc}}}
\begin{document}

\title{Fast Evaluation of Additive Kernels: Feature Arrangement, Fourier Methods, and Kernel Derivatives}

\headers{Fast Evaluation of Additive Kernels}{T. Wagner, F. Nestler, and M. Stoll}

\author{Theresa Wagner\thanks{University of Technology Chemnitz, Chemnitz, Germany 
  (\email{theresa.wagner@math.tu-chemnitz.de},
  \email{franziska.nestler@math.tu-chemnitz.de}, \email{martin.stoll@math.tu-chemnitz.de}).}
\and Franziska Nestler\footnotemark[1]
\and Martin Stoll\footnotemark[1]}

\maketitle

\begin{abstract}
One of the main computational bottlenecks when working with kernel based learning is dealing with the large and typically dense kernel matrix. Techniques dealing with fast approximations of the matrix vector product for these kernel matrices typically deteriorate in their performance if the feature vectors reside in higher-dimensional feature spaces. We here present a technique based on the non-equispaced fast Fourier transform (NFFT) with rigorous error analysis. We show that this approach is also well suited to allow the approximation of the matrix that arises when the kernel is differentiated with respect to the kernel hyperparameters; a problem often found in the training phase of methods such as Gaussian processes. We also provide an error analysis for this case. We illustrate the performance of the additive kernel scheme with fast matrix vector products on a number of data sets. Our code is available at \url{https://github.com/wagnertheresa/NFFTAddKer}.
\end{abstract}

\begin{keywords}additive kernels, feature grouping, Fourier analysis, kernel derivatives, multiple kernel learning
\end{keywords}

\section{Introduction}

Kernel methods \citep{hofmann2008kernel,shawe2004kernel,scholkopf2002learning,meanti2020kernel} are a crucial tool in many machine learning tasks such as support vector machines (SVMs) \citep{hearst1998support,scholkopf2002learning,cervantes2020comprehensive} or Gaussian processes (GPs) \citep{williams1995gaussian,williams2006gaussian,liu2020gaussian}. In the literature many kernel designs can be found and one of the common bottlenecks in their application is dealing with the large and often dense kernel matrix. Our goal in this paper is to analyze a general acceleration technique for additive kernels and their derivatives routed in Fourier analysis. 

Out of the many kernel choices possible, the squared-exponential, the periodic, and the linear kernel are among the most commonly used kernels. The underlying structure of real-world data cannot always be described by those kernels immediately. However, combining several of such kernels by addition, multiplication or a mixture of both can add more complexity to the model~\citep{duvenaud2014automatic}, with such kernel combinations still fulfilling all kernel properties, see \citet{williams2006gaussian}. There has been a growing interest in additive kernels and multiple kernel learning~\citep{gonen2011multiple}. Common applications are in computer vision for instance such as pedestrian~\citep{baek2016fast,baek2019pedestrian,maji2012efficient} or human activity detection systems~\citep{chan2023plame} and medicine, where additive kernel SVMs are used to detect pedestrians, predict human activities or recognize types of cancer~\citep{chan2023plame}. The main motivation of working with additive kernels is that they can reduce the complexity and increase the interpretability of the problem. When working with GP models for instance the problem is based on similarity and neighborhood relations. This means that in order to sufficiently cover the domain a large amount of data is required, what then increases the computational cost of the kernel evaluations. By incorporating additivity to the model the complexity of the features and the curse of dimensionality can be reduced~\citep{durrande2011additive,durrande2012additive}. 

Theoretical guarantees for the additive kernel structure are given by~\citet{yang2015minimax}, who present minimax risks for regression functions that admit an additive structure.

In the literature additive kernels are mainly associated with SVMs~\citep{maji2012efficient, baek2016fast, xie2018uncertain, chan2023plame,christmann2012consistency,yang2012practical}, GPs~\citep{durrande2011additive,durrande2012additive,duvenaud2011additive} or source separation tasks~\citep{liutkus2014kernel}. Below, we briefly review some of the specifics of using additive kernels in SVMs and in GPs.

Additive SVMs can be employed for abstract input spaces for instance and interpreting the resulting predictions is typically easier. Moreover, the favourable robustness properties of SVMs remain the same for additive SVMs~\citep{christmann2012consistency}. In other works lookup tables are employed to reduce the training and testing time and save memory. \citet{baek2016fast} suggest to work with a cascade implementation  of the additive kernel SVM to reduce the computation time and use lookup tables to avoid kernel expansion. The PmSVM-LUT method proposed by~\citet{yang2012practical} uses polynomial approximation for the gradient to accelerate the dual-coordinate descent method. \citet{chan2023plame} develop a similarity measure PLAME (piecewise-linear approximate measure) that is incorporated with the dual-coordinate descent method. By this they approximate the additive kernel to ensure an efficient training of the additive kernel SVM. Furthermore, additive kernels are employed when prior knowledge about the distribution of the data is given or an easily interpretable prediction function is desired~\citep{christmann2012consistency}. \citet{xie2018uncertain} suggest the UKSVM (uncertain kernel SVM) method for classifying uncertain data. Another relevant application of additive kernel SVMs is histogram-based image comparison. Common choices for such additive histogram comparison kernels are the intersection or chi-squared kernel. Linearly combining functions of each coordinate of the histogram yields the comparison~\citep{maji2012efficient}.

Additive kernels are also used in GP models. \citet{durrande2012additive} argue that even if the function to be approximated is not purely additive, additive Kriging models can express the additivity of the function well. They are combining the features of GP modeling with generalized additive models what is especially suitable for high-dimensional problems. A similar approach is presented by~\citet{durrande2011additive}, where additivity is incorporated in the covariance kernel to obtain GPs for additive models. The response of the generalized additive models (GAMs)~\citep{hastie1986generalized} simulator can then approximately be separated into a sum of univariate functions. \citet{duvenaud2011additive} propose an expressive but tractable parameterization of the kernel function. By this all input interaction terms can be evaluated efficiently. The additive structure incorporated into the model which is present in many real data sets leads to increased interpretability and predictive power what yields a better performance compared to standard GP models overall.


In many papers using a kernel of the form
\begin{align} \label{tradANOVAkernel}
    \kappa_{\text{A}}^t (\bm{x}, \bm{x}') = \sigma_f^2\sum_{1 \le i_1 < i_2 < \dots < i_t \le d} \prod_{j=1}^t \kappa_{i_j} (x_{i_j}, x_{i_j}'),
\end{align}
with $\bm{x}, \bm{x}' \in \mathbb{R}^d$, $t\leq d$ and the signal variance parameter $\sigma_f \in \mathbb{R}_{>0}$, is suggested. It is referred to as the $t$-th order additive kernel~\citep{duvenaud2011additive} or the ANOVA kernel~\citep{shawe2004kernel}. The term ANOVA is an abbreviation for analysis of variance. This is reasonable in this setting since we aim to work with a kernel that compares the variance of the data in detail. The kernels $\kappa_{i_j}$ are one-dimensional base kernels. Depending on the order $t$ of the additive kernel multiple base kernels are multiplied to cover higher order feature interactions. Overall $\kappa_{\text{A}}^t$ yields the sum of all possible $t$-th order interactions of one-dimensional base kernels. Here, the idea is to compare the data on a subset of features first and summing over several of such kernels relying on fewer features~\citep{shawe2004kernel}.

Assume all base kernels are squared-exponential kernels
\begin{align*}
    \kappa_{\text{SE}} ( \bm{x}, \bm{x}') = \exp \left( - \frac{\| \bm{x} - \bm{x}' \|_2^2}{2 \ell^2} \right)
\end{align*}
with the same length-scale parameter $\ell \in \mathbb{R}_{ >0}$. Then, the $d$-th order ANOVA kernel is nothing else than the squared-exponential kernel evaluated at all feature dimensions at once, since
\begin{equation*}
    \begin{aligned}
        \kappa_{\text{A}}^d ( \bm{x}, \bm{x}') &= \sigma_f^2 \prod_{j=1}^d \kappa_j (x_j, x_j')
        = \sigma_f^2 \prod_{j=1}^d \exp \left( - \frac{\|x_j - x_j' \|_2^2}{2 \ell^2} \right) \\
        &= \sigma_f^2 \exp \left( - \sum_{j=1}^d \frac{ \|x_j - x_j'\|_2^2}{2 \ell^2} \right)
        = \sigma_f^2 \exp \left( - \frac{ \|\bm{x} - \bm{x}'\|_2^2}{2 \ell^2} \right) \\
        &= \sigma_f^2 \kappa_{\text{SE}} ( \bm{x}, \bm{x}')
    \end{aligned}
\end{equation*}
by the power law. The sum of all such $t$-th order ANOVA kernels is the full additive kernel
\begin{align} \label{full_add_kernel}
    \kappa_{\text{A}}^{\text{full}} (\bm{x}, \bm{x}') = \sum_{t=1}^d \kappa_{\text{A}}^t (\bm{x}, \bm{x}').
\end{align}
However, involving all subsets can be too over-determined for a good approximation of the data structure. Instead combining  terms only relying on a small number of features can be more promising~\citep{stitson1999support}.

In this paper, we propose the use of a special case of the ANOVA kernel~\eqref{tradANOVAkernel}, whose design is discussed in detail in the following.

It is a common phenomenon that many real-world data sets are mainly based on sums of low-order feature interactions~\citep{duvenaud2011additive}. Therefore, we do not want to incorporate the full additive kernel~\eqref{full_add_kernel} merging base kernels of all possible dimensionality. At the same time, we need to ensure to capture non-local structure, which is why we do not incorporate all feature dimensions within one kernel evaluation at once. Hence, we suggest to work with a weighted sum of kernels
\begin{align} \label{eq:wind_gauss_anova_kernel}
    \kappa ( \bm{x}_i , \bm{x}_j ) = \sigma_{f}^2 \sum_{s=1}^P \kappa_s ( \bm{x}_i , \bm{x}_j ),
\end{align}
where $\bm{x}_i , \bm{x}_j \in \mathbb{R}^d$, $i,j = 1, \dots, N$, and the sub-kernels $\kappa_s$ depend on low-dimensional feature interactions only. For this we define sets of feature indices $\mathcal{W}_s$ building the $s$-th group of features, that is
\begin{align*}
    \kappa_s (\bm{x}_i, \bm{x}_j) = \exp \left( - \frac{\| \bm{x}_i^{\mathcal{W}_s} - \bm{x}_j^{\mathcal{W}_s} \|_2^2}{2 \ell^2} \right),
\end{align*}
for $s = 1, \dots, P$ and $P \in \mathbb{N}$ the number of feature groups and sub-kernels. For the bivariate case the groups of features $\mathcal{W}_s \in \{ (a,b): a,b \in \{1, \dots, d\}, a \neq b\}$ are of length $d_s=2$ and for the trivariate $\mathcal{W}_s \in \{ (a,b,c): a,b,c \in \{1, \dots, d\}, a \neq b \neq c\}$ with $d_s=3$ for instance.

Accordingly, $\bm{x}_i^{\mathcal{W}_s} \in \mathbb{R}^{d_s}$, $i = 1, \dots, N$, are the corresponding data points restricted to those indices. The resulting kernel
represents the special case of the ANOVA kernel~\eqref{tradANOVAkernel} with $t = |\mathcal{W}_s|= d_s$ and Gaussian base kernels relying on $d_s$-dimensional windows $\mathcal{W}_s$ of features, and is referred to as the additive Gaussian kernel from now on. We aim for a method that keeps the computational complexity for large-scale applications low. Since more feature windows lead to more kernels what leads to solving more linear systems with dense matrices that are square in the number of data points, we cannot work with all possible low order feature interactions. Instead we require a procedure for reasonably reducing the number of windows.

Strategies on how to determine the sets of feature indices $\mathcal{W}_s$ are elaborated in Section~\ref{Sec:Feature engineering techniques}. If the number of data points $N$ is large, multiplying and solving with $K = \sigma_f^2 \sum_{s=1}^P K_s$ is of quadratic or even cubic computational complexity. For this, we suggest employing NFFT-accelerated approximations in Section~\ref{Sec: NFFT}. Naturally, the choice of the parameter values can have a huge impact on the prediction quality and the parameters have to be optimized. For this, we introduce a NFFT-acceleration procedure for multiplying and solving with the matrix representing the derivative of the kernel with respect to the length-scale hyperparameter $\ell$, that is $K_{\text{der}} = \sigma_f^2 \sum_{s=1}^P K_s^{\ell}$ with
\begin{align*}
    \kappa_s^{\ell} (\bm{x}_i, \bm{x}_j) = \frac{ \| \bm{x}_i^{\mathcal{W}_s} - \bm{x}_j^{\mathcal{W}_s} \|_2^2}{\ell^3} \exp \left( - \frac{\| \bm{x}_i^{\mathcal{W}_s} - \bm{x}_j^{\mathcal{W}_s} \|_2^2}{2 \ell^2} \right).
\end{align*}
We demonstrate the corresponding approximation error empirically and provide error estimates analytically. In Section~\ref{Sec: Global sensitivity analysis} we present an approach on how to determine $\mathcal{W}_s$ analytically in the Fourier setting and run first experiments according to this scheme. We showcase the numerical performance of the presented feature grouping techniques for the additive kernel design with NFFT-accelerated kernel evaluations in Section~\ref{Sec:Numerical results} and present concluding remarks in Section~\ref{Sec:Conclusion}.

\subsection*{Main Contributions}

We summarize the main contributions of this paper as follows:

\begin{itemize}
    \item Novel combination of feature grouping techniques for the additive kernel setting with NFFT-accelerated kernel evaluation.
    \item Development of an NFFT-acceleration procedure for the derivative kernel.
    \item Derivation of Fourier error estimates for the trivariate Gaussian kernel and its derivative kernel.
\end{itemize}

\section{Feature Engineering Techniques} \label{Sec:Feature engineering techniques}

In this section we focus on determining the feature arrangement for the additive kernel introduced previously. We give a thorough overview of existing feature engineering methods in the literature. We later choose a few methods that are most suitable for the additive kernel setting. For the sake of comparability and reproducibility we focus on open source Python software. The performance of those methods is analyzed and compared comprehensively in Section~\ref{Sec:Numerical results}.

\subsection{Feature Selection and Grouping Techniques in the Literature} \label{Sec:Feature selection and grouping techniques in the literature}

We now give an overview of feature selection and feature grouping techniques proposed in the literature. This shall form the basis for feature arrangement techniques in the additive kernel setting.

\subsubsection{Feature Selection}

In the literature many definitions for feature selection can be found. It is described as a dimensionality reduction technique that chooses a smaller subset of features from the original feature set trying to fulfill different criteria. In general, the most common objectives of feature selection are improving the prediction quality, reducing the computation (training and utilization) time, reducing measurement and storage requirements, increasing the comprehensibility and interpretability, gaining a better understanding of the data and stemming the curse of dimensionality~\citep{chandrashekar2014survey,li2017feature,kumar2014feature,guyon2003introduction,dash1997feature,miao2016survey}.

The most common approach to categorize feature selection algorithms is classifying them into filter, wrapper and embedded methods~\citep{kumar2014feature,chandrashekar2014survey,jovic2015review,guyon2003introduction,miao2016survey,venkatesh2019review,saeys2007review}. Filter methods select the features based on their importance with respect to the target concept~\citep{bolon2013review}. For this statistical measures such as information gain~\citep{yang1997comparative,raileanu2004theoretical}, chi-square test~\citep{yang1997comparative}, Fisher score, correlation coefficient, variance threshold, reliefF~\citep{kira1992practical,raileanu2004theoretical}, F-statistic~\citep{ding2005minimum} or mRMR~\citep{peng2005feature} are employed~\citep{venkatesh2019review,miao2016survey}. Filter methods are independent of the choice of the learning algorithm. Therefore the obtained feature subsets can subsequently be transferred to any learning task on that data set. This is not the case for wrapper methods where feature selection is accomplished by performing the learning task on candidate subsets until a stopping criterion is met~\citep{kohavi1997wrappers}. By design, wrapper methods are more computationally complex than filter methods but often more accurate~\citep{venkatesh2019review}. Examples are recursive feature elimination~\citep{yan2015feature}, the sequential feature selection algorithm~\citep{jain1997feature} or genetic algorithms. In embedded methods the feature selection is performed as part of the learning process with a specific learning algorithm~\citep{kumar2014feature}. One example for embedded methods are random forests~\citep{venkatesh2019review}. Embedded methods are based on the same idea as wrapper methods but they are working with an objective function consisting of a goodness-of-fit term and a penalty for large number of variables~\citep{guyon2003introduction} what makes them more efficient.

Often, the general procedure for feature selection is characterized as $4$ key steps: subset generation, evaluation of the subset, stopping criteria and validation of the result~\citep{kumar2014feature,liu2005toward,jovic2015review,dash1997feature}. \citet{dash1997feature} furthermore define $3$ categories of generation procedures (complete, heuristic, random) and $5$ categories of evaluation functions (distance, information, dependence~\citep{song2012feature}, consistency, classifier error rate).

Other common categorization approaches found in the literature focus on the availability of the label information, the data perspective or the label and selection strategy perspective. \citet{miao2016survey} distinguish between supervised, semi-supervised and unsupervised methods~\citep{dy2004feature}. \citet{yu2004efficient} present feature selection techniques based on relevance and redundancy.

In general, when looking for a suitable feature selection method the following aspects should be considered: simplicity, stability, the desired number of reduced features, the classification accuracy and storage and computational requirements~\citep{chandrashekar2014survey}.

\subsubsection{Feature Grouping}

In addition to classical feature selection approaches as described above several feature selection techniques based on previous feature grouping can be found in the literature. The main idea behind this approach is to generate groups of features where the intra-group similarity is maximized and the inter-group similarity is minimized~\citep{kuzudisli2023review}. Afterwards feature selection is performed based on this group arrangement. 

This procedure is motivated by discovering that relevant features are highly correlated in a high-dimensional setting. Therefore, groups of correlated features resistant to sample size variations can be formed~\citep{garcia2016high}.

To that effect, feature grouping comes with many benefits when learning with high-dimensional data~\citep{garcia2016high}. The search space being reduced with feature grouping and the higher resistance to sample variations~\citep{kuzudisli2023review} leads to an improved stability of feature selection~\citep{jornsten2003simultaneous} and effectiveness of the search~\citep{garcia2016high}, helps to reduce the complexity of the model and increase the generalization capability~\citep{kuzudisli2023review} and potentially reduces the estimator variance~\citep{shen2010grouping}. Popular applications are text mining~\citep{bekkerman2003distributional,dhillon2003divisive} or microarray domains~\citep{ben1999clustering}.

Among the most basic feature grouping methods are exhaustive or explicit search for feature groups~\citep{zheng2021feature,zhong2012efficient}. However, this combinatorial optimization problem is often computationally infeasible when working with large data sets. To overcome this, greedy hill climbing strategies have been proposed, with features leading to the largest gain in the subset score greedily being added to the candidate subset individually~\citep{zheng2021feature}. However, they typically only lead to local optima.

Regularization techniques are very common algorithms for generating feature groups and belong to the category of embedded methods~\citep{garcia2016high,zhong2012efficient,yang2012feature,han2015discriminative}. By adding a regularization term to the objective function the model is fitted minimizing the coefficients what results in features with coefficients close to zero being dropped~\citep{garcia2016high}. Common sparse-modeling algorithms are the lasso regularization~\citep{tibshirani1996regression} and its extensions such as group lasso~\citep{yuan2006model}, adaptive lasso, fused lasso and clustered lasso. Further regularization techniques worth mentioning are Bridge regularization~\citep{huang2008asymptotic}, elastic net regularization~\citep{zou2005regularization} and the orthogonal shrinkage and clustering algorithm for regression (OSCAR) method introduced by \citet{bondell2008simultaneous}. However, many of the aforementioned methods suffer from the problem that they cannot distinguish groups of features that are similar but still different well and often tend to merge those groups together.

As a remedy, discriminative feature grouping (DFG) is proposed by~\citet{han2015discriminative}. By introducing a novel regularizer in the feature coefficients fusing and discriminating feature groups is balanced out. Moreover, they present an adaptive DFG (ADFG) aiming to yield a better asymptotic property.

Subspace clustering methods represent another type of feature grouping techniques that intend to detect clusters in subspaces rather than the whole data space \citep{chen2012feature}. They are distinguished between hard subspace clustering~\citep{agrawal1998automatic} and soft subspace clustering methods~\citep{huang2005automated,domeniconi2007locally,jing2007entropy}. While hard subspace clustering methods detect the exact subspaces of the clusters, in soft clustering methods subspaces with large weights are identified by assigning weights to features instead~\citep{chen2012feature,gan2015subspace}. Many of such methods have been proposed in the literature, such as CLIQUE~\citep{agrawal1998automatic}, W-$k$-means~\citep{huang2005automated}, fuzzy subspace clustering (FSC)~\citep{gan2008convergence}, EWKM~\citep{jing2007entropy}, LAC~\citep{domeniconi2007locally} and EEW-SC~\citep{deng2010enhanced}. Alternatives that are less sensitive to noise and missing values~\citep{chen2012feature,gan2015subspace} are FG-$k$-means, introduced by~\citet{chen2012feature} as an iterative alternative soft subspace clustering method, and AFG-$k$-means~\citep{gan2015subspace}. While the feature groups are assumed to be given as inputs in FG-$k$-means, the groups are detected automatically by dynamically updating them during the iterative process in AFG-$k$-means instead.

Regarding stability, several group-based feature selection methods were developed for the purpose of improving robustness. The main reason for instability in feature selection techniques originates in the small number of samples in a high-dimensional domain and the goal of selecting the minimal subset without redundant features~\citep{yu2008stable}. For this two frameworks have been proposed: dense feature groups (DFG)~\citep{yu2008stable} and consensus group stable feature selection (CGS)~\citep{loscalzo2009consensus}. Overall, the concept of those methods originates from the observation that features close to core (peak) regions have a high correlation.

\subsection{Feature Arrangement Techniques for Additive Kernels} \label{Sec:Feature arrangement techniques for additive kernels}

After having presented existing feature engineering techniques above, we want to choose methods suitable for arranging the features in the additive kernel setting. In this context, we refer to feature grouping as separating feature dimensions into multiple kernels by defining corresponding feature windows $\mathcal{W}_s$ as introduced in \eqref{eq:wind_gauss_anova_kernel}. For this, several requirements have to be fulfilled. 

First, we do not only want to get rid of less relevant features but also need a sensible scheme for separating the feature indices into several small groups. Additionally, we want to keep the number of kernels $P$ small since more kernels lead to higher computational costs as more matrix vector products need to be evaluated. The kernel matrix vector product approximations are more expensive the larger the size of the corresponding feature subset $d_s$. In order to exploit the full computational power of those approximation techniques $d_s$ is required to be small. Since both demands are opposed to each other, the number $P$ of kernels or the number of feature groups respectively needs to be balanced carefully with the cardinality of the feature groups.

Second, we do not necessarily have the kernel entries given explicitly. When working with large-scale problems fast approximation methods are often employed for speeding up multiplications with the kernel matrices. Then, the routine operates as a black box, where the data points and a vector are given as inputs and the kernel vector product is returned as the output. We go into more detail in Section~\ref{Sec: NFFT}. Indeed, a number of feature selection techniques require having those entries available explicitly.

In the remainder of this section we discuss several feature grouping techniques that aim to determine the feature groups in a sophisticated way. We describe some very basic feature grouping strategies first before we consider more elaborated ones next. In Section~\ref{Sec:Numerical results} we analyze and compare their performance.

Note that we refrain from adding certain feature grouping methods to our investigations even though they are somewhat prevalent in the literature. Examples are OSCAR, CLIQUE, FGOC (feature grouping and orthogonal constraints) that are not competitive regarding their computational complexity. Hierarchical clustering, (adaptive) discriminative feature grouping, k-means and fuzzy c-means clustering are methods where the strategy on how to define feature windows are not suited to the setting we want to employ.

\subsubsection{Straightforward Feature Grouping Methods} \label{Sec:Straight-forward feature grouping methods}

A very basic strategy on how to separate the feature dimensions is to simply group the features following their feature indices determined by the column arrangement in the original data set. For $d=6$ this yields windows $\mathcal{W}_1 = \{ 1,2\}$, $\mathcal{W}_2 = \{ 3,4\}$, $\mathcal{W}_3 = \{ 5,6\}$ for $t=d_s=2$ and windows $\mathcal{W}_1 = \{ 1,2,3\}$, $\mathcal{W}_2 = \{ 4,5,6\}$ for $t=d_s=3$ respectively, for instance. For $t=d_s=1$ this represents a special case of the feature index based allocation. Then, the feature dimensions are split into $d$ one--dimensional windows and the features are arranged as $\mathcal{W}_s=\{s\}$ for $s = 1, \dots, d$.

Even though this strategy is very basic it constitutes a valuable comparative method for examining whether putting more effort in terms of computational complexity and runtime into more complex techniques pays off in achieving higher predictive accuracy. 

\subsubsection{Methods Based on Feature Importance Ranking}

In previous \\works \citep{nestlerlearning,wagner2023preconditioned} we ranked all features by their mutual information score (MIS)~\citep{battiti1994using} and arranged them into groups of $3$ following their importance scores starting with the largest one. The MIS quantifies how much information about the label can be obtained by knowing the feature value. It is a univariate measure that does not examine the impact of a combination of several features on predicting the target. However, the MIS ranking method only requires the original data as input and is of low computational complexity. After having employed this feature arrangement technique in previous papers already we now want to analyze its performance more extensively by comparing it to several other methods. 

Instead of computing the importance scores via MIS, other measures can be applied. Common alternatives are the Fisher score and reliefF. Alternatively, feature importance can be obtained by fitting a decision tree model. By introducing a threshold, features with an importance score below this value are dropped.

Based on the feature importance scores, the features are ranked and arranged into groups of the desired size. The feature arrangement can follow different strategies. Features can either be arranged consecutively following the ranking such that the features with the $3$ highest scores are arranged into the first window and so on for $| \mathcal{W}_s | = 3$ for instance or the features are separated into different feature groups successively so that features with similar importance scores do not end up in the same group. We refer to these arrangement strategies as `consec' and `distr' from now on. 

\subsubsection{Regularization Techniques} 

In contrast to computing each feature's importance score individually as described above, one can work with a regression model for estimating sparse coefficients. In the well-known Lasso regularization, the objective function
\begin{align*}
    Z_{\text{Lasso}} = \frac{1}{2N} \| Xw - y \|_2^2 + \lambda_{\text{Lasso}} \|w \|_1
\end{align*}
is minimized with respect to the coefficients $w$ and $\lambda_{\text{Lasso}} > 0$, the regularization parameter that regulates the degree of sparsity of the  estimated coefficient vector.

Elastic-Net is a regression model incorporating both the $\text{L}1$-norm and the $\text{L}2$-norm of the coefficients to the model. The corresponding objective
\begin{align*}
    Z_{\text{EN}} = \frac{1}{2N} \| Xw - y \|_2^2 + \lambda_{\text{EN}} \rho \|w \|_1 + \frac{\lambda_{\text{EN}} (1-\rho)}{2} \|w \|_2
\end{align*}
is again minimized with respect to the coefficients $w$ and the ratio between the penalty terms is balanced with the $\text{L}1$-ratio $\rho$. Note that with $\rho=1$ the objective of Elastic-Net equals with Lasso's objective.

By combining $\text{L}1$ and $\text{L}2$ regularization, Elastic-Net benefits from both the sparsity of the Lasso model and the regularization properties of ridge, such as stability. However, in settings with two correlated features, Lasso randomly chooses one of them while Elastic-Net encourages a grouping effect and tends to select both~\citep{zou2005regularization}.

Note that the features are selected based on classical regression on the data matrix $X \in \mathbb{R}^{N \times d}$ rather than on how they perform in a non-linear context and hence we are working in a different context than in the kernel setting here.

In addition to `consec' and `distr' we introduce the `direct' feature arrangement strategy for Lasso and Elastic-Net. In `direct' the features with nonzero coefficients are immediately assigned to the windows consecutively following their indices without ranking them first.

\subsubsection{Feature Arrangement Based on Clustering} 

Another approach for detecting feature groups is via feature clustering techniques. One way of doing this is via connected components. This method is based on the correlation matrix holding the Pearson product-moment correlation coefficients. In clustering via connected components two features are considered to be connected if the magnitude of their correlation value is larger than some predefined threshold. Based on those pairs the feature clusters are detected.

Different to the feature importance ranking and regularization techniques described above, clustering methods are unsupervised and do not incorporate the target values into the clustering process.

In addition to `consec' and `distr' we introduce `single' as a feature arrangement strategy for the connected components method. In `single' all centroid features build a window of length $1$.

\subsubsection{Feature Grouping Optimization via Regularization} 

Alternatively to the aforementioned approaches, we propose to determine the feature groups via an optimization. The objective $Z_{\text{fg}}$ of this feature grouping optimization consists of the objective $Z$ of the original classification/regression method plus a $\text{L}1$ regularization term, that is
\begin{align*}
    Z_{\text{fg}} ( \theta) = Z(\theta) + \lambda_{\text{fg}} \| \sigma_f^{\text{fg}} \|_1,
\end{align*}
with $\lambda_{\text{fg}} > 0$ the regularization parameter balancing the impact of the $\text{L}1$ penalty,
\begin{align*}
    K_{\text{fg}} = \underbrace{\sigma_{f_1}^2 K_1}_{K_1^{\text{fg}}} + \dots + \underbrace{\sigma_{f_P}^2 K_P}_{K_P^{\text{fg}}}
\end{align*}
and $\theta_{\text{fg}} = (\sigma_f^{\text{fg}}, \ell, \sigma_{\varepsilon})$, where $\sigma_{\varepsilon}$ is the noise parameter and $\sigma_f^{\text{fg}} = [\sigma_{f_1}, \dots, \sigma_{f_P}]^{\intercal}$. Note that in contrast to the definition of the additive Gaussian kernel $\kappa$ in \eqref{eq:wind_gauss_anova_kernel}, all sub-kernels $K_s^{\text{fg}}$ are assigned a separate kernel weight $\sigma_{f_s}$ now, with respect to which the optimization is performed. The noise $\sigma_{\varepsilon}$ and length-scale $\ell$ parameters are kept fixed during the feature grouping optimization.

The model is initialized with all possible feature subsets with $d_s=2$, what leads to $P = {d \choose 2}$ sub-kernels $K_s^{\text{fg}}$. Through the $\text{L}1$ regularization term sparsity is ensured and most of the kernel weights $\sigma_{f_s}$ are pushed to zero. By this, only a few non-zero kernel weights are obtained, what yields the desired optimal feature groups immediately.

Since the binomial coefficient grows big even for moderate feature dimensions, the feature grouping optimization is performed on a small subset of the data set only. Additionally, one can perform a feature ranking initially, using the MIS ranking for instance, to reduce the number of pairs $P$ by dropping the features least relevant in the very beginning. While the feature importance ranking based and clustering techniques do not allow for repeated feature indices, this feature grouping optimization approach enables feature indices to appear in multiple feature windows $\mathcal{W}_s$.

\section[NFFT-Accelerated Kernel Vector Products and Multiplications with the Derivative Kernel]{\texorpdfstring{NFFT-Accelerated Kernel Vector Products and Multiplications\\ with the Derivative Kernel}} \label{Sec: NFFT}

When working with large-scale data, multiplying and solving with the dense kernel matrix is the classical computational bottleneck. In this section we give an overview of techniques for accelerating evaluations with the kernel matrix and its derivative. 

Examples for such methods are structured kernel interpolation (SKI), subset of regressors (SoR), deterministic training conditional (DTC), fully independent training conditional (FITC) and partially independent training conditional (PITC) approximation or hierarchical matrices (H-matrices). SKI is an approach based on inducing points that accelerates kernel approximations through kernel interpolation~\citep{wilson2015kernel}. Another inducing point approach is SoR that approximates kernel vector multiplications based on inducing points with a specific prior for the vector. The DTC approximation works similarly to the SoR except from the relation between the function value and the inducing points being described by an exact test conditional instead of being deterministic such as for SoR. This means that with DTC the predictive response has a prior variance of its own~\citep{quinonero2005unifying}. FITC is another likelihood approximation with an extensive covariance. Different to SoR and DTC, FITC does not introduce a deterministic relation between the function value and the inducing points. For this, it employs an approximation to the training conditional distribution as an independence assumption~\citep{quinonero2005unifying}. PITC further improves this approximation by equipping the training conditional with a block diagonal covariance~\citep{quinonero2005unifying}. An alternative approach are hierarchical matrices that are data-sparse approximations of non-sparse matrices by partitioning them into low-rank factorized sub-blocks~\citep{borm2003introduction}.

In this paper we emphasize the NFFT-based fast summation technique we employed in \citet{nestlerlearning} and \citet{wagner2023preconditioned}. In fact, all the above-mentioned fast kernel vector product approximation techniques have in common that their effectiveness and high efficiency is restricted to small feature space dimensions. This again motivates the need to splitting the feature space and working with additive kernels. 

Moreover, we demonstrate the effect of hyperparameter choices on learning tasks and highlight the importance of hyperparameter optimization. Naturally, for this the derivatives with respect to the hyperparameters are required and one typically is faced with the task of multiplying also with the derivative matrix for that particular hyperparameter. We here advocate for an explicit computation employing the kernel structure as much as possible as finite difference approximations are typically not stable enough. Another alternative would be automatic differentiation techniques such as the one employed in \citet{charlier2021kernel}.
We introduce an NFFT-based technique for approximating multiplications with derivative kernels. A typical example would be the parameter training for Gaussian process regression where due to the log-determinant one typically requires matrix-vector products with the derivative matrix as part of a matrix function approximation for the correct computation of the parameter gradient.

\subsection{Fourier Theory}
In many kernel learning tasks such as the GP hyperparameter  optimization, multiplying with the kernel matrix $K \in \mathbb{R}^{N, N}$ is most computationally complex. The cost of computing its product $K v$ with a vector $v\in\R^N$ through the conjugate gradient method is $\mathcal{O} (N^2)$ in each iteration, for instance. This scales badly for large-scale applications. Therefore, we approximate these products leveraging the computational power of the non-equispaced fast Fourier transform (NFFT) instead. This is realized by applying the fast summation approach, in which the NFFT and the adjoint NFFT, confer~\citet{potts2003fast}, are combined to compute sums of the form
\begin{align} \label{eq: fast sum}
    h( \bm{x}_i' ) \coloneqq \sum_{j=1}^N v_j \kappa ( \bm{x}_i', \bm{x}_j ) \quad \forall i = 1, \dots, N'
\end{align}
efficiently. For this, the kernel $\kappa$ is approximated by a trigonometric polynomial, what can be written as
\begin{align} \label{Fourier representation}
    \kappa (\x',\x) = \kappa (\br) \approx \sum_{\bm k\in\mathcal{I}_{\bm{M}}} \hat{c}_{\bk} \e^{2 \pi \mathrm{i} \bk^\intercal \br / L},
\end{align}
where $\x,\x'\in\R^d$, $\br \coloneqq \x' - \x$, $L$ is the period that has to be chosen appropriately, $\hat{c}_{\bk}$ are the Fourier coefficients and $\bm{M} = ( M_1, \dots, M_d )^\intercal \in 2 \mathbb{N}^d$ is a multivariate grid size, that is a $d$-dimensional vector with even integer components, what gives multivariate index sets
\begin{align*}
    \mathcal{I}_{\bm{M}} \coloneqq \{ - \textstyle{\frac{M_1}{2}}, \dots, \textstyle{\frac{M_1}{2}} -1 \} \times \dots \times \{ - \textstyle{\frac{M_d}{2}}, \dots, \textstyle{\frac{M_d}{2}} -1 \}
\end{align*}
of cardinality $|\mathcal I_{\bm M}|=M_1\cdot\ldots\cdot M_d$.
Typically, we set $M_1 = \dots = M_d = m$, so that the grid size is the same respective all dimensions.
Replacing $\kappa$ in \eqref{eq: fast sum} by its Fourier representation \eqref{Fourier representation} and rearranging the sums gives
\begin{align} \label{eq: approx fast sum}
    h ( \bm{x}_i' )
    \approx  \tilde{h} ( \bm{x}_i' ) = \sum_{j=1}^{N} v_j \sum_{\bm{k} \in \mathcal{I}_{\bm{M}}} \hat{c}_{\bm{k}} \e^{2 \pi \mathrm{i} \bm{k}^ \intercal ( \tilde{\bm{x}}_i' - \tilde{\bm{x}}_j )}
    = \sum_{\bm{k} \in \mathcal{I}_{\bm{M}}} \hat{c}_{\bm{k}} \left(\sum_{j=1}^{N} v_j \e^{-2 \pi \mathrm{i} \bm{k}^ \intercal \tilde{\bm{x}}_j} \right) \e^{2 \pi \mathrm{i} \bm{k}^ \intercal \tilde{\bm{x}}_i'},
\end{align}
where $\tilde{\bm{x}}_i'$ and $\tilde{\bm{x}}_j$ are now scaled nodes, for which $\tilde{\bm{x}}_i'-\tilde{\bm{x}}_j\in[-1/2,1/2]^d$ holds.
Note that $\tilde h$ is now a periodic function with period $1$ in each coordinate direction.
By this, we can reduce the arithmetic complexity for computing $Kv$ from $\mathcal{O} (N^2)$ to $\mathcal{O} (N \log N)$, provided that the parameters involved are chosen appropriately.
The inner sums for all $\bm{k}$ can be computed via a so-called adjoint NFFT (or type-2 nonuniform FFT) and the approximation of the outer sums are then realized by a $d$-variate NFFT (or type-1 nonuniform FFT). This procedure is known as NFFT-based fast summation.
NFFT and adjoint NFFT themselves are approximate algorithms for an efficient evaluation of the required trigonometric sums at equidistant nodes. The accuracy of these algorithms is controlled by several parameters, which we do not further discuss here.
For detailed information concerning NFFT and related algorithms we refer to~\citet{NFFTrepo} and references therein.

In our investigations, the kernel function $\kappa$ is non-periodic. Thus, the approximation by a trigonometric polynomial is not straightforward. We refer to our previous paper~\citep{nestlerlearning} and references therein for more details on the underlying theory.

In 1D, the easiest periodization approach just continues the kernel function periodically in order to obtain a continuous $1$-periodic function $\tilde \kappa(r)\coloneqq \kappa(r+k)$, where $k\in\Z$ is chosen such that $r+k\in[-1/2,1/2]$, see Figure~\ref{fig:periodization} for an illustration. The Fourier coefficients of that $C_0(\T)$-continuation will tend to zero like $\mathcal O(k^{-2})$.
A faster decay of the Fourier coefficients can be achieved by a smoother periodization, in which the function is regularized at the edges by a smooth transition, see \citet{potts2003fast}.

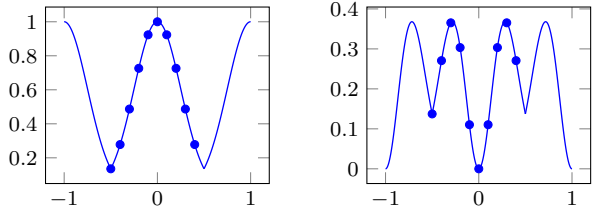
\begin{figure}[ht]
\centering
\begin{tikzpicture}\footnotesize
        \begin{axis}
        [width=0.35\linewidth]
        \addplot[color=blue,solid,line width=0.5pt,domain=-0.5:0.5,samples=300]{exp(-(x^2/(2*0.25^2))))};
        \addplot[color=blue,solid,mark=*,/tikz/mark size=1.5,draw=none,domain=-0.5:0.4,samples=10]{exp(-(x^2/(2*0.25^2))))};
        \addplot[color=blue,solid,line width=0.5pt,domain=-1:-0.5,samples=150]{exp(-((x+1)^2/(2*0.25^2))))};
        \addplot[color=blue,solid,line width=0.5pt,domain=0.5:1,samples=150]{exp(-((x-1)^2/(2*0.25^2))))};
        \end{axis}
\end{tikzpicture}\qquad
\begin{tikzpicture}\footnotesize
        \begin{axis}
        [width=0.35\linewidth]
        \addplot[color=blue,solid,line width=0.5pt,domain=-0.5:0.5,samples=300]{exp(-(x^2/(2*0.2^2))))*x^2/(2*0.2^2)};
        \addplot[color=blue,solid,mark=*,/tikz/mark size=1.5,draw=none,domain=-0.5:0.4,samples=10]{exp(-(x^2/(2*0.2^2))))*x^2/(2*0.2^2)};
        \addplot[color=blue,solid,line width=0.5pt,domain=-1:-0.5,samples=150]{exp(-((x+1)^2/(2*0.2^2))))*(x+1)^2/(2*0.2^2)};
        \addplot[color=blue,solid,line width=0.5pt,domain=0.5:1,samples=150]{exp(-((x-1)^2/(2*0.2^2))))*(x-1)^2/(2*0.2^2)};
        \end{axis}
\end{tikzpicture}
\caption{An even function of the form $f(r)=\mathrm{exp}(-r^2/2\ell^2)$ (left) or $f(r)=\mathrm{exp}(-r^2/(2\ell^2))\cdot(r^2/2\ell^2)$ (right), defined on $[-1/2,1/2]$, is periodized via simple periodic continuation.
The resulting periodic function is at least continuous, but in general not smooth. A finite number of approximating Fourier coefficients can be obtained by sampling the function in equidistant points (marked by the dots) and applying the FFT.
Alternatively, one can make use of the analytic Fourier coefficients, provided they are known.
\label{fig:periodization}}
\end{figure}

The presented periodization technique is also applicable to multivariate radial kernels in order to periodize the function and approximate it for $\br\in[-1/2,1/2]^d$.
In the numerical experiments we make use of the NFFT-based fast summation approach \citep{NFFTrepo}, where only a radial section of the function is approximated, that is $\|\br\|\leq\tfrac12$. For details we refer to~\citet{postni04}.

For the Gaussian kernel $\kappa_\text{gauss}(r) \coloneqq \e^{-r^2/2\ell^2}$ we can compute the Fourier coefficients of $\tilde\kappa_\text{gauss}(r)$ as
\begin{align*}
    c_k(\tilde \kappa_\text{gauss})
    & = \int_{-1/2}^{1/2} \e^{-\frac{r^2}{2\ell^2}} \e^{2\pi\mathrm{i} kr} \mathrm dr \\
    & = \e^{-2\pi^2k^2\ell^2} \int_{-1/2}^{1/2} \e^{-(\frac r{\sqrt2\ell}-\pi\mathrm{i} k\sqrt2\ell)^2} \mathrm dr \\
    &=\dots=\ell\sqrt{2\pi} \e^{-2\pi^2k^2\ell^2}\, \mathrm{Re}\left[ \erf\left(\frac{1}{2\sqrt2\ell}+\pi\mathrm{i} k\sqrt2\ell\right) \right],
\end{align*}
where $\erf(z) \coloneqq \frac{2}{\sqrt\pi}\int_0^z \e^{-t^2}\mathrm dt$ is the complex-valued error function.
The final result is obtained by making use of simple integration techniques as well as symmetry properties of the error function, that is $\erf(-z)=-\erf(z)$ and $\erf(\overline z)=\overline{\erf(z)}$.
The complex-valued error function is rather difficult to evaluate numerically and is also hard to handle analytically. For the calculation of the approximating Fourier coefficients, we prefer to use the FFT in practice, as described above. We also do not use this analytical form of the Fourier coefficients in our error estimation later on. It is only given here for the sake of completeness.

Note that
\begin{align*}
    \kappa_\text{gauss}'(r) = -\frac{r}{\ell^2} \kappa_\text{gauss}(r), \quad
    \kappa_\text{gauss}''(r) = \left(\frac{r^2}{\ell^4}-\frac{1}{\ell^2}\right) \kappa_\text{gauss}(r),
\end{align*}
that is for the kernel $\kappa_\text{der}(r) \coloneqq r^2/(2\ell^2)\e^{-r^2/2\ell^2}$ we obtain 
\begin{equation*}
    c_k(\tilde\kappa_\text{der}) = \frac{1}{2} \left( c_k(\tilde\kappa_\text{gauss}) + \ell^2 c_k(\tilde\kappa''_\text{gauss}) \right)
    = \frac12 \left(1 - 4\pi^2k^2\ell^2 \right) c_k(\tilde\kappa_\text{gauss}),
\end{equation*}
where we apply the well-known differentiation properties for Fourier series.

So far we just considered the univariate case, where we approximate a certain kernel in terms of $m$ Fourier coefficients. 
Considering uniform grids in higher dimensions, the number of coefficients $m^d$ on the grid grows exponentially fast.
Thus, the computational efficiency of the NFFT approach pays off most for rather small input-dimensions, say $d<4$. As the presented method is designed for large-scale data with many features, a strategy on how to arrange small groups of feature combinations and to detect the most relevant ones is required. By this, several fast NFFT multiplications each relying on a small number of features can be combined via the additive kernel setting as introduced above.

\subsection{Scaling the Data} \label{Sec:Scaling}

As described above, we make use of periodic functions and Fourier approximations in order to compute the matrix-vector products efficiently.
Since we work in a periodic setting, that is on a finite interval and not on $\R$, we have to ensure that the data points are scaled into a finite interval.

In order to apply the fast summation approach, as explained above, we have to scale the data such that $\|\x_i-\x_j\|\leq \frac12$ for all pairs $i,j$, which is fulfilled if all the data are scaled such that $\|\x_j\|\leq \frac 14$. Therefore, the $d$-dimensional data points are scaled such that $\x_j\in[-1/4,1/4]^d$ first.
If we denote by $d_\text{max} = \max_{s} d_s$ the maximal number of features incorporated in the sub-kernels, then the maximum norm of a data point, restricted to a set of $d_\text{max}$ features, is given by
$$\Delta_\text{max}=\frac{\sqrt{d_\text{max}}}{4}.$$
Thus, we define the scaled nodes via
$$\tilde\x_j \coloneqq \frac{1}{4}\cdot \frac{\x_j}{\Delta_\text{max}}=\frac{\x_j}{\sqrt{d_\text{max}}}.$$
If a length-scale parameter $\ell$ has already been chosen to be applied to the nodes $\x_j\in[-1/4,1/4]^d$, we scale it with the same scaling factor, that is $\tilde\ell:=\ell/\sqrt{d_\text{max}}$, so that
\begin{equation*}
    \frac{\|\x_i-\x_j\|_2^2}{2\ell^2} = \frac{\|\tilde\x_i-\tilde\x_j\|_2^2}{2\tilde\ell^2}.
\end{equation*}

The advantage of prescaling the data is that the scaling parameter is computed based on the scaled data and is scaled equally in all dimensions. Without this prescaling, the scaling is different for every fastadj object being constructed for the particular windows each. This can turn out to be problematic when performing global sensitivity analysis for instance, see Section~\ref{Sec: Global sensitivity analysis}. Note that the $\ell$ values displayed for the empirical results are the initially chosen length-scale parameters for the data already scaled to $[-1/4,1/4]^d$. The length-scales are then scaled based on the corresponding scaling factor before running the model.

We provide the GitHub repository \texttt{prescaledFastAdj}\footnote{\url{https://github.com/wagnertheresa/prescaledFastAdj}} in which the prescaling is incorporated as described above.

\subsection{Implementing the NFFT Approach}

Above, we explain the theory behind the fast NFFT-based approximation technique for matrix-vector multiplications with a kernel $K$ and its derivatives. In our setting, $K$ is defined by the additive Gaussian kernel~\eqref{eq:wind_gauss_anova_kernel}
\begin{align}
\label{eq:add_gauss_kernel}
    \kappa ( \bm{x}_i, \bm{x}_j ) = \sigma_f^2 \sum_{s=1}^P \underbrace{\exp \left( - \frac{\| \bm{x}_i^{\mathcal{W}_s} - \bm{x}_j^{\mathcal{W}_s} \|_2^2}{2 \ell^2} \right)}_{\kappa_s}.
\end{align}
As introduced in~\citet{nestlerlearning} this is implemented as a black box approach, where only the data points restricted to the windows $\mathcal{W}_s$, the kernel parameter $\ell$ and a vector $v$, the kernel shall be multiplied with, are required as inputs and the corresponding approximation of $K_s v$ is returned. The underlying kernel is defined as
\begin{align} \label{implement_gaussian_kernel}
    \kappa_s^\text{gauss} (\bm{x}_i^{\mathcal{W}_s}, \bm{x}_j^{\mathcal{W}_s}) = \exp \left( - \frac{\| \bm{x}_i^{\mathcal{W}_s} - \bm{x}_j^{\mathcal{W}_s} \|_2^2}{c^2} \right)
\end{align}
in the implementation, that is with $c = \sqrt{2} \ell$, $\kappa_s^\text{gauss} = \kappa_s$. Summing over several of such approximations each relying on another window $\mathcal{W}_s$ and multiplying this sum by the signal variance parameter $\sigma_f^2$ we obtain $Kv = \sigma_f^2 \sum_{s=1}^P K_s^{\text{gauss}} v$.

\begin{figure}[ht]
    \centering
    \includegraphics[width=\textwidth,trim=0cm 2.5cm 0cm 2.5cm,clip=true]{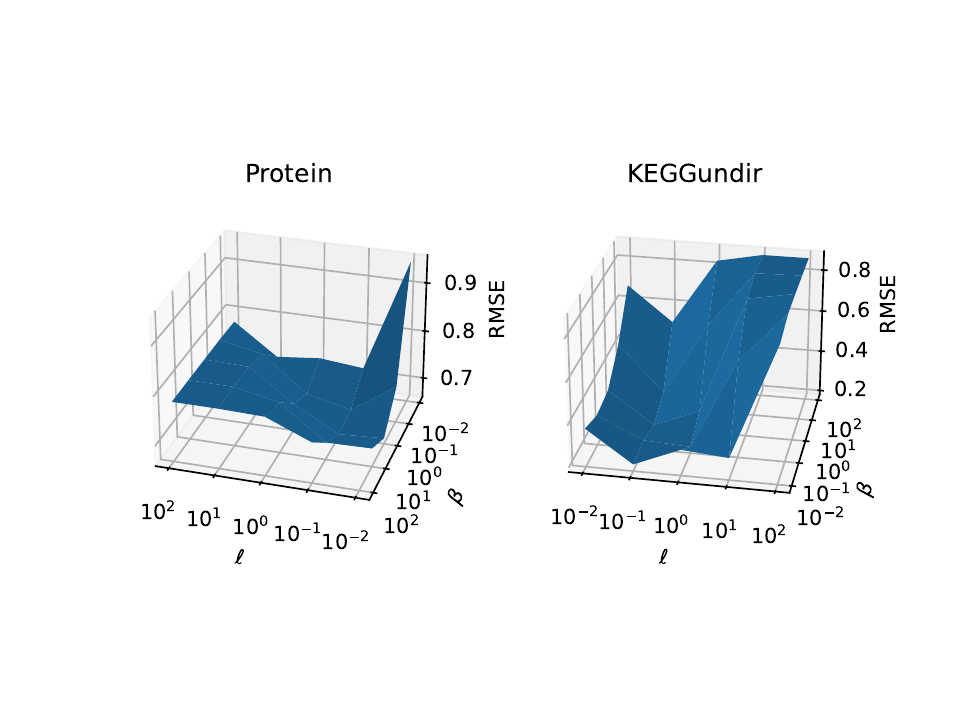}
    \caption{RMSE surface for additive kernel ridge regression and different length-scale and regularization parameters $\ell$ and $\beta$, where $N=1000$ and the windows are determined consecutively via MIS ranking.}
    \label{fig:rmse_surface}
\end{figure}

As motivated earlier, the choice of hyperparameters affects the performance of the learning algorithm tremendously. Figure~\ref{fig:rmse_surface} visualizes the impact of the parameter choices on the prediction quality with additive kernel ridge regression (KRR) on the Protein and KEGGundir data sets for instance, where the solution to the system 
\begin{align} \label{eq:regression_system}
    (K + \beta I) v = y,
\end{align}
with $\beta \in \mathbb{R}$ the regularization parameter, is sought. The plot shows the root mean square error (RMSE) on a grid of different values for the length-scale parameter $\ell$ and $\beta$ and highlights the significance of hyperparameter optimization. For more information on the data we refer to Section~\ref{Sec:Numerical results}.

For optimizing the objective of a regression model for instance, the kernel vector product $Kv$ must be differentiated with respect to the kernel parameters $\sigma_f$ and $\ell$. Differentiation with respect to the signal variance can easily be realized for the kernel evaluation $\kappa$ in~\eqref{eq:add_gauss_kernel} since
\begin{align*}
    \frac{\partial K_{ij}}{\partial \sigma_f} = 2 \sigma_f \sum_{s=1}^P \exp \left( - \frac{\| \bm{x}_i^{\mathcal{W}_s} - \bm{x}_j^{\mathcal{W}_s} \|_2^2}{2 \ell^2} \right) = \frac{2}{\sigma_f} K_{ij}.
\end{align*}
Therefore, the multiplication of the derivative kernel $\frac{\partial K}{\partial \sigma_f}$ and $v$ can be performed similarly to the product $Kv$ using the same implementation except that the sum of the approximations $K_sv$ is multiplied by $2\sigma_f$ instead of $\sigma_f^2$, that is\\ $\frac{\partial K}{\partial \sigma_f} v = \frac{2}{\sigma_f} \sigma_f^2 \sum_{s=1}^P K_s^{\text{gauss}}v$. Differentiation with respect to the regularization parameter is straightforward.

However, differentiation with respect to the length-scale parameter $\ell$ gives
\begin{align*}
    \frac{\partial K_{ij}}{\partial \ell} = \sigma_f^2 \sum_{s=1}^P \frac{ \| \bm{x}_i^{\mathcal{W}_s} - \bm{x}_j^{\mathcal{W}_s} \|_2^2}{\ell^3} \exp \left( - \frac{\| \bm{x}_i^{\mathcal{W}_s} - \bm{x}_j^{\mathcal{W}_s} \|_2^2}{2 \ell^2} \right) = \sigma_f^2 \sum_{s=1}^P \underbrace{\frac{C_{s_{ij}}}{\ell^3} \circ K_{s_{ij}}}_{K_{s_{ij}}^{\ell}},
\end{align*}
with $C_{s_{ij}} = \| \bm{x}_i^{\mathcal{W}_s} - \bm{x}_j^{\mathcal{W}_s} \|_2^2$, is more complicated. For the entry-wise multiplication in the Hadamard products $C_s \circ K_s$ many nice properties as the associative law do not hold. Thus, we cannot employ our technique from approximating $Kv$ with the kernel $\kappa_s^\text{gauss}$ as in~\eqref{implement_gaussian_kernel} directly. Instead, we introduce a derivative kernel
\begin{align} \label{implement_der_gaussian_kernel}
    \kappa_s^\text{der} (\bm{x}_i^{\mathcal{W}_s}, \bm{x}_j^{\mathcal{W}_s}) = \frac{ \| \bm{x}_i^{\mathcal{W}_s} - \bm{x}_j^{\mathcal{W}_s} \|_2^2}{c^2} \exp \left( - \frac{\| \bm{x}_i^{\mathcal{W}_s} - \bm{x}_j^{\mathcal{W}_s} \|_2^2}{c^2} \right),
\end{align}
with $c = \sqrt{2}\ell$. For implementation reasons, the parameter in the denominator of both terms in $\kappa_s^\text{der}$ is chosen equally, that is $K_s^\text{der} = \frac{\ell}{2} K_s^{\ell}$ and $K^{\text{der}} v = \frac{\partial K}{\partial \ell} v = \sigma_f^2 \sum_{s=1}^P \frac{2}{\ell} K_s^{\text{der}}v$.

The corresponding implementations can be found in the \texttt{prescaledFastAdj}\\ repository, in which the NFFT-accelerated kernel and derivative kernel evaluations are implemented. $\kappa_s^{\text{gauss}}$ and $\kappa_s^{\text{der}}$ are referred to as $\text{kernel}=1$ and $\text{kernel}=2$, respectively.
Within the `fastsum' module of the underlying \texttt{NFFT}\footnote{\url{https://github.com/NFFT/nfft}} repository~\citep{NFFTrepo}, $\kappa_s^{\text{gauss}}$ is embedded as the `gaussian' kernel and $\kappa_s^{\text{der}}$ as the `xx\_gaussian' kernel.

\begin{figure}
    \centering
    \includegraphics[width=\textwidth]{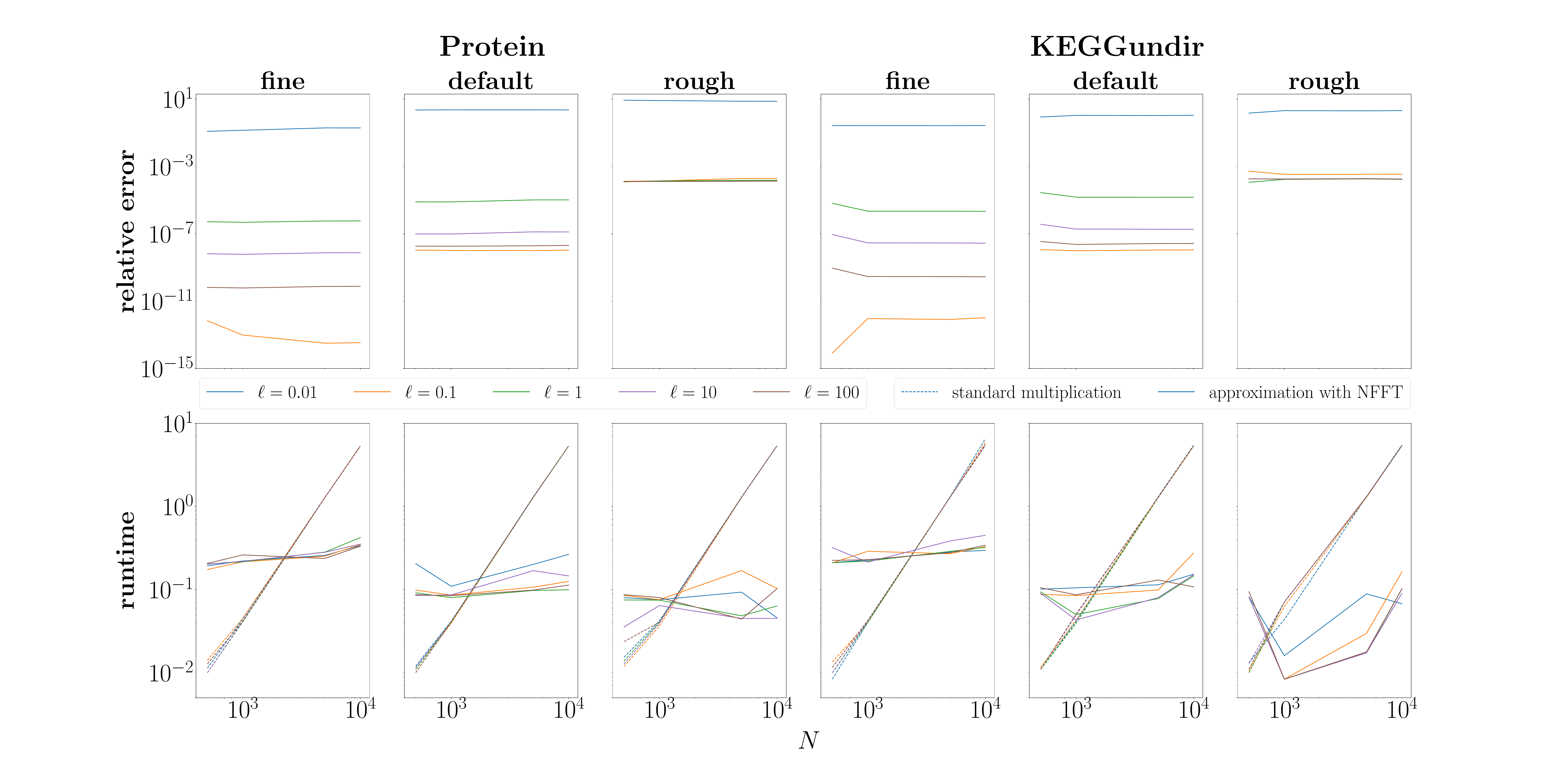}
    \caption{Fourier approximation error for computing $Kv$ with different values of $\ell$, where $v=\bm{1}$, $N_{\text{fg}}=1000$, $d_{\text{max}}=3$ and the windows are determined consecutively via MIS ranking, in comparison with standard multiplication.}
    \label{fig:nfft_approx_error}
\end{figure}

In Figure~\ref{fig:nfft_approx_error} we illustrate the Fourier approximation error for multiplying the kernel $K$ with the $\bm{1}$ vector for subsets of the data sets Protein and KEGGundir, several choices for the length-scale parameter $\ell$ and the three different setup presets for the parameters of the NFFT fastsum method `fine' ($m=64$), `default' ($m=32$) and `rough' ($m=16$). Here, by relative error we denote the relative difference of the Euclidean norms of the exact product $Kv$ and its Fourier approximation. Note that we restrict the size of the considered subsets to $10^4$ at a max since the computations break for bigger matrices in the standard multiplication due to a lack of memory. However, the NFFT-based approximation runs smoothly in such cases. The setups control the number of Fourier coefficients and therefore describe the degree of accuracy in the approximation, where `fine' provides the most precise approximations, `rough' is least precise and `default' is in between. This characteristic is also displayed in the figure. The relative error plots for the `rough' setup are on a higher level than the `default' ones that lie above the `fine' ones for the corresponding parameters $\ell$. Moreover, we compare the runtime for computing $Kv$ via standard multiplication and approximation with the NFFT approach in the second row of the plot. While the NFFT approach has a basic complexity for setting up the fast adjacency object and for computing the Fourier coefficients the runtime does not ascent steeply for larger scales. In contrast, the standard multiplication starts at a very low level for small subset sizes but increases strongly for larger kernels. Whereas the NFFT setups obviously do not impact the runtime plots for the standard multiplication, the runtime of the NFFT approximations differs. The most Fourier coefficients have to be computed in `fine' and the least in `rough'. Therefore, `fine' naturally has a higher runtime than `default' that has a higher runtime than `rough'. While the value of $\ell$ mostly does not seem to have a huge impact on the runtime, the Fourier approximation error evidently highly differs for various values of $\ell$. For very small values of $\ell$ the relative error can become larger than $10$. In contrast, for medium sized values of $\ell$ the relative error ranges between $10^{-3}$ and $10^{-15}$, depending on the setup, and for large values the relative error is between $10^{-4}$ for `rough' and $10^{-10}$ for `fine'.

\begin{figure}
    \centering
    \includegraphics[width=\textwidth]{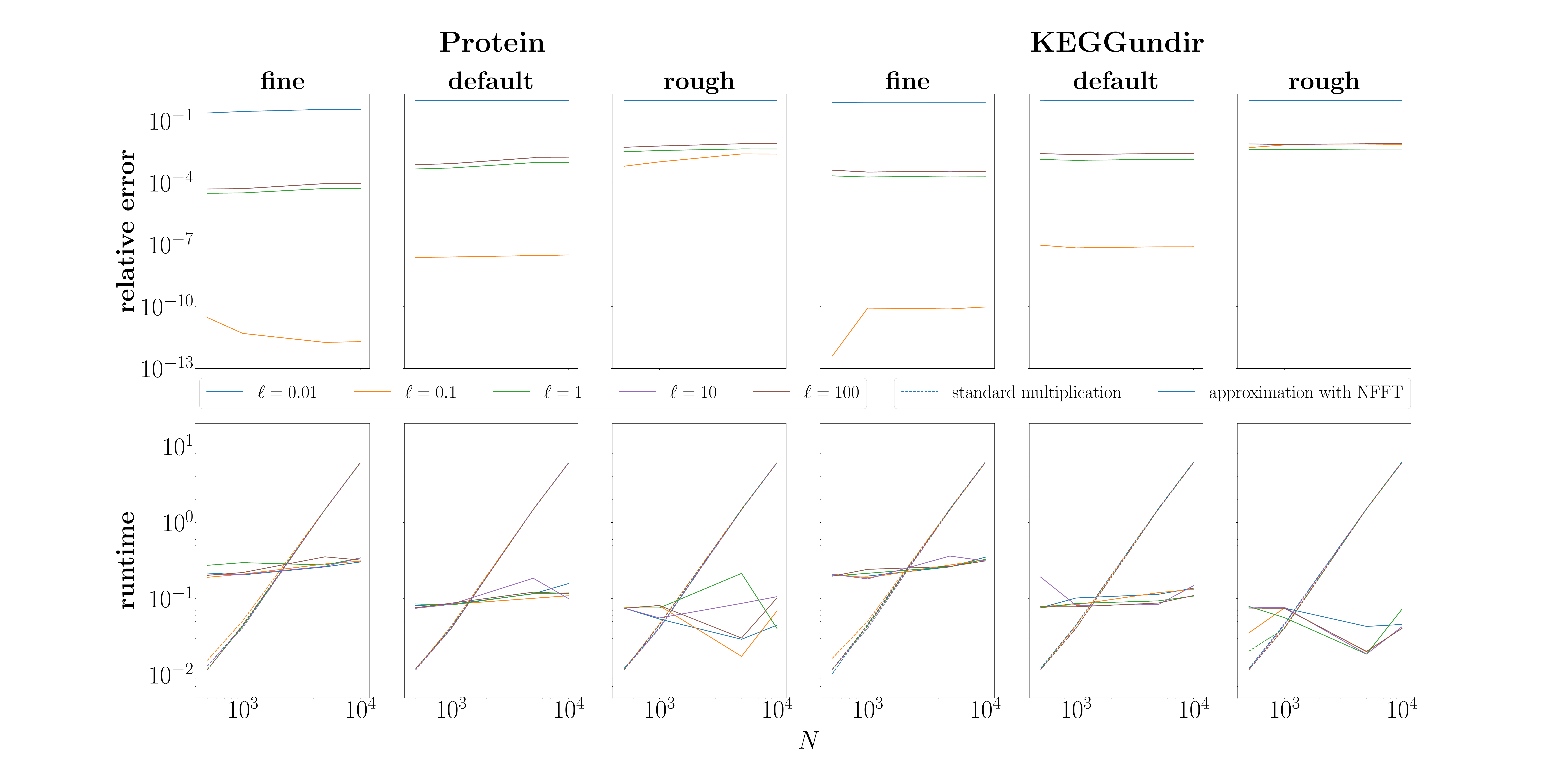}
    \caption{Fourier approximation error for computing $\frac{\partial K}{\partial \ell} v$ with different values of $\ell$, where $v=\bm{1}$, $N_{\text{fg}}=1000$, $d_{\text{max}}=3$ and the windows are determined consecutively via MIS ranking, in comparison with standard multiplication.}
    \label{fig:nfft_der_approx_error}
\end{figure}

Figure~\ref{fig:nfft_der_approx_error} shows the analogous results for the Fourier approximation error for multiplying the derivative kernel $\frac{\partial K}{\partial \ell}$ with the $\bm{1}$ vector. Overall, the relative errors and runtimes show the same trend as in Figure~\ref{fig:nfft_approx_error}. The main difference is that the relative error is by far the smallest for $\ell=0.1$ for the `fine' and `default' setup presets. In contrast, $\ell=0.01$ clearly yields the largest error and for the length-scales larger or equal $1$ the relative error is at the same level in between. In contrast, the relative errors for length-scales larger or equal $1$ show a greater variation for the distinct values of $\ell$ up to several orders of magnitude in Figure~\ref{fig:nfft_approx_error}.

The relative approximation errors are not satisfactory for all values of $\ell$, of course. The NFFT approach does not approximate the product $Kv$ well when the value of $\ell$ is very small. Note that $\ell$ always appears squared in the denominator of the exponential. With that, very small values $\ell$ lead to kernel matrices $K_s^{\text{gauss}}$ with all entries close to zero except the diagonal being ones. This gives an identity matrix of full rank. The other extreme case is when the values of $\ell$ are very large. Then, all entries in $K_s^{\text{gauss}}$ are close to one what gives a rank $1$ matrix. In the derivative case the approximation error is biggest for very small and very large values of $\ell$. In both cases, all entries of $K_s^{\text{der}}$ are close to zero, what yields a zero matrix of zero rank.

\begin{figure}
    \centering
    \includegraphics[width=\textwidth]{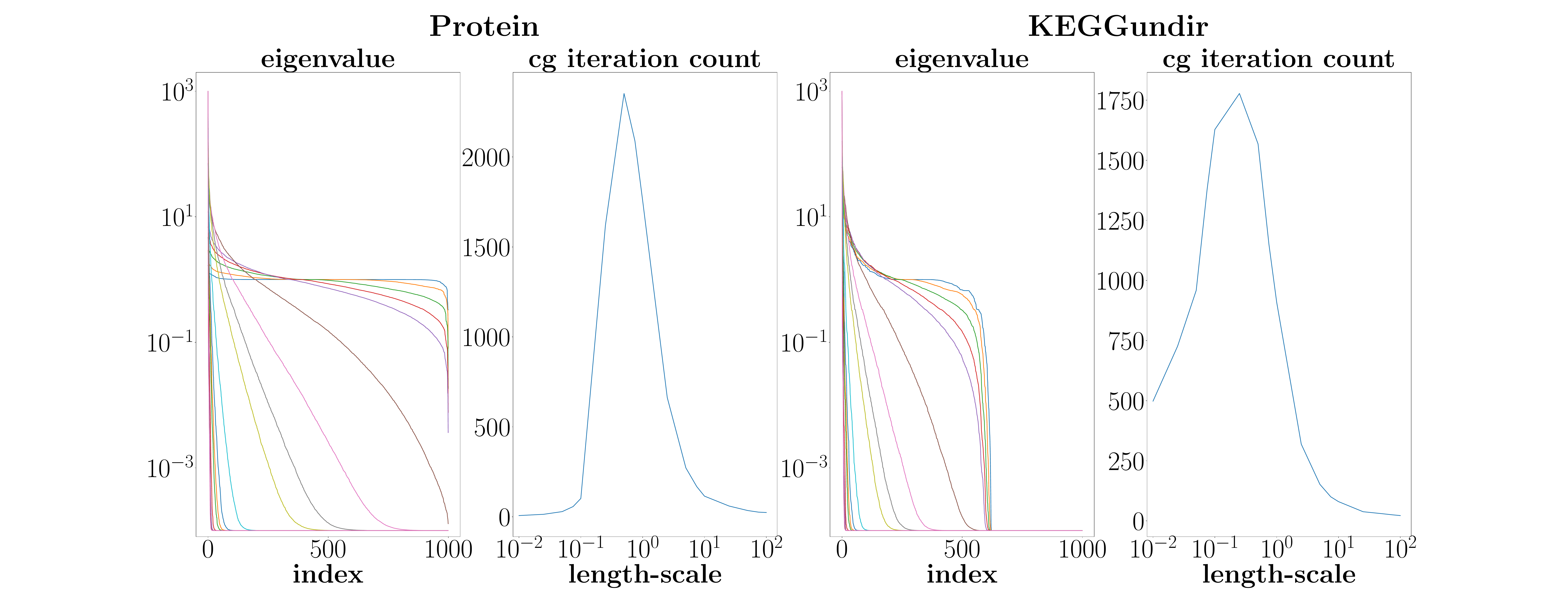}
    \caption{Eigenvalue decay and iteration count of unpreconditioned CG to solve \eqref{eq:regression_system} to reach the relative residual tolerance $10^{-4}$ for fixed $N=1000$ and regularization parameter $\beta=0.0001$ but different length-scales $\ell$, where the windows are determined consecutively via MIS ranking.}
    \label{fig:cg_iters_eigs}
\end{figure}

When the kernel matrices are of such special structure, multiplying and solving with the kernel matrix is not as challenging. This is emphasized by Figure~\ref{fig:cg_iters_eigs} where the eigenvalue decay is shown alongside the CG iteration count for different values of $\ell$. As before $K$ has beneficial properties for very small and very large values of $\ell$ and as such CG does not require many iterations until convergence. More iterations are needed for very small values $\ell$ than for very large ones for the KEGGundir data set. That effect can be attributed to $K$ being of full rank in contrast to having rank $1$, what is a more difficult system to solve, naturally. More importantly the number of iterations clearly has a peak for moderate values of $\ell$, when $K$ is dense and not of a special structure. In the eigenvalue plots the slowly decaying curves correspond to a large CG iteration count.

Since we are mostly concerned with settings in which the system is not as well-posed and those cases align with values $\ell$ for which the NFFT-approach provides a good Fourier approximation error, the NFFT-based approximation is a competitive acceleration method independently of that effect.

\subsection{Fourier Error Estimates}

After illustrating the empirical Fourier approximation error in different settings above, we derive analytical error estimates next. For that, we assume that the kernels are considered on $[-1/2,1/2]^3$ and periodized by simple periodic continuation, as explained above. Moreover, we assume that the length-scale parameter $\ell$ is already scaled by the scaling factor for the corresponding data points. Since kernels in higher dimensions can be derived via tensor products, the heart of the presented proofs consists of estimating the analytical Fourier coefficients and their sums for the univariate case.
As we exclusively work with dimensions smaller or equal three in the presented additive kernel design, we derive the resulting error estimates for $d=3$.
The case $d=2$ can be treated similarly, whereas the estimates for $d=1$ immediately follow from the computations, see Remarks \ref{rem:gaussian1d} and \ref{rem:deriv1d}.
The error estimates for the Gaussian kernel in two dimensions have already been presented in a slightly different form in \citet{potts2003fast}. Our proofs follow a similar baseline. To the best of our knowledge the Fourier approximation error for the derivative Gaussian kernel, see Theorem~\ref{theorem: multi_deriv}, is estimated for the first time in this paper.

As introduced above in \eqref{eq: fast sum} and \eqref{eq: approx fast sum}, kernel vector products can be represented in a summation form as denoted by $h$ and approximated by a truncated Fourier series denoted by $\tilde{h}$, that is
\begin{equation*}
\begin{aligned}
    h ( \bm{x}_i' ) &\coloneqq \sum_{j=1}^N v_j \kappa ( \bm{x}_i', \bm{x}_j ) \quad \forall i = 1, \dots, N', \\
    h ( \bm{x}_i' )
    \approx  \tilde{h} ( \bm{x}_i' ) &= \sum_{j=1}^{N} v_j \sum_{\bm{k} \in \mathcal{I}_{\bm{M}}} \hat{c}_{\bm{k}} \e^{2 \pi \mathrm{i} \bm{k}^ \intercal ( {\bm{x}}_i' - {\bm{x}}_j )}
    = \sum_{\bm{k} \in \mathcal{I}_{\bm{M}}} \hat{c}_{\bm{k}} \left(\sum_{j=1}^{N} v_j \e^{-2 \pi \mathrm{i} \bm{k}^ \intercal {\bm{x}}_j} \right) \e^{2 \pi \mathrm{i} \bm{k}^ \intercal {\bm{x}}_i'} ,
\end{aligned}
\end{equation*}
for $\x_i',\x_j\in[-1/2,1/2]^3$ already scaled.

Then, by the Hölder inequality the Fourier approximation error is determined by
\begin{equation}
\begin{aligned} \label{eq: NFFT approx error}
    \left| h ( \bm{x}_i') - \tilde{h} ( \bm{x}_i' ) \right| &= \left| \sum_{j=1}^N \alpha_j  \kappa_\text{ERR} ( \bm{x}_i', \bm{x}_j ) \right|
    \le \| \bm{\alpha} \|_1 \| \kappa_\text{ERR} \|_{\infty},
\end{aligned}
\end{equation}
for $i = 1, \dots, N'$ and $\kappa_\text{ERR}$ the difference between the kernel representation by a Fourier series and its approximation by a truncated one, where
\begin{align*}
    \| \bm{\alpha} \|_1
    &\coloneqq \sum_{j=1}^N | \alpha_j |, \\
    \| \kappa_\text{ERR} \|_{\infty}
    &\coloneqq \max_{\x,\x'\in[-1/4,1/4]^d} | \kappa_\text{ERR} (\bm{x}', \bm{x}) |
    = \max_{\br\in[-1/2,1/2]^d} | \kappa_\text{ERR} (\br) |.
\end{align*}
We present theoretical bounds on the achievable approximation error $\|\kappa_\text{ERR}\|_\infty$ in the following, where we set
\begin{equation*}
    \IM := \{\bk\in\Z^d: -\tfrac m2\leq k_j<\tfrac m2 \,\,\forall j=1,\dots, d\}.
\end{equation*}
To this end, we consider the periodized Gaussian and derivative Gaussian kernels in $d=3$ dimensions.
For $\br:=\x'-\x\in[-1/2,1/2]^3$, we obtain
\begin{align*}
    \kappa(\br)= \sum_{\bk\in\Z^3} c_{\bk}(\kappa) \e^{2\pi\mathrm{i}\bk^\intercal\br}
    &\approx \sum_{\bk\in\IM} c_{\bk}(\kappa) \e^{2\pi\mathrm{i}\bk^\intercal\br}
    \approx \sum_{\bk\in\IM} \hat c_{\bk} \e^{2\pi\mathrm{i}\bk^\intercal\br} =: \kappa_{\text{F}}(\br),
\end{align*}
where $c_{\bk}(\kappa)$ are the analytical Fourier coefficients and $\hat c_{\bk}$ the discrete Fourier coefficients obtained from $m^3$ equidistant samples.
The Fourier approximation error that is studied in this subsection is now defined as
\begin{align*}
    \kappa_\text{ERR}(\br)
    \coloneqq\;&
    \kappa(\br)-\kappa_{\text{F}}(\br).
\end{align*}
Note that the sums $h(\x_i')$ are ultimately not evaluated directly, but approximated by the NFFT algorithms, that is
\begin{equation*}
    \tilde h(\x_i')
    \approx
    \underbrace{
    \sum_{\bm{k} \in \mathcal{I}_m} \hat{c}_{\bm{k}}
    \underbrace{
    \left(\sum_{j=1}^{N} v_j \e^{-2 \pi \mathrm{i} \bm{k}^\intercal {\bm{x}}_j} \right)
    }_\text{approx.\ via adjoint NFFT}
    \e^{2 \pi \mathrm{i} \bm{k}^\intercal {\bm{x}}_i'}
    }_\text{approx.\ via NFFT},
\end{equation*}
what introduces further approximation errors.
However, these approximation errors are neglected at this point and the pure Fourier approximation error is considered in the following.
The NFFT algorithms depend on several parameters controlling the accuracy. Choosing these parameters appropriately, the NFFT approximation errors can be made negligibly small.

\begin{lemma} \label{lemma: aliasing error}
(Aliasing error) Let $f \in L_2 (\mathbb{T}^3)$ be a function with absolutely convergent Fourier series
\begin{align*}
    f( \bm{x} ) = \sum_{\bm{k} \in \mathbb{Z}^3} c_{\bm{k}} (f) \e^{2 \pi \mathrm{i} \bk^\intercal \bm{x}}, \quad c_{\bm{k}} (f) = \int_{\mathbb{T}^3} f(\bm{x}) \e^{-2 \pi \mathrm{i} \bk^\intercal \bm{x}} \, \mathrm{d} \bm{x}
\end{align*}
and let an approximation of $f$ be given by (replacing the analytic Fourier coefficients by the discrete Fourier coefficients using $m$ equidistant samples on the grid $\Im$)
\begin{align*}
    f_\text{F} (\bm{x}) \coloneqq \sum_{\bm{k} \in \mathcal{I}_{m}} \hat{c}_{\bm{k}} \e^{2 \pi \mathrm{i} \bm{k} \bm{x}},
    \quad
    \hat{c}_{\bm{k}} = \frac{1}{m^3} \sum_{\bm{j} \in \mathcal{I}_{m}} f \left( \frac{\bm{j}}{m} \right) \e^{-2\pi \mathrm{i} \bm{j} \bm{k}/m}.
\end{align*}
Then 
\begin{align}\label{eq:aliasing_formula}
    \hat{c}_{\bm{k}} = c_{\bm{k}} (f) + \sum_{\bm{r} \in \mathbb{Z}^3 \setminus \{\bm{0}\}} c_{\bm{k}+m \bm{r}} (f) 
\end{align}
and the approximation error can be estimated for all $\bm{x} \in \mathbb{T}^3$ by
\begin{align*}
    \left| f(\bm{x}) - f_\text{F}(\bm{x}) \right| \le 2 \sum_{\bm{r} \in \mathbb{Z}^3 \setminus \{\bm{0}\}} \sum_{\bm{k} \in \mathcal{I}_{m}} \left| c_{\bm{k}+m \bm{r}} (f) \right|
    = 2\sum_{\bm{k} \in \Z^3\setminus\mathcal{I}_{m}} \left| c_{\bm{k}+m \bm{r}} (f) \right|.
\end{align*}
\end{lemma}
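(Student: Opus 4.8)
The plan is to prove the aliasing identity \eqref{eq:aliasing_formula} first and then read off the pointwise error bound from it. For the identity I would substitute the (absolutely convergent) Fourier series of $f$ into the definition of the discrete coefficients, obtaining
\begin{equation*}
    \hat{c}_{\bm{k}} = \frac{1}{m^3}\sum_{\bm{j}\in\IM}\Bigl(\sum_{\bm{l}\in\Z^3} c_{\bm{l}}(f)\,\e^{2\pi\mathrm{i}\bm{l}^\intercal\bm{j}/m}\Bigr)\e^{-2\pi\mathrm{i}\bm{j}^\intercal\bm{k}/m}.
\end{equation*}
Since the series converges absolutely and the average over $\bm{j}\in\IM$ is a finite sum, the two summations may be interchanged; after that the inner sum over $\bm{j}$ factors into three geometric sums of the form $\tfrac1m\sum_{j=-m/2}^{m/2-1}\e^{2\pi\mathrm{i}(l_\nu-k_\nu)j/m}$, each of which equals $1$ when $l_\nu\equiv k_\nu\pmod m$ and vanishes otherwise. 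Hence only the indices $\bm{l}=\bm{k}+m\bm{r}$ with $\bm{r}\in\Z^3$ contribute, which is exactly \eqref{eq:aliasing_formula}.

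For the error estimate I would split the Fourier series of $f$ at the grid $\IM$ and subtract $f_\text{F}$:
\begin{equation*}
    f(\bm{x})-f_\text{F}(\bm{x})=\sum_{\bm{k}\in\IM}\bigl(c_{\bm{k}}(f)-\hat{c}_{\bm{k}}\bigr)\e^{2\pi\mathrm{i}\bm{k}^\intercal\bm{x}}+\sum_{\bm{k}\in\Z^3\setminus\IM}c_{\bm{k}}(f)\,\e^{2\pi\mathrm{i}\bm{k}^\intercal\bm{x}}.
\end{equation*}
Inserting $c_{\bm{k}}(f)-\hat{c}_{\bm{k}}=-\sum_{\bm{r}\in\Z^3\setminus\{\bm{0}\}}c_{\bm{k}+m\bm{r}}(f)$ from the aliasing identity, applying the triangle inequality, and bounding each exponential by $1$ gives
\begin{equation*}
    \bigl|f(\bm{x})-f_\text{F}(\bm{x})\bigr|\le \sum_{\bm{k}\in\IM}\sum_{\bm{r}\in\Z^3\setminus\{\bm{0}\}}\bigl|c_{\bm{k}+m\bm{r}}(f)\bigr|+\sum_{\bm{k}\in\Z^3\setminus\IM}\bigl|c_{\bm{k}}(f)\bigr|.
\end{equation*}
The last ingredient is the observation that $(\bm{k},\bm{r})\mapsto\bm{k}+m\bm{r}$ is a bijection from $\IM\times(\Z^3\setminus\{\bm{0}\})$ onto $\Z^3\setminus\IM$ — this is just division with remainder in each coordinate, using that $\IM$ is a complete residue system modulo $m$ in every component. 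Consequently the double sum equals $\sum_{\bm{k}\in\Z^3\setminus\IM}|c_{\bm{k}}(f)|$, the two terms coincide, and the factor $2$ appears, yielding the claimed bound.

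I do not expect a genuine obstacle here: the only points needing care are the interchange of summation in the first part, which is licensed precisely by the standing absolute-convergence hypothesis, and the bijection used to merge the two sums in the last step, both of which are routine once stated carefully. I would also remark in passing that the identical argument works verbatim with $\Z^3$ replaced by $\Z^d$, so the $d=1$ and $d=2$ statements follow by the same computation; we phrase it for $d=3$ only because that is the case used in the subsequent Fourier error estimates.
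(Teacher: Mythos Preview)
Your proposal is correct and follows essentially the same route as the paper. The paper simply cites the aliasing formula from standard references while you derive it directly; for the error bound, the paper rewrites both $f$ and $f_\text{F}$ via the same bijection you invoke, subtracts to obtain a single double sum with the factor $\e^{2\pi\mathrm{i}(\bm{k}+m\bm{r})^\intercal\bm{x}}-\e^{2\pi\mathrm{i}\bm{k}^\intercal\bm{x}}$ (bounded by $2$), whereas you keep the two sums separate and bound each exponential by $1$ before identifying them --- the bookkeeping differs but the argument is the same.
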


\begin{proof}
    For the derivation of the well-known aliasing formula \eqref{eq:aliasing_formula}, which states the relationship between the analytic and the discrete Fourier coefficients, we refer to the standard literature on Fourier analysis, see for instance \citet{PlPoStTa18} and references therein.

    The stated estimate between $f$ and $f_F$ is then a simple consequence of the triangle inequality, as sketched in the following.
    We rewrite the function $f$ as
    \begin{equation*}
        f(\x) = \sum_{\bk\in\Im} c_{\bk}(f)\e^{2\pi\mathrm{i}\bk^\intercal\x} +\sum_{\bk\in\Im}\sum_{\bm{r}\in\Z^3\setminus\{\bm{0}\}} c_{\bk+m\bm r}(f) \e^{2\pi\mathrm{i}(\bk+m\bm r)^\intercal\x}
    \end{equation*}
    and its approximation $f_F$ as
    \begin{equation*}
        f_F(\x) = \sum_{\bk\in\Im} c_{\bk}(f)\e^{2\pi\mathrm{i}\bk^\intercal\x} + \sum_{\bk\in\Im}\sum_{\bm{r}\in\Z^3\setminus\{\bm{0}\}} c_{\bk+m\bm r}(f) \e^{2\pi\mathrm{i}\bk^\intercal\x},
    \end{equation*}
    and conclude
    \begin{equation*}
        f(\x)-f_F(\x) = \sum_{\bk\in\Im}\sum_{\bm{r}\in\Z^3\setminus\{\bm{0}\}}  \left(\e^{2\pi\mathrm{i}(\bk+m\bm r)^\intercal\x} -\e^{2\pi\mathrm{i}\bk^\intercal\x} \right) c_{\bk+m\bm r}(f).
    \end{equation*}
    Consequently, applying the triangle inequality gives 
    \begin{equation*}
        |f(\x)-f_F(\x)| \leq \sum_{\bk\in\Im}\sum_{\bm{r}\in\Z^3\setminus\{\bm{0}\}} 2\cdot |c_{\bk+m\bm r}(f)|.
    \end{equation*}
\end{proof}

\subsubsection{Gaussian Kernel}
We review the theoretical results presented in \citet{postni04}, where the authors consider the Gaussian kernel in two variables and derive an upper bound for $\|\kappa_{\text{ERR}}\|_\infty$.
This result can be extended to higher dimensions, where formulas follow the same rules but become somewhat more extensive.
In the following theorem we improve the error estimates of~\citet{postni04} and restrict our considerations to the case $d=3$.

\begin{theorem} \label{theorem: multi_gauss}
    Let the kernel matrix be defined by the trivariate Gaussian $\kappa_s^{\text{\emph{gauss}}}$ in \eqref{implement_gaussian_kernel}. Then, for $\eta \coloneqq \frac{\ell \pi m}{ \sqrt{2}} \ge 1$ the following estimate holds true
    \begin{equation*}
    \|\kappa_{\text{\emph{ERR}}}\|_\infty
    \leq 15 \gamma(\eta,\ell) \left(\gamma(\eta,\ell) +\frac52 \right) +
    102 A(\eta,\ell),
    \end{equation*}
    where
    \begin{align*}
        \gamma(\eta,\ell)
        &:= \ell\sqrt{2\pi}\e^{-\eta^2} +
        \begin{cases}
        \ds \frac{\e^{-1/8\ell^2}}{\eta^2}
        &: \ell<\frac 12 \\
        \ds \frac{\ell\e^{-1/2}}{\eta^2} &: \ell\geq\frac12
        \end{cases}\\
        \intertext{and}
        A(\eta,\ell)
        &:= \frac{1}{2\eta\sqrt\pi}\e^{-\eta^2} +
        \begin{cases}
        \ds \frac{\e^{-1/8\ell^2}}{\sqrt2 \ell\pi\eta}
        &: \ell<\frac 12 \\
        \ds \frac{\sqrt2\e^{-1/2}}{\pi\eta} &: \ell\geq \frac12
        \end{cases}.
        \\
    \end{align*}
\end{theorem}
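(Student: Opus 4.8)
The plan is to collapse the three–dimensional estimate to a one–dimensional one by exploiting the tensor–product structure of the Gaussian, and then to estimate the univariate periodized Gaussian's Fourier coefficients $c_k(\tilde\kappa_\text{gauss})$ by hand. Since (with $c=\sqrt2\,\ell$) the kernel $\kappa_s^\text{gauss}$ restricted to $\br\in[-1/2,1/2]^3$ equals $\prod_{j=1}^3\e^{-r_j^2/2\ell^2}$, its analytic Fourier coefficients over $\T^3$ factor as $c_{\bk}(\kappa)=\prod_{j=1}^3 c_{k_j}(\tilde\kappa_\text{gauss})$. Lemma~\ref{lemma: aliasing error} applied to $f=\kappa$ gives $\|\kappa_\text{ERR}\|_\infty\le 2\sum_{\bk\in\Z^3\setminus\IM}|c_{\bk}(\kappa)|$, and by the product structure the right–hand side equals $2\bigl(\sigma^3-\sigma_0^3\bigr)=2\,\tau\,(\sigma^2+\sigma\sigma_0+\sigma_0^2)$, where
\[
\sigma:=\sum_{k\in\Z}\bigl|c_k(\tilde\kappa_\text{gauss})\bigr|,\qquad \sigma_0:=\sum_{-m/2\le k<m/2}\bigl|c_k(\tilde\kappa_\text{gauss})\bigr|,\qquad \tau:=\sigma-\sigma_0 .
\]
Thus it suffices to bound the full sum $\sigma$ by an absolute constant and the one–dimensional tail $\tau$ in terms of $\eta$ and $\ell$.

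The heart of the argument is the one–dimensional estimate. I would split off the full–line Gaussian integral,
\[
c_k(\tilde\kappa_\text{gauss})=\ell\sqrt{2\pi}\,\e^{-2\pi^2k^2\ell^2}-R_k,\qquad R_k:=\int_{|r|\ge1/2}\!\e^{-r^2/2\ell^2}\e^{2\pi\mathrm{i}kr}\,\mathrm dr=2\!\int_{1/2}^{\infty}\!\e^{-r^2/2\ell^2}\cos(2\pi kr)\,\mathrm dr,
\]
and integrate by parts twice in $R_k$. Because $k\in\Z$, the first boundary term carries a factor $\sin(\pi k)=0$ and vanishes, so the decay improves to $O(k^{-2})$; the second integration by parts leaves $|R_k|\le\frac1{2\pi^2k^2\ell^2}\bigl(\tfrac12\e^{-1/8\ell^2}+\int_{1/2}^{\infty}|1-r^2/\ell^2|\,\e^{-r^2/2\ell^2}\,\mathrm dr\bigr)$. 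The remaining integral is $\tfrac12\e^{-1/8\ell^2}$ when $\ell<\tfrac12$ (then $1-r^2/\ell^2$ keeps its sign on $[\tfrac12,\infty)$) and is bounded by $2\ell\,\e^{-1/2}$ when $\ell\ge\tfrac12$, which yields $|R_k|\le \e^{-1/8\ell^2}/(2\pi^2 k^2\ell^2)$ in the first case and $|R_k|\le \e^{-1/2}/(\pi^2 k^2\ell)$ in the second. This dichotomy is exactly the source of the two cases in the definitions of $\gamma$ and $A$, and the $k^{-2}$ decay is what makes the subsequent tail sums converge at the stated rates.

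For the tail I would use $\tau\le 2\sum_{k\ge m/2}|c_k(\tilde\kappa_\text{gauss})|$ and treat the two summands separately. The Gaussian part is bounded by its edge value $\ell\sqrt{2\pi}\,\e^{-\eta^2}$ (note $2\pi^2(m/2)^2\ell^2=\eta^2$) plus $\int_{m/2}^{\infty}\ell\sqrt{2\pi}\,\e^{-2\pi^2\ell^2 x^2}\,\mathrm dx=\tfrac12\erfc(\eta)\le \e^{-\eta^2}/(2\eta\sqrt\pi)$, the last step using $\erfc(\eta)\le \e^{-\eta^2}/(\eta\sqrt\pi)$ together with $\eta\ge1$. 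The $R_k$-part is bounded by the $k$-independent factor above times $\sum_{k\ge m/2}k^{-2}\le 4m^{-2}+2m^{-1}$, and rewriting $m^{-2},m^{-1}$ through $\eta=\ell\pi m/\sqrt2$ produces precisely the $\eta^{-2}$-contributions (collected into $\gamma$) and the $\eta^{-1}$-contributions (collected into $A$) in each of the two cases. For the full sum, $|c_0|\le\min\{1,\ell\sqrt{2\pi}\}$, $\sum_{k\ge1}\ell\sqrt{2\pi}\,\e^{-2\pi^2\ell^2k^2}\le \ell\sqrt{2\pi}\int_0^{\infty}\e^{-2\pi^2\ell^2x^2}\,\mathrm dx=\tfrac12$, and $\sum_{k\ge1}|R_k|\le\tfrac{\pi^2}{6}$ times the bounded factor (using $\max_{\ell>0}\e^{-1/8\ell^2}/\ell^2=8/\e$ and, for the worst case $\ell\approx \tfrac1{\sqrt{2\pi}}$, a Poisson/theta bound); adding these gives $\sigma\le\sqrt{5/2}$, which is where the $5/2$ in the theorem comes from. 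Substituting $\sigma,\sigma_0\le\sqrt{5/2}$ and the $\gamma$/$A$ decomposition of $\tau$ into $2\,\tau\,(\sigma^2+\sigma\sigma_0+\sigma_0^2)$, and using $\eta\ge1$ and $\tau\le\sigma_0$ to absorb the higher powers of $\tau$, collapses everything to $15\,\gamma(\eta,\ell)\bigl(\gamma(\eta,\ell)+\tfrac52\bigr)+102\,A(\eta,\ell)$.

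The only genuinely delicate analytic point is the double integration by parts, and it is delicate precisely because it hinges on $k$ being an integer (so the $O(k^{-1})$ boundary term disappears through $\sin\pi k=0$, upgrading the rate to $O(k^{-2})$) and on the evenness of the Gaussian and of its periodization. Everything else is bookkeeping of explicit constants: the factor $2$ from Lemma~\ref{lemma: aliasing error}, the factor $2$ from passing to the two–sided tail $\{|k|\ge m/2\}$, the combinatorial weights in $\sigma^3-\sigma_0^3$, and the elementary numeric inequalities just listed, all of which feed into the final $15$ and $102$. The assumption $\eta\ge1$ is used repeatedly, both to simplify $\erfc(\eta)$ and to keep the integral/geometric tail comparisons valid with these constants; the cases $d=2$ and $d=1$ follow identically with $\sigma^2-\sigma_0^2$, resp.\ $\tau$, in place of $\sigma^3-\sigma_0^3$, which is why the statement is given only for $d=3$.
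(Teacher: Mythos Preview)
Your overall strategy coincides with the paper's: exploit the tensor factorisation $c_{\bk}=\prod_j c_{k_j}(\tilde\kappa_\text{gauss})$, apply Lemma~\ref{lemma: aliasing error}, and control the one--dimensional coefficients via double integration by parts with the case split at $\ell=\tfrac12$. Your presentation through the tail integral $R_k$ is equivalent to the paper's direct integration by parts in $c_k$ (the boundary term $(-1)^{k+1}\tfrac{\e^{-1/8\ell^2}}{4\pi^2k^2\ell^2}$ appearing there is precisely your boundary contribution from $R_k$), and the resulting bounds on $|R_k|$ and on the tail sums are exactly the paper's $\gamma$ and $A$.

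There is, however, one concrete gap: the claim $\sigma\le\sqrt{5/2}$ is not established. Your own ingredients give only $|c_0|\le1$, $2\sum_{k\ge1}\hat f(k)\le1$, and $2\sum_{k\ge1}|R_k|\le\max\bigl\{\tfrac{4}{3\e},\tfrac{2\e^{-1/2}}3\bigr\}<\tfrac12$, hence $\sigma\le\tfrac52$, not $\sqrt{5/2}$. (The paper obtains exactly this bound, $\sum_{|k|\le m/2}|c_k|\le\tfrac52$, and this is where the $\tfrac52$ in the statement actually comes from.) Fortunately this does not break your argument: with $\sigma,\sigma_0\le\tfrac52$ the factorisation yields $2\tau(\sigma^2+\sigma\sigma_0+\sigma_0^2)\le\tfrac{75}{2}\tau$, and combined with $\tau\le\gamma+2A$ you obtain $\|\kappa_\text{ERR}\|_\infty\le\tfrac{75}{2}\gamma+75A$, which is in fact \emph{stronger} than the theorem's $15\gamma^2+\tfrac{75}{2}\gamma+102A$ and hence implies it.

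The paper arrives at the stated form through a finer combinatorial decomposition of $\Z^3\setminus\Im$: it separates indices where one or two coordinates sit exactly at the edge $k=m/2$ from those with $\|\bk\|_\infty\ge m/2+1$, producing terms $S_1,S_2,S_3$ bounded by $\tfrac{25}{4}\gamma$, $\tfrac{5}{2}\gamma^2$, and a polynomial in $A$ respectively. That bookkeeping is the source of the $15\gamma^2$ term; your coarser $\sigma^3-\sigma_0^3$ route bypasses it but, as noted, loses nothing once the correct constant for $\sigma$ is used.
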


\begin{proof}
Throughout this proof we make use of the short hand notation $f(x) \coloneqq \e^{-x^2/2\ell^2}$.
Further, we will employ the following simple estimates
\begin{align}
    \sum_{k=1}^{m/2} \frac{1}{k^2} &\le \sum_{k=1}^{\infty} \frac{1}{k^2} = \frac{\pi^2}{6}, \label{eq: se_kernel aux_estimates1} \\
    \sum_{k=1}^{m/2} \e^{-2k^2\pi^2\ell^2} &\le \int_0^{\infty} \e^{-x^2\pi^22\ell^2} \, \mathrm{d}x = \frac{1}{2\ell \sqrt{2\pi}}, \label{eq: se_kernel aux_estimates2} \\
    \sum_{k=m/2+1}^{\infty} \frac{1}{k^2} &\le \int_{m/2}^{\infty} \frac{1}{x^2} \, \mathrm{d}x = \frac{2}{m}, \label{eq: se_kernel aux_estimates3} \\
    \sum_{k=m/2+1}^{\infty} \e^{-2k^2\pi^2\ell^2} &\le \int_{m/2}^{\infty} \e^{-x^2\pi^22\ell^2} \, \mathrm{d}x \le \frac{1}{2\ell^2 m \pi^2} \e^{-\pi^2m^2\ell^2/2}, \label{eq: se_kernel aux_estimates4}
\end{align}
where the last line follows from
\begin{align*}
    \int_a^{\infty} \e^{-cx^2} \, \mathrm{d}x \le \int_0^{\infty} \e^{-c(x+a)^2} \, \mathrm{d}x \le \e^{-ca^2} \int_0^{\infty} \e^{-2acx} \, \mathrm{d}x = \frac{\e^{-ca^2}}{2ac}
\end{align*}
and the estimates \eqref{eq: se_kernel aux_estimates2}--\eqref{eq: se_kernel aux_estimates4} are simply obtained by estimating the sum from above by an integral over a monotonically decreasing function.

The Fourier transform of the univariate Gaussian is defined as
\begin{align}\label{eq:FT_gauss}
    \hat f(k):=
    \int_{-\infty}^{\infty} f(x) \e^{-2\pi \mathrm{i} kx} \, \mathrm{d}x = \ell \sqrt{2\pi} \e^{-2k^2\pi^2\ell^2}.
\end{align}
By applying integration by parts twice, we obtain for the Fourier coefficients and $k \neq 0$
\begin{equation*}
\begin{aligned}
    c_k (f) &\coloneqq \int_{-1/2}^{1/2} f(x) \e^{-2\pi \mathrm{i} k x} \, \mathrm{d}x \\
    &= (-1)^{k+1} \frac{1}{4\ell^2 \pi^2 k^2} \e^{-1/8\ell^2} - \frac{1}{4 \pi^2 k^2} \int_{-1/2}^{1/2} f''(x) \e^{-2 \pi \mathrm{i} k x} \, \mathrm{d}x \\
    &= (-1)^{k+1} \frac{1}{4\ell^2 \pi^2 k^2} \e^{-1/8\ell^2} - \frac{1}{4\pi^2 k^2} \int_{-\infty}^{\infty} f''(x) \e^{-2 \pi \mathrm{i} k x} \, \mathrm{d}x \\
    & \quad + \frac{1}{2\pi^2 k^2} \int_{1/2}^{\infty} f''(x) \cos (2\pi kx) \, \mathrm{d}x,
\end{aligned}
\end{equation*}
where $f''(x)=\ell^{-2}\e^{-x^2/2\ell^2}(\ell^{-2}x^2-1)$.
The second last integral is simply the Fourier transform of $f''$, which is given by $4\pi^2k^2 \hat f(k)$.

In order to obtain an estimate for $|c_k(f)|$ we may simply use the triangle inequality and it remains to estimate
\begin{equation*}
    \left|\int_{1/2}^{\infty} f''(x) \cos (2\pi kx) \, \mathrm{d}x\right|
    \leq \int_{1/2}^{\infty} |f''(x)| \, \mathrm{d}x
\end{equation*}
further.
First, we note that $f''$ changes its sign in $x_0:=\ell$ and, thus, the value of the integral depends on whether $\ell\geq\frac 12$ or $\ell<\frac 12$.

If $\ell<\frac 12$ we compute
\begin{equation*}
    \int_{1/2}^{\infty} |f''(x)| \, \mathrm{d}x = -f'(\tfrac 12) = \frac{\e^{-1/8\ell^2}}{2\ell^2}
\end{equation*}
and for $\ell\geq\frac 12$ it holds
\begin{equation*}
    \int_{1/2}^{\infty} |f''(x)| \, \mathrm{d}x = f'(\tfrac 12) - 2f'(\ell) = \frac{2\e^{-1/2}}{\ell} - \frac{\e^{-1/8\ell^2}}{2\ell^2} \geq 0.
\end{equation*}
Therefore,
\begin{equation*}
\begin{aligned} 
    \left| c_k (f) \right| &\leq \ell\sqrt{2\pi} \e^{-2\pi^2\ell^2k^2} +
    \begin{cases}
        \displaystyle  \frac{\e^{-1/8\ell^2}}{2\ell^2\pi^2k^2}
        &: \ell<\frac12 \\
        \displaystyle  \frac{\e^{-1/2}}{\pi^2k^2\ell}
        &: \ell\geq\frac12
    \end{cases}.
\end{aligned}
\end{equation*}
With that, we conclude
\begin{align*}
    |c_{m/2}(f)| &\leq \ell\sqrt{2\pi}\e^{-\eta^2} +
    \begin{cases}
        \ds \frac{\e^{-1/8\ell^2}}{\eta^2}
        &: \ell<\frac 12 \\
        \ds \frac{2\ell\e^{-1/2}}{\eta^2}
        &: \ell\geq\frac12
    \end{cases} =: \gamma(\eta,\ell).
\end{align*}
Next, applying \eqref{eq: se_kernel aux_estimates1} and \eqref{eq: se_kernel aux_estimates2} we compute
\begin{align*}
    \sum_{k=1}^{m/2} |c_k(f)|
    &\leq \frac 12 +
    \begin{cases}
        \ds \frac{\e^{-1/8\ell^2}}{12\ell^2} \leq \frac14 &: \ell<\frac12 \\
        \ds \frac{\e^{-1/2}}{6\ell} \leq \frac{\e^{-1/2}}{3}\approx 0.202 &: \ell\geq\frac 12
    \end{cases}
    \quad \leq \frac 34,
\end{align*}
where we make use of $x^{-2}\e^{-1/8x^2}\leq 8\e^{-1}\approx 2.94<3$ for all $x\in\R$.
Further, with \eqref{eq: se_kernel aux_estimates3} and \eqref{eq: se_kernel aux_estimates4} we deduce
\begin{align*}
    \sum_{k=m/2+1}^{\infty} |c_k(f)|
    &\leq \frac{1}{2\eta\sqrt\pi}\e^{-\eta^2} +
    \begin{cases}
        \ds \frac{\e^{-1/8\ell^2}}{\sqrt2 \ell\pi\eta}
        &: \ell<\frac 12 \\
        \ds \frac{\sqrt2\e^{-1/2}}{\pi\eta} &: \ell\geq \frac12
    \end{cases} =: A(\eta,\ell).
\end{align*}
Note that for $\eta\geq 1$ we obtain $A(\eta,\ell)\leq (2\e\sqrt\pi)^{-1}+\sqrt2\pi^{-1}\e^{-1/2}\approx 0.377$, where we make use of the fact that the function $x^{-1}\e^{-1/8x^2}$ is monotonically increasing on $(0,1/2)$.
 
Based on Lemma~\ref{lemma: aliasing error} and by exploiting the underlying symmetry we have
\begin{equation*}
\begin{aligned}
    \|\kappa_{\text{ERR}} \|_{\infty} \leq\,& 2 \cdot \sum_{\bk\in\Z^3\setminus\Im} |c_{k_1}(f)| |c_{k_2}(f)| |c_{k_3}(f)|\\
    =\,&
    2\cdot 3 |c_{m/2}(f)| \Bigl(\sum_{k=-m/2}^{m/2} |c_k(f)|\Bigr)^2 
    + 2\cdot 3 |c_{m/2}(f)|^2 \sum_{k=-m/2}^{m/2} |c_k(f)|\\
    \,&+ 2\sum_{\|\bk\|_\infty \geq m/2+1} |c_{k_1}(f)| |c_{k_2}(f)| |c_{k_3}(f)|
    \eqqcolon 6S_1+6S_2+2S_3,
\end{aligned}
\end{equation*}
where we exploit the tensor product structure $c_{\bm k}(\e^{-\|\x\|^2/2\ell^2})=c_{k_1}(f)c_{k_2}(f)c_{k_3}(f)$.
Based on the above derived estimates and by using $|c_0(f)|<1$ we obtain
\begin{align*}
    S_1 &\leq \gamma(\eta,\ell) \left(1+2\cdot \tfrac 34\right)^2 = \frac{25}4 \gamma(\eta,\ell), \\
    S_2 &\leq \gamma(\eta,\ell)^2 \left(1+2\cdot \tfrac 34\right) = \frac 52 \gamma(\eta,\ell)^2
\end{align*}
and
\begin{align*}
    S_3 \leq\,&
    8 \left(\sum_{k=m/2+1}^\infty |c_k(f)|\right)^3 + 4\cdot 3 \left(\sum_{k=-m/2}^{m/2} |c_k(f)|\right) \left(\sum_{k=m/2+1}^{\infty} |c_k(f)|\right)^2\\
    \,& +2\cdot 3 \left(\sum_{k=-m/2}^{m/2} |c_k(f)|\right)^2 \left(\sum_{k=m/2+1}^{\infty} |c_k(f)|\right) \\
    \leq\,& 8 A(\eta,\ell)^3 + 12 \cdot \frac{5}{2} A(\eta,\ell)^2 + 6 \cdot \frac{25}{4}A(\eta,\ell).
\end{align*}
Now, we summarize
\begin{align*}
    \|\kappa_{\text{ERR}}\|_\infty
    &\leq \frac{75}{2}\gamma(\eta,\ell) + 15\gamma(\eta,\ell)^2 + 
    16 A(\eta,\ell)^3 + 60 A(\eta,\ell)^2 + 75A(\eta,\ell).
\end{align*}
Since $A(\eta,\ell)<\frac{2}{5}$, we have $A(\eta,\ell)^2<\frac{2}{5}A(\eta,\ell)$ and $A(\eta,\ell)^3<\frac{4}{25}A(\eta,\ell)$.
With that, we obtain a somewhat more simple estimate of the form
\begin{equation*}
    \|\kappa_{\text{ERR}}\|_\infty < 15 \gamma(\eta,\ell) \left(\gamma(\eta,\ell) +\frac52 \right) +
    \underbrace{\frac{2539}{25}}_{<102} A(\eta,\ell).
\end{equation*}
\end{proof}

We have now established a rigorous error bound for one sub-kernel $\kappa_s^{\text{gauss}}$. The result for the additive kernel $\kappa$ follows straightforwardly by applying the bound to each kernel individually.

\begin{remark}\label{rem:gaussian1d}
    Considering the one-dimensional case with $\kappa^{(1d)}(r):=\e^{-r^2/(2\ell^2)}$ we obtain
    \begin{align*}
        \|\kappa_{\text{\emph{ERR}}}^{(1d)}\|_\infty
        &\leq 2|c_{m/2}(f)| + 4\sum_{k=m/2+1}^\infty |c_k(f)| \\
        &\leq 2\gamma(\eta,\ell) + 4 A(\eta,\ell),
    \end{align*}
    with $\gamma(\eta,\ell)$ and $A(\eta,\ell)$ as stated above.

    For very small values of the kernel parameter $\ell$, see left plot in Figure~\ref{fig:per_gauss_1d}, the periodized kernel can be considered to be smooth and the exponential decay dominates the error bound. However, the smaller the kernel parameter $\ell$, the slower the Fourier coefficients decrease to zero.
    Meaningful values for $m$ are limited in order to achieve $\eta\geq1.$

    In the case of moderate values of $\ell$, see second plot in Figure~\ref{fig:per_gauss_1d}, the periodized kernel has a sharp kink and the terms $\sim\eta^{-2}$ and $\sim\eta^{-1}$ have a greater influence on the estimate.
    For large values of $\ell$, see right plot in Figure~\ref{fig:per_gauss_1d}, a constant kernel with zero error is approached, since essentially only $c_0(f)\neq0$.

    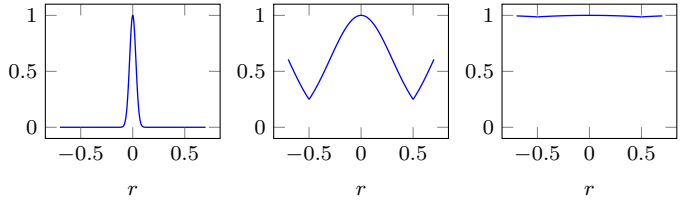
\begin{figure}[ht]
        \centering
        \begin{tikzpicture}\footnotesize
        \begin{axis}
        [width=0.3\linewidth,ymin=-0.1,ymax=1.1,xlabel={$r$}]
        \addplot[color=blue,solid,line width=0.5pt,domain=-0.5:0.5,samples=300]{exp(-(x^2/(2*0.03^2))))};
        \addplot[color=blue,solid,line width=0.5pt,domain=-0.7:-0.5,samples=60]{exp(-((x+1)^2/(2*0.03^2))))};
        \addplot[color=blue,solid,line width=0.5pt,domain=0.5:0.7,samples=60]{exp(-((x-1)^2/(2*0.03^2))))};
        \end{axis}
        \end{tikzpicture}
        \begin{tikzpicture}\footnotesize
        \begin{axis}
        [width=0.3\linewidth,ymin=-0.1,ymax=1.1,xlabel={$r$}]
        \addplot[color=blue,solid,line width=0.5pt,domain=-0.5:0.5,samples=300]{exp(-(x^2/(2*0.3^2))))};
        \addplot[color=blue,solid,line width=0.5pt,domain=-0.7:-0.5,samples=60]{exp(-((x+1)^2/(2*0.3^2))))};
        \addplot[color=blue,solid,line width=0.5pt,domain=0.5:0.7,samples=60]{exp(-((x-1)^2/(2*0.3^2))))};
        \end{axis}
        \end{tikzpicture}
        \begin{tikzpicture}\footnotesize
        \begin{axis}
        [width=0.3\linewidth,ymin=-0.1,ymax=1.1,xlabel={$r$}]
        \addplot[color=blue,solid,line width=0.5pt,domain=-0.5:0.5,samples=300]{exp(-(x^2/(2*3^2))))};
        \addplot[color=blue,solid,line width=0.5pt,domain=-0.7:-0.5,samples=60]{exp(-((x+1)^2/(2*3^2))))};
        \addplot[color=blue,solid,line width=0.5pt,domain=0.5:0.7,samples=60]{exp(-((x-1)^2/(2*3^2))))};
        \end{axis}
        \end{tikzpicture}
    \caption{Periodized Gaussian kernels in 1D, with parameters $\ell=0.03$, $\ell=0.3$ and $\ell=3$ (from left to right).
    \label{fig:per_gauss_1d}}
    \end{figure}
\end{remark}

\subsubsection{Derivative Gaussian Kernel}

For the derivative Gaussian kernel we again consider the case $d=3$, where
\begin{equation*}
    \frac{\|\x\|_2^2}{2\ell^2} \e^{-\|\x\|_2^2/2\ell^2} =
    \frac{x_1^2}{2\ell^2}\e^{-x_1^2/2\ell^2}\e^{-x_2^2/2\ell^2}\e^{-x_3^2/2\ell^2}
    +\dots+
    \e^{-x_1^2/2\ell^2}\e^{-x_2^2/2\ell^2} \frac{x_3^2}{2\ell^2}\e^{-x_3^2/2\ell^2}
\end{equation*}
and we obtain
\begin{equation*}
    c_\bk\left(\|\cdot\|_2^2 \e^{-\|\cdot\|_2^2/2\ell^2}\right) =
    c_{k_1}(g) c_{k_2}(f) c_{k_3}(f)
    + c_{k_1}(f) c_{k_2}(g) c_{k_3}(f)
    + c_{k_1}(f) c_{k_2}(f) c_{k_3}(g)
\end{equation*}
for the Fourier coefficients, where we define $f(x)\coloneqq\e^{-x^2/2\ell^2}$ and $g(x)\coloneqq \frac{x^2}{2\ell^2} \e^{-x^2/2\ell^2}$.

\begin{theorem}\label{theorem: multi_deriv}
    Let the kernel matrix be defined by the trivariate derivative Gaussian $\kappa_s^{der}$ in~\eqref{implement_der_gaussian_kernel}.
    Then, for $\eta:=\frac{\ell\pi m}{\sqrt2}\geq 1$ the following estimate holds true
    \begin{align*}
        \left\|\kappa_{\text{\emph{ERR}}}\right\|_\infty
        <\;&
        \left(\frac52 \xi(\eta,\ell)+15\gamma(\eta,\ell)\right) \left(15+12\gamma(\eta,\ell)\right) \\
        &+ 75 S(\eta,\ell) +  6 A(\eta,\ell)\left(\frac{116}{5}S(\eta,\ell)+87\right),
    \end{align*}
    where $A(\eta,\ell)$ and $\gamma(\eta,\ell)$ are stated in Theorem~\ref{theorem: multi_gauss} and
    \begin{align*}
        \xi(\eta,\ell)
        &\coloneqq \left(\eta^2+\tfrac12\right)\ell\sqrt{2\pi} \e^{-\eta^2} +
        \begin{cases}
            \ds\frac{\e^{-1/8\ell^2}}{8\eta^2\ell^2} &: \ell\leq\frac12\sqrt{\frac{2}{5+\sqrt{17}}} \\
            \ds\frac{1}{\eta^2}+\frac{3\ell}{2\eta^2} &: \text{else}
        \end{cases},\\
        S(\eta,\ell)
        &:= \frac{\erfc(\eta)}{4} + \frac{\eta}{2\sqrt\pi}\e^{-\eta^2} + \frac{1}{4\sqrt\pi \eta} \e^{-\eta^2} + 
        \begin{cases}
            \ds\frac{\e^{-1/8\ell^2}}{8\sqrt2\pi\ell^3\eta} &: \ell\leq\frac12\sqrt{\frac{2}{5+\sqrt{17}}} \\
            \ds\frac{1}{\sqrt2\pi\ell\eta}+\frac{3}{2\sqrt2\pi\eta} &: \text{else}
        \end{cases}.
    \end{align*}
\end{theorem}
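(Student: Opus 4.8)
The plan is to follow the proof of Theorem~\ref{theorem: multi_gauss} almost verbatim, the only genuinely new ingredient being a set of univariate estimates for the auxiliary function $g(x):=\tfrac{x^2}{2\ell^2}\e^{-x^2/2\ell^2}$, which in the trivariate tensor decomposition recorded just above sits next to two copies of $f(x):=\e^{-x^2/2\ell^2}$. Writing $c_\bk$ for the analytic Fourier coefficients of the periodized kernel \eqref{implement_der_gaussian_kernel}, that decomposition reads
\begin{equation*}
    c_\bk = c_{k_1}(g)\,c_{k_2}(f)\,c_{k_3}(f) + c_{k_1}(f)\,c_{k_2}(g)\,c_{k_3}(f) + c_{k_1}(f)\,c_{k_2}(f)\,c_{k_3}(g),
\end{equation*}
so by Lemma~\ref{lemma: aliasing error} and the triangle inequality $\|\kappa_\text{ERR}\|_\infty\le 2\sum_{\bk\in\Z^3\setminus\IM}|c_\bk|$, and since $\Z^3\setminus\IM$ is symmetric under permuting coordinates, it suffices to estimate $6\sum_{\bk\in\Z^3\setminus\IM}|c_{k_1}(g)|\,|c_{k_2}(f)|\,|c_{k_3}(f)|$. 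The three scalar quantities for $f$ --- $|c_{m/2}(f)|\le\gamma(\eta,\ell)$, $|c_0(f)|+2\sum_{k=1}^{m/2}|c_k(f)|\le\tfrac52$, and $\sum_{k=m/2+1}^{\infty}|c_k(f)|\le A(\eta,\ell)$ --- are already available from the proof of Theorem~\ref{theorem: multi_gauss}.

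First I would derive the three counterparts for $g$. Integrating by parts twice on $[-1/2,1/2]$ and using that $g$ is even with $g'$ odd, the first boundary term vanishes while the second contributes a term $\sim k^{-2}g'(\tfrac12)$; writing the remaining integral of $g''$ as a full-line integral minus twice a correction over $[1/2,\infty)$ produces, exactly as in the Gaussian case, the full-line Fourier transform $\hat g(k)=\tfrac12\bigl(1-4\pi^2k^2\ell^2\bigr)\ell\sqrt{2\pi}\,\e^{-2\pi^2k^2\ell^2}$ --- which one reads off from $g=\tfrac12(f+\ell^2 f'')$ and the differentiation rule --- plus a tail integral bounded by $\int_{1/2}^{\infty}|g''|$. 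Since $g''(x)=\ell^{-2}\e^{-x^2/2\ell^2}\bigl(1-\tfrac52\tfrac{x^2}{\ell^2}+\tfrac12\tfrac{x^4}{\ell^4}\bigr)$ vanishes precisely where $x^2/\ell^2=\tfrac{5\pm\sqrt{17}}{2}$, the value of $\int_{1/2}^{\infty}|g''|$ depends on whether the larger zero $\ell\sqrt{(5+\sqrt{17})/2}$ lies below or above $\tfrac12$; equating it with $\tfrac12$ gives precisely the threshold $\ell=\tfrac12\sqrt{2/(5+\sqrt{17})}$ separating the two cases in $\xi$ and $S$. Evaluating the pointwise bound at $k=m/2$ (using $2\pi^2(m/2)^2\ell^2=\eta^2$) yields $|c_{m/2}(g)|\le\xi(\eta,\ell)$; the head sum $\sum_{k=1}^{m/2}|c_k(g)|$ is bounded by an absolute constant, because its polynomially weighted Gaussian part is controlled by $\int_0^{\infty}x^2\e^{-2\pi^2\ell^2x^2}\,\mathrm dx$ and its $k^{-2}$ part by $\sum_{k\ge1}k^{-2}$; and the tail sum $\sum_{k=m/2+1}^{\infty}|c_k(g)|$ is estimated via $\int_{m/2}^{\infty}x^2\e^{-2\pi^2\ell^2x^2}\,\mathrm dx$, whose closed form reproduces exactly the $\erfc(\eta)$ and $\e^{-\eta^2}$ terms of $S(\eta,\ell)$, together with the $\tfrac12$-weighted $\hat f$-contribution.

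With these six univariate estimates in hand, the last step is the same partitioning of $\Z^3\setminus\IM$ as in Theorem~\ref{theorem: multi_gauss}, now applied to the mixed product $|c_{k_1}(g)|\,|c_{k_2}(f)|\,|c_{k_3}(f)|$: split according to how many coordinates equal $\pm m/2$ and how many exceed $m/2$ in modulus, and in each block bound a boundary coordinate's factor by $\gamma$ or $\xi$, a bounded coordinate's factor by the appropriate head-sum bound, and an out-of-range coordinate's factor by $A$ or the $g$-tail bound; summing the blocks and using $A(\eta,\ell)<\tfrac25$ to absorb higher powers of $A$ collapses everything to the stated inequality. I expect the bookkeeping in this final step to be the main obstacle: unlike the Gaussian case, where every term of $c_\bk$ is a pure product $c_{k_1}(f)c_{k_2}(f)c_{k_3}(f)$, here each term pairs one $g$-factor with two $f$-factors, so there are noticeably more cross-terms, and keeping the constants tight enough to land the clean coefficients $15,12,75,6,\tfrac{116}{5},87$ needs care about which crude bounds are used where. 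A secondary difficulty is the case analysis for $\int_{1/2}^{\infty}|g''|$: because $g''$ changes sign twice rather than once, the subinterval structure on $[1/2,\infty)$ --- and hence the constant in the $k^{-2}$ part of $\xi$ and $S$ --- has to be tracked carefully.
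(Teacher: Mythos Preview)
Your proposal is correct and follows essentially the same route as the paper: the same tensor decomposition $c_\bk = c_{k_1}(g)c_{k_2}(f)c_{k_3}(f)+\dots$, the same twice-by-parts formula for $c_k(g)$ producing $\hat g(k)$ plus a $k^{-2}$-weighted boundary term and tail integral $\int_{1/2}^\infty|g''|$, the same sign analysis of $g''$ yielding the threshold $\ell=\tfrac12\sqrt{2/(5+\sqrt{17})}$, and the same block partition of $\Z^3\setminus\IM$ combined with $A(\eta,\ell)<\tfrac25$ at the end. Two small points worth flagging: the paper actually distinguishes \emph{three} regimes for $\int_{1/2}^\infty|g''|$ (according to whether none, one, or both zeros of $g''$ lie above $\tfrac12$) before collapsing cases (II) and (III) into the single ``else'' branch of $\xi$ and $S$; and the head-sum bound $\sum_{k=1}^{m/2}|c_k(g)|<\tfrac72$ needs a bit more than the integral $\int_0^\infty x^2\e^{-2\pi^2\ell^2x^2}\,\mathrm dx$, since $x^2\e^{-2\pi^2\ell^2x^2}$ is not monotone on $[0,\infty)$ --- the paper adds a unit-width rectangle at the maximum to cover this.
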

\begin{proof}
    Throughout this proof we make use of the short hand notations $f(x)= \e^{-x^2/2\ell^2}$ and $g(x)= \frac12 x^2\ell^{-2} \e^{-x^2/2\ell^2}$, as already introduced above.
    First, we compute the Fourier transform of the function $g$. We see $g(x)= \frac12\ell^2 f''(x) + \frac12 f(x)$ and conclude
    \begin{equation}\label{eq:ghat}
        \hat g(k)=\int_{-\infty}^\infty g(x)\e^{-2\pi\mathrm{i} kx}\mathrm dx = (-2\pi^2k^2\ell^2+\tfrac12) \hat f(k),
    \end{equation}
    where $\hat f(k)= \ell\sqrt{2\pi} \e^{-2k^2\pi^2\ell^2}$, as stated in \eqref{eq:FT_gauss}.
    In addition to the estimates \eqref{eq: se_kernel aux_estimates1}--\eqref{eq: se_kernel aux_estimates4}, we will make use of the following additional estimates
    \begin{align}
        \sum_{k=m/2+1}^\infty k^2\e^{-2\pi^2k^2\ell^2} &\leq \int_{m/2}^\infty x^2\e^{-2\pi^2x^2\ell^2} \mathrm dx
        =\frac{1}{\sqrt{2\pi}} \frac{\erfc\left(\pi\ell m/\sqrt2\right)}{8\pi^2\ell^3} + \frac{m}{8\pi^2\ell^2} \e^{-\pi^2\ell^2m^2/2} \label{eq:estimate1_der},\\
        \sum_{k=1}^{m/2} k^2\e^{-2\pi^2k^2\ell^2}
        &\leq \frac{1}{\sqrt{2\pi}\cdot 8\pi^2\ell^3} +
        \begin{cases}
            \ds\frac{\e^{-1}}{2\pi^2\ell^2} &:\ds\ell\leq\frac{1}{\sqrt2\pi}\\
            \ds\e^{-2\pi^2\ell^2} &: \text{else}
        \end{cases}
        .
        \label{eq:estimate2_der}
    \end{align}
    These estimates are obtained as follows.
    Note that the function $h(x) \coloneqq x^2\e^{-2\pi^2x^2\ell^2}$ with $h'(x)=2x \e^{-2\pi^2x^2\ell^2} (1-2\pi^2\ell^2x^2)$ is monotonically decreasing for $x\geq (\sqrt2 \pi\ell)^{-1}$.
    Thus, presuming $\frac{\ell\pi m}{\sqrt 2}\geq 1$ we are able to estimate the first sum by the stated integral. The well-known complementary error function is defined by
    \begin{align*}
        \erfc(x)\coloneqq 1-\erf(x)=1-\frac{2}{\sqrt\pi}\int_0^x \e^{-t^2}\mathrm dt
    \end{align*}
    and tends to zero exponentially fast for $x\to\infty$.
    
    The second estimate is obtained as follows. The function $x^2\e^{-2\pi^2x^2\ell^2}$ is monotonically increasing from $0$ to $(\sqrt{2}\pi l)^{-1}$, where we may estimate the sum above by the integral of the shifted function.
    In the area where the function decreases, we obtain an upper estimate by the integral of the function itself.
    However, the position of the maximum depends on the parameter $\ell$. Thus, let $(\sqrt2\pi\ell)^{-1}\geq1 \iff \ell\leq(\sqrt2\pi)^{-1}$ and $k_\ell\in\N_0$ be the largest natural number $\leq (\sqrt{2}\pi l)^{-1}$. Then, we obtain an upper estimate via
    \begin{equation*}
         \int_0^{k_\ell} (x+1)^2\e^{-2\pi^2(x+1)^2\ell^2}\mathrm dx
         + \frac{\e^{-1}}{2\pi^2\ell^2}
         + \int_{k_\ell+1}^\infty x^2\e^{-2\pi^2x^2\ell^2}\mathrm dx
        <\frac{\e^{-1}}{2\pi^2\ell^2} + \int_0^\infty x^2\e^{-2\pi^2x^2\ell^2}\mathrm dx,
    \end{equation*}
    which consists of two integrals estimating the sums up to $k_\ell$ and from $k_\ell+2$, respectively, plus the area of the rectangle of with $1$ and height $(2\pi^2\ell^2\e)^{-1}$ being the maximum value of the integrated function.
    
    For large $\ell>(\sqrt2\pi)^{-1}$ we obtain the estimate
    \begin{equation*}
        \e^{-2\pi^2\ell^2}+\int_1^\infty x^2\e^{-2\pi^2x^2\ell^2} \mathrm dx < \e^{-2\pi^2\ell^2}+\int_0^\infty x^2\e^{-2\pi^2x^2\ell^2} \mathrm dx,
    \end{equation*}
    what is simply the area of the rectangle of width $1$ and height $\e^{-2\pi^2\ell^2}$, what is the function value at $x=1$, plus an integral estimating the remaining sum starting at $k=2$ from above.

    Now, we consider the Fourier coefficients $c_k(g)$, $k\in\Z$, of the 1-periodic continuation of $g$ and state an estimate for the absolute values $|c_k(g)|$.
    Applying integration by parts two times we obtain
    \begin{align*}
        c_k(g) &= \int_{-1/2}^{1/2} g(x)\e^{-2\pi\mathrm{i} kx}\mathrm dx
        = \left[g'(x)\frac{\e^{-2\pi\mathrm{i} kx}}{4\pi^2k^2}\right]_{-1/2}^{1/2} - \frac{1}{4\pi^2k^2}\int_{-1/2}^{1/2} g''(x)\e^{-2\pi\mathrm{i} kx} \mathrm dx \\
        &= \frac{(-1)^{k+1}}{4\pi^2k^2\ell^2} \e^{-1/8\ell^2}\left(\frac{1}{8\ell^2}-1\right)
        -\frac{\displaystyle \int_{-\infty}^\infty g''(x)\e^{-2\pi\mathrm{i} kx}\mathrm dx-2\int_{1/2}^\infty g''(x)\cos(2\pi kx)\mathrm dx }{4\pi^2k^2},
    \end{align*}
    where $g''(x)=\ell^{-2}\e^{-x^2/2\ell^2}(1+x^4/(2\ell^4)-5x^2/(2\ell^2))$.
    Thus, by making use of the well-known derivative related properties of the Fourier transform and the triangle inequality, we get
    \begin{equation*}
        |c_k(g)|\leq \frac{\e^{-1/8\ell^2}}{32\pi^2k^2\ell^4} + \frac{\e^{-1/8\ell^2}}{4\pi^2k^2\ell^2}
        + |\hat g(k)| + \frac{1}{2\pi^2k^2}\int_{1/2}^\infty |g''(x)|\mathrm dx,
    \end{equation*}
    where, $|\hat g(k)|\leq (2\pi^2k^2\ell^2+\frac12) \ell\sqrt{2\pi}\e^{-2k^2\pi^2\ell^2}$, by \eqref{eq:ghat} and \eqref{eq:FT_gauss}.

    In order to estimate the last integral, we examine the sign of the function
    $$g''(x)=\frac{1}{2\ell^2}\left(\frac{x^4}{\ell^4} - \frac{5x^2}{\ell^2}+2\right) \e^{-x^2/2\ell^2}.$$
    Obviously, $g''(0)>0$ and changes its sign in the points
    $$
    x_1 \coloneqq \ell\sqrt{\frac{5-\sqrt{17}}{2}} \text{ and }
    x_2 \coloneqq \ell\sqrt{\frac{5+\sqrt{17}}{2}}
    $$
    with $x_1<x_2$.
    Depending on whether $x_1$ and $x_2$ are smaller or larger than $\tfrac 12$ we obtain the following values.
    If $x_2<\frac 12 \iff \ell<\frac 12\sqrt{\frac{2}{5+\sqrt{17}}}\approx 0.2341$
    \begin{align*}
        \int_{1/2}^\infty |g''(x)|\mathrm dx
        & =-g'(\tfrac 12)= \frac{\e^{-1/8\ell^2}}{2\ell^2} \left(\frac{1}{8\ell^2}-1\right),
    \end{align*}
    where $g'(x)=\frac12\ell^{-2}(2-\ell^{-2}x^2)x\e^{-x^2/2\ell^2}$.
    In the case $x_1<\frac 12<x_2 \iff \ell\in\left(\frac 12\sqrt{\frac{2}{5+\sqrt{17}}},\frac 12\sqrt{\frac{2}{5-\sqrt{17}}}\right)$
    \begin{equation*}
        \int_{1/2}^\infty |g''(x)|\mathrm dx =
        -\int_{1/2}^{x_2} g''(x)\mathrm dx+\int_{x_2}^\infty g''(x)\mathrm dx = \frac{\e^{-1/8\ell^2}}{2\ell^2} \left(1-\frac{1}{8\ell^2}\right)-2g'(x_2).
    \end{equation*}
    Finally, if $x_1>\frac 12 \iff \ell>\frac 12\sqrt{\frac{2}{5-\sqrt{17}}}\approx 0.7551$ and splitting the integral into regions of equal sign leads to
    \begin{equation*}
        \int_{1/2}^\infty |g''(x)|\mathrm dx = 2 g'(x_1) - 2g'(x_2) + \frac{\e^{-1/8\ell^2}}{2\ell^2} \left(\frac{1}{8\ell^2}-1\right).
    \end{equation*}
    For the derivative evaluated in $x_1$ and $x_2$ we compute
    \begin{align*}
        g'(x_1) &= \frac{x_1}{2\ell^2} \left(2-\frac{5-\sqrt{17}}{2}\right)\e^{(-5+\sqrt 17)/4} \approx 0.83 \cdot\frac{1}{2\ell} ,\\
        g'(x_2) &= \frac{x_2}{2\ell^2} \left(2-\frac{5+\sqrt{17}}{2}\right)\e^{(-5-\sqrt 17)/4} \approx -0.56 \cdot\frac{1}{2\ell} ,
    \end{align*}
    and in summary (estimating $0.83<0.9$ and $0.56<0.6$)
    \begin{equation*}
        \int_{1/2}^\infty |g''(x)|\mathrm dx \leq 
        \begin{cases}
            \ds \frac{\e^{-1/8\ell^2}}{2\ell^2} \left(\frac{1}{8\ell^2}-1\right)
            &: \ell\leq\frac12\sqrt{\frac{2}{5+\sqrt{17}}} \quad\text{(I)} \\
            \ds \frac{\e^{-1/8\ell^2}}{2\ell^2} \left(1-\frac{1}{8\ell^2}\right) + \frac{3}{5\ell}
            &: \ell\in\left(\frac 12\sqrt{\frac{2}{5+\sqrt{17}}},\frac 12\sqrt{\frac{2}{5-\sqrt{17}}}\right) \quad\text{(II)}\\
            \ds \frac{\e^{-1/8\ell^2}}{2\ell^2} \left(\frac{1}{8\ell^2}-1\right) + \frac{3}{2\ell}
            &: \ell\geq\frac 12\sqrt{\frac{2}{5-\sqrt{17}}} \quad\text{(III)}
        \end{cases}.
    \end{equation*}
    Putting everything together gives
    \begin{align*}
        |c_k(g)|
        &\leq (2\pi^2k^2\ell^2+\tfrac12)\ell\sqrt{2\pi}\e^{-2\pi^2k^2\ell^2}
        + \frac{1}{2\pi^2k^2}\cdot
        \begin{cases}
            \ds\frac{\e^{-1/8\ell^2}}{8\ell^4} &: \text{case (I)} \\
            \ds\frac{\e^{-1/8\ell^2}}{\ell^2} +\frac{3}{5\ell} &: \text{case (II)} \\
            \ds\frac{\e^{-1/8\ell^2}}{8\ell^4} +\frac{3}{2\ell} &: \text{case (III)}
        \end{cases} \\
        &< (2\pi^2k^2\ell^2+\tfrac12)\ell\sqrt{2\pi}\e^{-2\pi^2k^2\ell^2}
        +\begin{cases}
            \ds\frac{\e^{-1/8\ell^2}}{16\pi^2k^2\ell^4} &: \ell\leq\frac12\sqrt{\frac{2}{5+\sqrt{17}}} \\
            \ds\frac{1}{2\pi^2k^2\ell^2}+\frac{3}{4\pi^2k^2\ell}
            &: \text{else}
        \end{cases},
    \end{align*}
    where we use $x^{-2}\e^{-1/8x^2}<3$ and $\e^{-1/8x^2}<1$ for all $x>0$.
    It follows
    \begin{align*}
        |c_{m/2}(g)| &< (\tfrac12\pi^2m^2\ell^2+\tfrac12)\ell\sqrt{2\pi} \e^{-\pi^2\ell^2m^2/2} \\
        &\; +
        \begin{cases}
            \ds\frac{\e^{-1/8\ell^2}}{4\pi^2m^2\ell^4} &: \ell\leq\frac12\sqrt{\frac{2}{5+\sqrt{17}}} \\
            \ds\frac{2}{\pi^2m^2\ell^2}+\frac{3}{\pi^2m^2\ell} &: \text{else}
        \end{cases}\\
        & = \left(\eta^2+\tfrac12\right)\ell\sqrt{2\pi} \e^{-\eta^2} +
        \begin{cases}
            \ds\frac{\e^{-1/8\ell^2}}{8\eta^2\ell^2} &: \ell\leq\frac12\sqrt{\frac{2}{5+\sqrt{17}}} \\
            \ds\frac{1}{\eta^2}+\frac{3\ell}{2\eta^2} &: \text{else}
        \end{cases} \\
        &\eqqcolon \xi(\eta,\ell).
    \end{align*}
    For the sums to be estimated we obtain by the estimates~\eqref{eq:estimate1_der}, \eqref{eq: se_kernel aux_estimates4} and \eqref{eq: se_kernel aux_estimates3} 
    \begin{align*}
        \sum_{k=m/2+1}^\infty |c_k(g)|
        &< \frac{\erfc(\eta)}{4} + \frac{\sqrt{2\pi}m\ell}{4}\e^{-\eta^2} + \frac{\sqrt{2\pi}}{4m\pi^2\ell}\e^{-\eta^2} \\
        &\; + 
        \begin{cases}
            \ds\frac{\e^{-1/8\ell^2}}{8\pi^2\ell^4m} &: \ell\leq\frac12\sqrt{\frac{2}{5+\sqrt{17}}} \\
            \ds\frac{1}{\pi^2\ell^2m}+\frac{3}{2\pi^2\ell m} &: \text{else}
        \end{cases}\\
        &\leq \frac{\erfc(\eta)}{4} + \frac{\eta}{2\sqrt\pi}\e^{-\eta^2} + \frac{1}{4\sqrt\pi \eta} \e^{-\eta^2} \\
        &\; + 
        \begin{cases}
            \ds\frac{\e^{-1/8\ell^2}}{8\sqrt2\pi\ell^3\eta} &: \ell\leq\frac12\sqrt{\frac{2}{5+\sqrt{17}}} \\
            \ds\frac{1}{\sqrt2\pi\ell\eta}+\frac{3}{2\sqrt2\pi\eta} &: \text{else}
        \end{cases}\\
        &=: S(\eta,\ell)
    \end{align*}
    and by~\eqref{eq:estimate2_der}, \eqref{eq: se_kernel aux_estimates2} and \eqref{eq: se_kernel aux_estimates1}
    \begin{align*}
        \sum_{k=1}^{m/2} |c_k(g)|
        &< \frac14 +
        \begin{cases}
            \frac{\ell\sqrt{2\pi}}{\e} &: \ell\leq\frac{1}{\sqrt2\pi}\\
            2\pi^2\ell^3\sqrt{2\pi}\e^{-2\pi^2\ell^2} &: \text{else}
        \end{cases}
        +\frac14
        + \begin{cases}
            \frac{\e^{-1/8\ell^2}}{96\ell^4} &: \ell\leq\frac12\sqrt{\frac{2}{5+\sqrt{17}}} \\
            \frac{1}{12\ell^2}+\frac{1}{8\ell} &: \text{else}
        \end{cases}.
    \end{align*}
    Now, we can simply compute the maximum possible values regarding the single cases and obtain
    \begin{equation*}
        \sum_{k=1}^{m/2} |c_k(g)|
        <\frac14+\frac12+\frac14+\frac52=\frac72.
    \end{equation*}
    
    The error $\|\kappa_{\text{ERR}}\|_\infty$ can now be estimated by the sum of the non considered Fourier coefficients, see Lemma~\ref{lemma: aliasing error}.
    By making use of the underlying symmetry we obtain
    \begin{align*}
        \|\kappa_{\text{ERR}}\|_\infty
        &\leq 2\cdot \sum_{\bk\in\Z^3\setminus\Im} |c_{k_1}(g)| |c_{k_2}(f)| |c_{k_3}(f)| + \dots + |c_{k_1}(f)| |c_{k_2}(f)| |c_{k_3}(g)|\\
        &\leq\; 6|c_{m/2}(g)| \Bigl(\sum_{k=-m/2}^{m/2} |c_{k}(f)|\Bigr)^2 + 12 |c_{m/2}(f)| \sum_{k=-m/2}^{m/2} |c_k(g)| \sum_{k=-m/2}^{m/2} |c_k(f)| \\
        &\; + 6|c_{m/2}(f)|^2 \sum_{k=-m/2}^{m/2} |c_{k}(g)| + 12|c_{m/2}(g)| |c_{m/2}(f)| \sum_{k=-m/2}^{m/2} |c_{k}(f)| \\
        &\; + 2\cdot3\cdot \sum_{\|\bk\|_\infty \geq m/2+1} |c_{k_1}(g)| |c_{k_2}(f)| |c_{k_3}(f)|\\
        &\eqqcolon 6S_1+12S_2+6S_3+12S_4+6S_5.
    \end{align*}
    We make use of $g(x)\leq \e^{-1}$, implying $|c_0(g)|\leq \e^{-1}<\frac12$, and obtain
    \begin{align*}
        S_1 &< \xi(\eta,\ell)\cdot(\tfrac52)^2, \\
        S_2 &< \gamma(\eta,\ell)\cdot \tfrac{15}{2} \cdot\tfrac52, \\
        S_3 &< \gamma(\eta,\ell)^2\cdot \tfrac{15}{2}, \\
        S_4 &< \xi(\eta,\ell)\cdot \gamma(\eta,\ell)\cdot \tfrac52
    \end{align*}
    and
    \begin{align*}
        S_5 =
        \;& 8\left(\sum_{k=m/2+1}^\infty |c_k(g)|\right) \left(\sum_{k=m/2+1}^\infty |c_k(f)|\right)^2 \\
        \,& + 4 \left(\sum_{k=-m/2}^{m/2} |c_k(g)|\right) \left(\sum_{k=m/2+1}^\infty |c_k(f)|\right)^2 \\
        \,& + 4\cdot2\cdot \left(\sum_{k=m/2+1}^\infty |c_k(g)|\right)\left(\sum_{k=m/2+1}^\infty |c_k(f)|\right)\left(\sum_{k=-m/2}^{m/2} |c_k(f)|\right)\\
        \;&+2 \left(\sum_{k=-m/2}^{m/2} |c_k(f)|\right)^2 \left(\sum_{k=m/2+1}^\infty |c_k(g)|\right) \\
        \;&+ 2\cdot2 \left(\sum_{k=-m/2}^{m/2} |c_k(g)|\right) \left(\sum_{k=-m/2}^{m/2} |c_k(f)|\right) \left(\sum_{k=m/2+1}^\infty |c_k(f)|\right) ,
    \end{align*}
    and with the estimates from above
    \begin{align*}
        S_5 <\;& 8S(\eta,\ell) A(\eta,\ell)^2 + 30 A(\eta,\ell)^2 + 20 S(\eta,\ell) A(\eta,\ell) \\
        \;& + \frac{25}{2} S(\eta,\ell) + 75 A(\eta,\ell) \\
        <\;& \frac{25}{2}S(\eta,\ell) + A(\eta,\ell) \left(\frac{116}{5}S(\eta,\ell)+87\right),
    \end{align*}
    where we simplify $A(\eta,\ell)^2<\frac25 A(\eta,\ell)$, as in the proof of the previous theorem.
    In summary, the derived estimate reads as
    \begin{align*}
        \|\kappa_\text{ERR}\|_\infty
        <\;
        &\frac52 \xi(\eta,\ell) \left(15+12\gamma(\eta,\ell)\right)+ 15\gamma(\eta,\ell) \left(15+3\gamma(\eta,\ell)\right) \\
        &+ 75 S(\eta,\ell) + 6 A(\eta, \ell)\left(\frac{116}{5}S(\eta,\ell)+87\right) \\
        <\;
        &\left(\frac52 \xi(\eta,\ell)+15\gamma(\eta,\ell)\right) \left(15+12\gamma(\eta,\ell)\right) \\
        &+ 75 S(\eta,\ell) + 6 A(\eta, \ell)\left(\frac{116}{5}S(\eta,\ell)+87\right).
    \end{align*}
\end{proof}

We have now established a rigorous error bound for the derivative of one sub-kernel $\kappa_s^{\text{der}}$. The result for the additive kernel $\kappa^{\text{der}}$ follows straightforwardly.

\begin{remark}\label{rem:deriv1d}
    Considering the one-dimensional case with $\kappa^{(1d)}(r)=\frac{r^2}{2\ell^2} \e^{-r^2/(2\ell^2)}$ we obtain
    \begin{align*}
        \|\kappa_{\text{\emph{ERR}}}^{(1d)}\|_\infty
        &\leq 2|c_{m/2}(g)| + 4\sum_{k=m/2+1}^\infty |c_k(g)| \\
        &\leq 2\xi(\eta,\ell) + 4 S(\eta,\ell),
    \end{align*}
    with $\xi(\eta,\ell)$ and $S(\eta,\ell)$ as stated above.

    \begin{figure}[ht]
        \centering
        \begin{tikzpicture}\footnotesize
        \begin{axis}
        [width=0.3\linewidth,ymin=-0.1,ymax=0.5,xlabel={$r$}]
        \addplot[color=blue,solid,line width=0.5pt,domain=-0.5:0.5,samples=300]{x^2/(2*0.03^2)*exp(-(x^2/(2*0.03^2))))};
        \addplot[color=blue,solid,line width=0.5pt,domain=-0.7:-0.5,samples=60]{(x+1)^2/(2*0.03^2)*exp(-((x+1)^2/(2*0.03^2))))};
        \addplot[color=blue,solid,line width=0.5pt,domain=0.5:0.7,samples=60]{(x-1)^2/(2*0.03^2)*exp(-((x-1)^2/(2*0.03^2))))};
        \end{axis}
        \end{tikzpicture}
        \begin{tikzpicture}\footnotesize
        \begin{axis}
        [width=0.3\linewidth,ymin=-0.1,ymax=0.5,xlabel={$r$}]
        \addplot[color=blue,solid,line width=0.5pt,domain=-0.5:0.5,samples=300]{x^2/(2*0.3^2)*exp(-(x^2/(2*0.3^2))))};
        \addplot[color=blue,solid,line width=0.5pt,domain=-0.7:-0.5,samples=60]{(x+1)^2/(2*0.3^2)*exp(-((x+1)^2/(2*0.3^2))))};
        \addplot[color=blue,solid,line width=0.5pt,domain=0.5:0.7,samples=60]{(x-1)^2/(2*0.3^2)*exp(-((x-1)^2/(2*0.3^2))))};
        \end{axis}
        \end{tikzpicture}
        \begin{tikzpicture}\footnotesize
        \begin{axis}
        [width=0.3\linewidth,ymin=-0.1,ymax=0.5,xlabel={$r$}]
        \addplot[color=blue,solid,line width=0.5pt,domain=-0.5:0.5,samples=300]{x^2/(2*3^2)*exp(-(x^2/(2*3^2))))};
        \addplot[color=blue,solid,line width=0.5pt,domain=-0.7:-0.5,samples=60]{(x+1)^2/(2*3^2)*exp(-((x+1)^2/(2*3^2))))};
        \addplot[color=blue,solid,line width=0.5pt,domain=0.5:0.7,samples=60]{(x-1)^2/(2*3^2)*exp(-((x-1)^2/(2*3^2))))};
        \end{axis}
        \end{tikzpicture}
    \caption{Periodized Gaussian derivative kernels in 1D, with parameters $\ell=0.03$, $\ell=0.3$ and $\ell=3$ (from left to right).
    \label{fig:per_deriv_1d}}
    \end{figure}
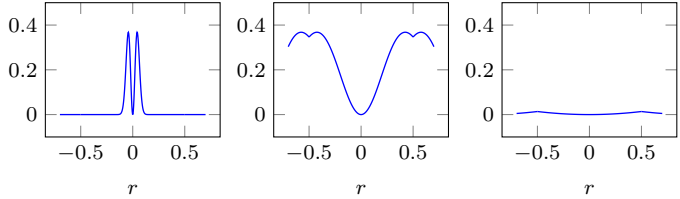
\end{remark}

\subsubsection{Comparison of Empirical Approximation Error and Analytical Error Estimates}
In order to check if the actual kernel approximation error can indeed be estimated by the error bounds derived above, we perform the following experiments.
We generate $N=10^4$ uniformly distributed random points in $\br_j\in[-1/2,1/2]^3$.
Then, we evaluate the kernel functions
\begin{align*}
    \kappa^\text{gauss}(\br)=\e^{-\|\br\|^2/2\ell^2}
    \quad\text{ and}\quad
    \kappa^\text{deriv}(\br)=\frac{\|\br\|^2}{2\ell^2} \e^{-\|\br\|^2/2\ell^2}
\end{align*}
in the points $\br_j$, $j=1,\dots,N$, for different values of $\ell$.
In order to approximate the kernel by a trigonometric sum with Fourier coefficients $\hat c_\bk$ we evaluate the kernel function on a regular grid of $m^3$ points in $[-1/2,1/2]^3$, where we select $m\in\{16,32,64\}$.
Finally, we evaluate the obtained trigonometric polynomials in the random points $\br_j$ and compute the measured worst case error via $\max_{j=1,\dots,N} |\kappa_\text{ERR}(\br_j)|$.

In Figure~\ref{fig:error_estimates} the measured errors for different $m$ and $\ell$ are represented by the dotted lines.
The solid lines show the estimates, as presented in Theorems~\ref{theorem: multi_gauss} and \ref{theorem: multi_deriv}.
We can see that the estimated errors are indeed below the corresponding estimates, where for some values of $\ell$ the true error is a few orders of magnitudes smaller than estimated.
However, the error behavior is described qualitatively very well by our estimates.

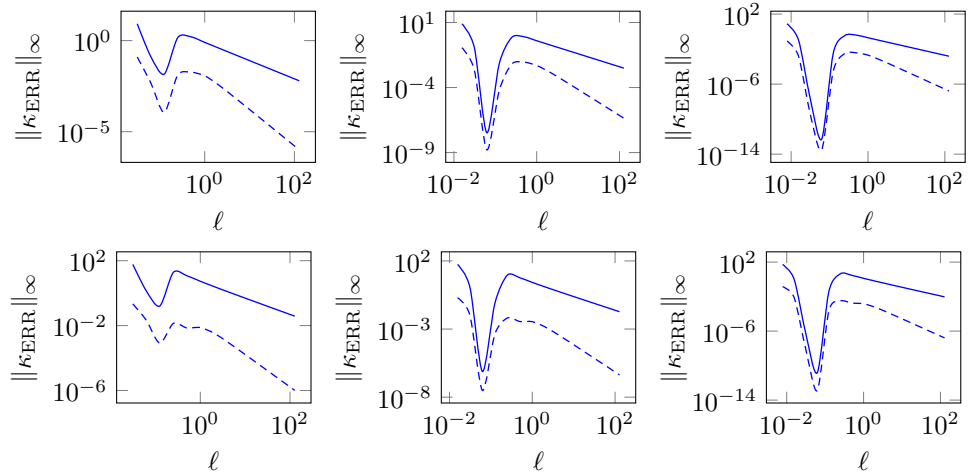
\begin{figure}[ht]
    \centering
    \begin{tikzpicture}
        \begin{axis}[width=0.32\linewidth,
        ymode=log,xmode=log,
        legend style={legend pos=south east, nodes=right},
        xlabel={$\ell$},
        ylabel={$\|\kappa_\text{ERR}\|_\infty$}
        ]
        \addplot[smooth,color=blue,line width=0.5pt] table[x=ell,y=gaussM16] {pic/estimates.txt};
        \addplot[smooth,color=blue,line width=0.5pt,densely dashed] table[x=ell,y=gaussM16] {pic/measured_errors.txt};
        \end{axis}
    \end{tikzpicture}
    \begin{tikzpicture}
        \begin{axis}[width=0.32\linewidth,
        ymode=log,xmode=log,
        legend style={legend pos=south east, nodes=right},
        xlabel={$\ell$},
        ylabel={$\|\kappa_\text{ERR}\|_\infty$}
        ]
        \addplot[smooth,color=blue,line width=0.5pt] table[x=ell,y=gaussM32] {pic/estimates.txt};
        \addplot[smooth,color=blue,line width=0.5pt,densely dashed] table[x=ell,y=gaussM32] {pic/measured_errors.txt};
        \end{axis}
    \end{tikzpicture}
    \begin{tikzpicture}
        \begin{axis}[width=0.32\linewidth,
        ymode=log,xmode=log,
        legend style={legend pos=south east, nodes=right},
        xlabel={$\ell$},
        ylabel={$\|\kappa_\text{ERR}\|_\infty$}
        ]
        \addplot[smooth,color=blue,line width=0.5pt] table[x=ell,y=gaussM64] {pic/estimates.txt};
        \addplot[smooth,color=blue,line width=0.5pt,densely dashed] table[x=ell,y=gaussM64] {pic/measured_errors.txt};
        \end{axis}
    \end{tikzpicture}
    \begin{tikzpicture}
        \begin{axis}[width=0.32\linewidth,
        ymode=log,xmode=log,
        legend style={legend pos=south east, nodes=right},
        xlabel={$\ell$},
        ylabel={$\|\kappa_\text{ERR}\|_\infty$}
        ]
        \addplot[smooth,color=blue,line width=0.5pt] table[x=ell,y=derivM16] {pic/estimates.txt};
        \addplot[smooth,color=blue,line width=0.5pt,densely dashed] table[x=ell,y=derivM16] {pic/measured_errors.txt};
        \end{axis}
    \end{tikzpicture}
    \begin{tikzpicture}
        \begin{axis}[width=0.32\linewidth,
        ymode=log,xmode=log,
        legend style={legend pos=south east, nodes=right},
        xlabel={$\ell$},
        ylabel={$\|\kappa_\text{ERR}\|_\infty$}
        ]
        \addplot[smooth,color=blue,line width=0.5pt] table[x=ell,y=derivM32] {pic/estimates.txt};
        \addplot[smooth,color=blue,line width=0.5pt,densely dashed] table[x=ell,y=derivM32] {pic/measured_errors.txt};
        \end{axis}
    \end{tikzpicture}
    \begin{tikzpicture}
        \begin{axis}[width=0.32\linewidth,
        ymode=log,xmode=log,
        legend style={legend pos=south east, nodes=right},
        xlabel={$\ell$},
        ylabel={$\|\kappa_\text{ERR}\|_\infty$}
        ]
        \addplot[smooth,color=blue,line width=0.5pt] table[x=ell,y=derivM64] {pic/estimates.txt};
        \addplot[smooth,color=blue,line width=0.5pt,densely dashed] table[x=ell,y=derivM64] {pic/measured_errors.txt};
        \end{axis}
    \end{tikzpicture}
    \caption{Comparison of the measured estimates $\|\kappa_\text{ERR}\|_\infty$ and corresponding error bounds.
    Results for the Gaussian kernel $\kappa:=\kappa^\text{gauss}$ are depicted in the first line, for the Gaussian derivative kernel $\kappa:=\kappa^\text{deriv}$ in the second line.
    The grid size $m$ in each direction has been set to $m=16$, $m=32$ or $m=64$ (from left to right).
    \label{fig:error_estimates}
    }
\end{figure}

\subsubsection{Evaluation of Accuracy}
The ultimate goal of the NFFT-accelerated kernel vector multiplication is to obtain fast ``accurate'' approximate products. For this we want to evaluate the accuracy of the approximation. Let us denote the exact kernel vector product as $p = Kv$, the exact kernel vector product with the Fourier approximation error as $p_{E} = K_{E}v$ and the approximate kernel vector product with $K_{E}$ as $\tilde{p}_{E} \approx K_{E}v$. Then, the overall approximation error is determined by
\begin{align*}
    | p - \tilde{p}_{E} | \le | p - p_{E} | + | p_{E} - \tilde{p}_{E} |.
\end{align*}
Here, $|p - p_{E}|$ describes the Fourier approximation error as introduced above in \eqref{eq: NFFT approx error} and $|p_{E} - \tilde{p}_{E}|$ emerges from employing an approximation algorithm, such as the conjugate gradient method for solving with the kernel matrix for instance. Note that an additional approximation error originates from applying the NFFT as already mentioned above.

\section{Global Sensitivity Analysis} \label{Sec: Global sensitivity analysis}
The analysis of variance (ANOVA) is a concept studied in the context of statistical methods as well as pure and numerical analysis.
In an analytical framework, one may study the so-called classical ANOVA decomposition of functions~\citep{CaMoOw97,LiOw06,KuSlWaWo09}, in order to understand which variables and groups of variables are most important to the function.
Expanding a function by using orthonormal systems makes it easy to decompose its variance by means of the basis coefficients, as presented by \citet{PoSc19a}.
We briefly introduce this concept below.

\subsection{Sensitivity Analysis in Terms of Fourier Coefficients}
We start with some preliminaries and introduce the required notation.
In the following, we study periodic functions $f:\T^d\to\R$ on the $d$-dimensional torus $\T\simeq[-1/2,1/2)^d$ and denote by 
\begin{equation*}
    [d] \coloneqq \{1,\dots,d\}
\end{equation*}
the set of all dimensions or rather features.
As usual, we denote by $\mathcal P(S)$ the set of all subsets of a set $S$.

Subsets of $[d]$, that is elements of $\mathcal P([d])$, are denoted by small bold letters $\bu$. Such a subset is identified with a vector with ascending entries, for example
\begin{equation*}
    \text{ the subset } \bu=\{1,4,3\} \text{ is identified with the vector } \bu=(1,3,4)\in\N^3.
\end{equation*}
For $\x\in\R^d$ we denote by $\x^\bu\in\R^{|\bu|}$ the restriction of $\x$ to the dimensions present in $\bu$, for example
\begin{equation*}
    \x=(9,8,7,6,5), \bu=\{1,4,3\} \Rightarrow \x^\bu=(9,7,6).
\end{equation*}
Furthermore, we denote by $\mathrm{supp}(\x)$ the set (or vector) of all dimensions $j$ with $x_j\neq0$, for example
\begin{equation*}
    \x=(2,0,0,1,3) \Rightarrow \mathrm{supp}(\x)=(1,4,5).
\end{equation*}
Let $\bv,\bu\in\mathcal P([d])$ with $\bv\subseteq\bu$. Then, we define the elements of the vector $\mathcal F(\bv,\bu)\in \{0,1\}^{|\bu|}$ via 
\begin{equation*}
    \mathcal F(\bv,\bu)_j=\begin{cases}
    1 &: \bu_j=\bv_i \text{ for some } i=1,\dots,|v| \\
    0 &: \text{else}
    \end{cases},
\end{equation*}
where $j=1,\dots,|\bu|$, containing the information which elements of $\bv$ are also included in $\bu$. 
As an example, for $\bv=(1,4)$ and $\bu=(1,2,4)$ we obtain $\mathcal F(\bv,\bu)=(1,0,1)$.

The classical ANOVA decomposition of a function $f\in L_2(\mathbb T^d)$ is a unique decomposition of the form
\begin{equation*}
    f(\x) = \sum_{\bu\subseteq\{1,\dots,d\}} f_\bu(\x^\bu) = f_{\emptyset} + f_{\{1\}}(x_1) + f_{\{2\}}(x_2) + \dots + f_{\{1,\dots,d\}}(\x),
\end{equation*}
consisting of $2^d$ ANOVA terms $f_\bu=f_\bu(\x^\bu)$.
The ANOVA decomposition is defined in such a way that the single ANOVA terms $f_\bu$ are pairwise orthogonal with respect to the usual $L_2$ inner product, that is $\langle f_{\bu},f_{\bv}\rangle=\int_{\T^d} f_\bu(\x)\overline{f_\bv(\x)}\mathrm dx=0$ for $\bu\neq\bv$.
The ANOVA term $f_\emptyset$ is a constant, which equals the mean value of the function.

In the special case of a trigonometric polynomial
\begin{equation}\label{eq:trig_poly}
    f(\x)= \sum_{\bk\in\mathcal I_m} \hat f_{\bk}\, \e^{2\pi\mathrm{i}\bk^\intercal\x},
\end{equation}
one can show that, confer \citet{PoSc19a},
\begin{equation}\label{eq:anova_fourier}
    f_\bu(\x^\bu)
    = \sum_{\substack{\bk\in\mathcal I_m\\ \mathrm{supp}(\bk)=\bu}} \hat f_{\bk}\, \e^{2\pi\mathrm{i}\bk^\intercal\x}
    = \sum_{\substack{\bk\in\mathcal I_m\\ \mathrm{supp}(\bk)=\bu}} \hat f_{\bk}\, \e^{2\pi\mathrm{i}(\bk^\bu)^\intercal\x^\bu}.
\end{equation}
From \eqref{eq:anova_fourier} we see that an ANOVA term $f_\bu$ includes only the frequencies $\bk$, which have non-zero entries on the set of indices $\bu$ and are zero in all dimensions included in $[d]\setminus\bu$, meaning that the ANOVA decomposition introduces a disjoint decomposition of the trigonometric polynomial \eqref{eq:trig_poly} in terms of its Fourier coefficients.

In order to understand the importance of variables and subsets of variables $\bu$ to the function, one studies the variance of $f$ and analyzes the contributions of the single ANOVA terms.
It is well-known that the variance of a trigonometric polynomial is easily determined by summing over the absolute values of the Fourier coefficients, that is
\begin{equation*}
    \sigma^2(f) = \sum_{\bk\in\mathcal I_m\setminus\{\bm0\}} \left|\hat f_\bk\right|^2.
\end{equation*}
For the single ANOVA terms $f_\bu$ we obtain the same, namely
\begin{equation*}
    \sigma^2(f_\bu) = \sum_{\substack{\bk\in\mathcal I_m\\ \mathrm{supp}(\bk)=\bu}} \left|\hat f_\bk\right|^2,
\end{equation*}
so that we conclude
\begin{equation*}
    \sigma^2(f) =\sum_{\substack{\bu\subseteq\{1,\dots,d\}\\\bu\neq\emptyset}} \sigma^2(f_\bu).
\end{equation*}
Based on that, the so-called global sensitivity indices (GSI), confer \citet{So90,So01} and \citet{PoSc19a}, are defined by
\begin{equation*}
    \rho_\bu(f)
    := \frac{\sigma^2(f_\bu)}{\sigma^2(f)} \in[0,1],
\end{equation*}
where we may replace the variances by the sums of the corresponding Fourier coefficients, as explained above.
Non-important subsets $\bu$ will not significantly contribute to the overall variance, meaning $\rho_\bu(f)\approx 0$. In contrast, a large GSI is obtained for important $\bu$. 

\subsection{Computing the GSI in the Kernel Setting}
Now, we consider the matrix vector product \eqref{eq: fast sum} with coefficients $v_j$ and $\kappa$ being an additive kernel with windows $\mathcal W_s$, $s=1,\dots,P$, for which $|\mathcal W_s|= d_\text{max}$ holds true. We obtain
\begin{align}
    h(\x)\coloneqq&
    \sum_{j=1}^N v_j \sum_{s=1}^P \kappa_s(\x_j^{\mathcal W_s},\x^{\mathcal W_s}) \notag\\
    \approx& \sum_{j=1}^N v_j \sum_{s=1}^P 
    \sum_{\bk\in\mathcal I_m} \hat c_{\bk}\, \e^{2\pi\mathrm{i}\bk^\intercal(\x_j^{\mathcal W_s}-\x^{\mathcal W_s})}
    \notag \\
    =& \sum_{s=1}^P \sum_{\bk\in\Im}
    \hat c_{\bk} \,S(\bk,\mathcal W_s)
    \,\e^{-2\pi\mathrm{i}\bk^\intercal\x^{\mathcal W_s}} = \tilde h(\x), \label{eq:poly_final}
\end{align}
where $\Im\subset\Z^{d_\text{max}}$ and
$$
S(\bk,\mathcal W_s) \coloneqq \sum_{j=1}^N v_j \e^{2\pi\mathrm{i}\bk^\intercal\x_j^{\mathcal W_s}}.
$$
Note that exactly the same approximation is used for all windows, that is, the set of Fourier coefficients $\{\hat c_{\bk}\}$ is the same for all $\mathcal W_s$.
This is possible since all windows have the same length $d_\text{max}$ and the same length scale parameter $\ell$.
The approximation \eqref{eq:poly_final} is clearly again a trigonometric polynomial and we can now compute the sensitivity indices as explained above.
We summarize the procedure of computing the GSI in this setting in the following algorithm.


\begin{algorithm}\caption{Computation of GSI}\label{alg:gsi} $\,$ \\
\textbf{Input:} The set of windows $\mathcal W_s\subset\{\bu\subset[d]: |\bu|=d_\text{max}\}$, $s=1,\dots,P$, scaled training data $\x_j$ with $\|\x_j^{\mathcal W_s}\|\leq 1/4$, and corresponding coefficients $v_j$, $j=1,\dots,N$, superposition dimension $d_\text{max}$, length-scale parameter $\ell>0$ and corresponding kernel Fourier coefficients $\hat c_\bk, \bk\in\Im\subset\Z^{d_\text{max}}$ (precomputed via periodization and FFT).
\begin{enumerate}
    \item For all $\bu \in \bigcup_{s=1}^P\mathcal P(\mathcal W_s)$ initialize $\theta_{\bu}:=0$.
    \item\label{step2} For all $s=1,\dots,P$ do:
        \begin{enumerate}
            \item Compute $S(\bk,\mathcal W_s)$, $\bk\in\mathcal I_m$ (this is an adjoint or rather type-2 NFFT).
            \item For all $\emptyset\neq\bv\in\mathcal P(\mathcal W_s)$ compute
            $$
            \theta_\bv =
            \theta_\bv + \sum_{\mathrm{supp}(\bk)=\mathrm{supp}(\mathcal F(\bv,\mathcal W_s))} \left| \hat c_\bk S(\bk,\mathcal W_s)\right|^2.
            $$
        \end{enumerate}
    \item Compute the overall variance
    $$
    \sigma^2(\tilde h) \coloneqq
    \sum_{\emptyset\neq\bv\in\bigcup_{s=1}^P\mathcal P(\mathcal W_s)} \theta_\bv.
    $$
    \item For all $\emptyset\neq\bv\in\bigcup_{s=1}^P\mathcal P(\mathcal W_s)$ compute the GSI via
    $$\rho_\bv(\tilde h) \coloneqq \frac{\theta_\bv}{\sigma^2(\tilde h)}.$$
\end{enumerate}
\textbf{Output:} Global sensitivity indices $\rho_\bv(\tilde h)$ for all $\bv\in\bigcup_{s=1}^P \mathcal P(\mathcal W_s) \setminus \emptyset$, that is, for all given windows $\mathcal W_s$ and all their subsets, except for $\bv=\emptyset$.
\end{algorithm}
Note that in Algorithm~\ref{alg:gsi} we consider the special case where all given windows in the kernel have exactly the same cardinality, that is $|\mathcal W_s|=d_\text{max}$.
The case $|\mathcal W_s|\leq d_\text{max}$ can be realized analogously and is not more complicated. The notation will be slightly more complex for this more general case, since the set of Fourier coefficients $\{\hat c_\bk, \bk\in\Im\}$ differs for windows of different lengths.
We would like to mention that the adjoint NFFT that has to be computed in step~\ref{step2} in the above algorithm is computed using the \texttt{pynufft}\footnote{\url{https://github.com/jyhmiinlin/pynufft}} software package in our Python codes.

\subsection{Variation of the GSI scores}

As explained above, sensitivity indices are computed for all subsets of features of cardinality smaller or equal $d_{\text{max}}$. Those subsets are then sorted by their GSI in descending order. Since the sum of those indices is $1$ over all feature subsets, we define a $\texttt{GSI}_{\text{score}} \in (0,1)$ determining how many of those subsets shall be assigned to the feature window. Starting with the subset with the largest GSI, subsets are added to the feature window until the sum of GSI reaches $\texttt{GSI}_{\text{score}}$. The larger the GSI score the more feature subsets are selected. Of course, $\texttt{GSI}_{\text{score}}$ has to be selected carefully in order to obtain good feature windows. The optimal choice can vary for different data sets and is not straightforward. In Figure~\ref{fig:gsi_scores} we compare the RMSE, window size and runtime yielded by models with windows generated for different GSI scores. As expected the RMSE increases when increasing the GSI score up to a certain level. At some point, adding more feature subsets to the window does not lead to better prediction quality. Interestingly, the number of features and windows included in the model is equal for most GSI scores. Only for very large GSI scores, the number of windows goes up steeply. The time for running the model with the corresponding windows behaves accordingly. The prediction quality of the additive model with windows generated with global sensitivity analysis clearly outperforms the full KRR model for the two data sets considered. Note that the performance of sklearn KRR with the full kernel could likely be improved by a more exhaustive grid search. In the following, we set $\texttt{GSI}_{\text{score}}=0.99$, as we achieve a very high prediction quality with this score and can keep the number of features and windows involved and thus the runtime moderate.

\begin{figure}
    \centering
    \includegraphics[width=0.65\textwidth]{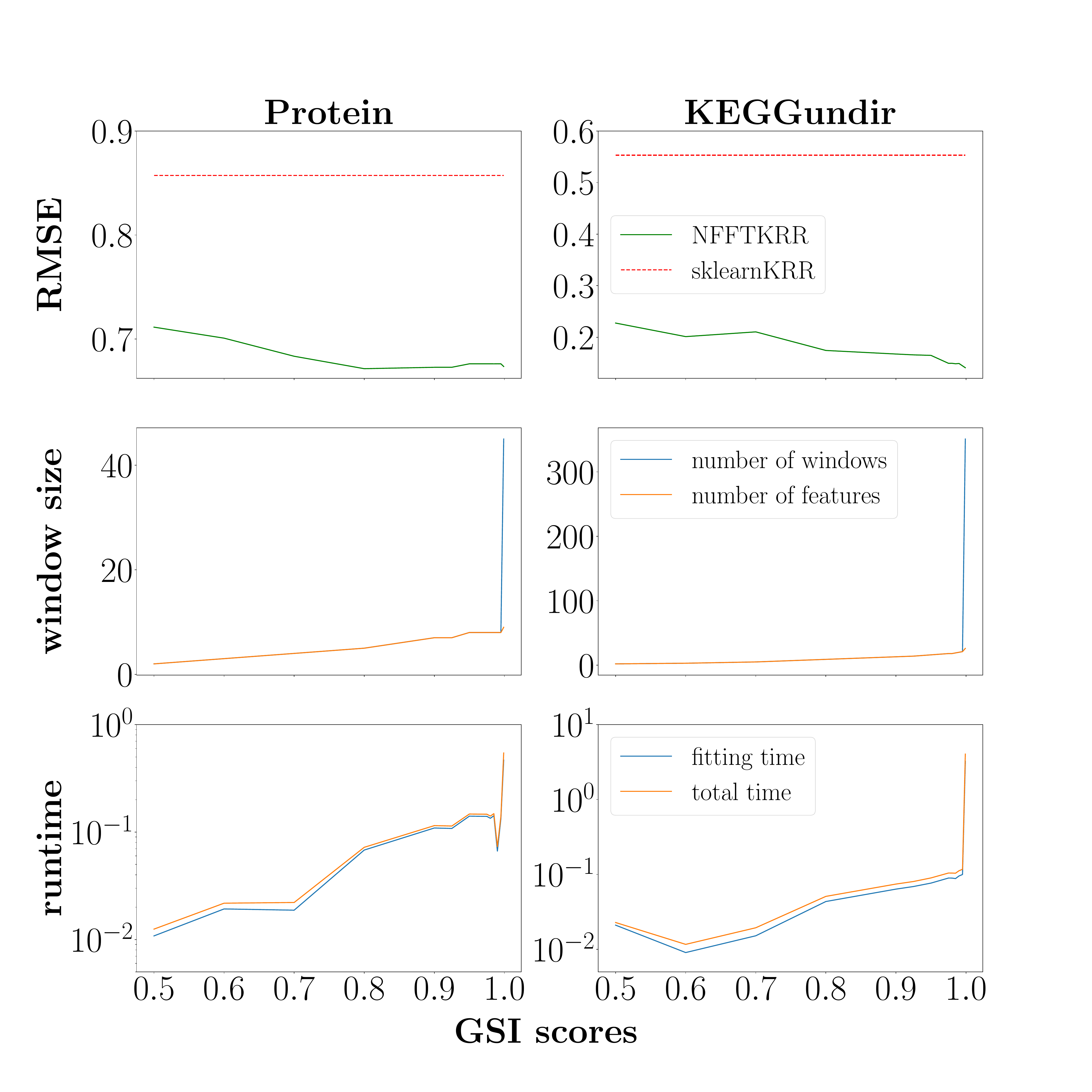}
    \caption{Comparison of RMSE, window size and runtime for the additive KRR model for different GSI scores with $N=1000$, $d_{\text{max}}=3$, $N_{\text{feat}}=d$ and initial $\ell=1$ and $\beta=1$.}
    \label{fig:gsi_scores}
\end{figure}

\section{Numerical Results} \label{Sec:Numerical results}

To demonstrate the predictive power of the feature arrangement techniques presented above we perform additive kernel ridge regression on benchmark data sets with NFFT-approximations. The corresponding implementations are available in the GitHub repository \texttt{NFFTAddKer}, see \url{https://github.com/wagnertheresa/NFFTAddKer}. The underlying repository for the fast NFFT-based kernel evaluations is \texttt{prescaledFastAdj} as introduced above that accesses parts of the \texttt{NFFT} library.

In the following we compare the results of the more sophisticated with the basic techniques to examine whether it is worth putting more effort into the preprocessing phase of learning the windows $\mathcal{W}_s$ and whether additive kernels actually allow for higher accuracy.

Furthermore, we investigate whether the intuition holds true that feature groups covering feature interactions incorporate more information into the model what leads to higher prediction accuracy than groups consisting of single features only.

\subsection{Experimental Setup}

All experiments were run on a computer with $8 \, \times$ Intel Core $\text{i}7-7700$ CPU @ $3.60$ GHz processors with NV106 graphics and $16.0$ GiB of RAM. We consider the UCI data sets Protein~\citep{physicochemical_properties_of_protein_tertiary_structure_265} ($N=45730$, $d=9$), KEGGundir~\citep{kegg_metabolic_reaction_network_(undirected)_221} ($N=63608$, $d=26$) and Bike Sharing~\citep{bike_sharing_275} ($N=17379$, $d=14$) and the StatLib data set Housing~\citep{pace1997sparse} ($N=20640$, $d=8$). Note that data points with missing entries or entries exceeding the range defined for the feature are dropped in the KEGGundir data set. The data is z-score normalized, the labels are transformed to normalize the target distribution and the data and length-scale parameters are prescaled as described in Subsection~\ref{Sec:Scaling}. We perform grid search for the additive kernel ridge regression. All results presented in this paper were generated with the parameter choices listed in Table~\ref{tab:params} unless stated otherwise. 

\begin{table}
\centering
\begin{tabular}{|r|r l|}
 \hline
\multicolumn{3}{| c |}{General parameter setting}\\
 \hline
 train--test split & $\texttt{data}_{\text{split}}=$ & \hspace{-0.9em}$0.5$ \\
 signal variance parameter & $\sigma_f=$ & \hspace{-0.9em}$\sqrt{1/P}$ \\
 CG convergence tolerance & $\texttt{tol}_{\text{CG}}=$ & \hspace{-0.9em}$10^{-3}$ \\
 NFFT parameter setup & $\texttt{setup}_{\text{NFFT}}=$ & \hspace{-0.9em}``default'' \\ \hline
 \multicolumn{3}{| c |}{Parameter setting for feature grouping techniques} \\
 \hline
 subset size for feature grouping & $N^{\text{fg}}=$ & \hspace{-0.9em}$1000$ \\
 subset size for FGO & $N_{\text{FGO}}^{\text{fg}}=$ & \hspace{-0.9em}$500$ \\
 threshold for dropping features & $\texttt{thres}=$ & \hspace{-0.9em}$0.0$ \\
 L$1$ regularization parameter for lasso & $\beta_{\text{lasso}}^{\text{L}1}=$ & \hspace{-0.9em}$0.01$ \\
 L$1$ regularization parameter for EN & $\beta_{\text{EN}}^{\text{L}1}=$ & \hspace{-0.9em}$0.01$ \\
 L$1$ ratio for EN & $\texttt{ratio}_{\text{EN}}^{\text{L}1}=$ & \hspace{-0.9em}$0.5$ \\
 fixed length-scale parameter for FGO & $\ell_{\text{FGO}}=$ & \hspace{-0.9em}$1$ \\
 fixed regularization parameter for FGO & $\beta_{\text{FGO}}=$ & \hspace{-0.9em}$0.1$ \\
 GSI score & $\texttt{GSI}_{\text{score}}=$ & \hspace{-0.9em}$0.99$ \\
 initial length-scale parameter for GSI & $\ell_{\text{GSI}}^{\text{init}}=$ & \hspace{-0.9em}$1$ \\
 initial regularization parameter for GSI & $\beta_{\text{GSI}}^{\text{init}}=$ & \hspace{-0.9em}$1$ \\ \hline
 \multicolumn{3}{| c |}{Candidate parameter values for grid search} \\
 \hline
 length-scale parameter & $\ell \in $ & \hspace{-0.9em}$[ 10^{-2}, 10^{-1}, 1, 10^1, 10^2]$ \\
 regularization parameter & $\beta \in $ & \hspace{-0.9em}$[ 10^{-2}, 10^{-1}, 1, 10^1, 10^2]$ \\ \hline
\end{tabular}
\caption{Parameter setting for the experiments presented in this paper.}
\label{tab:params}
\end{table}

Other parameters that have to be chosen are the maximal length of the windows $d_{\text{max}}$ and the total number of features included $N_{\text{feat}}$ that are required for the feature arrangement techniques based on a feature importance ranking. In the remainder of this section we analyze how the choice of these parameters affects the performance of the corresponding regression model. Finally, we compare the feature importance ranking based methods to the approaches based on optimization and global sensitivity analysis.

In this section, we examine the following feature arrangement techniques: consecutive feature grouping (consec), decision tree (DT), mutual information score (MIS), Fisher score (Fisher), RreliefF as filter (relfilt) and wrapper (relwrap) method, lasso, elastic net (EN), feature clustering based on connected components (FC CC), feature grouping optimization (FGO) and global sensitivity indices (GSI).

\subsection{Variation of the Maximal Window Length}

As motivated above, fast approximation techniques can only exploit their full computational power in small feature spaces. Therefore, a maximal window length $d_{\text{max}}$ must be defined to determine the windows accordingly. For the NFFT-accelerated approximation $d_{\text{max}}$ shall be smaller than $4$.

\begin{figure}
    \centering
    \includegraphics[width=\textwidth]{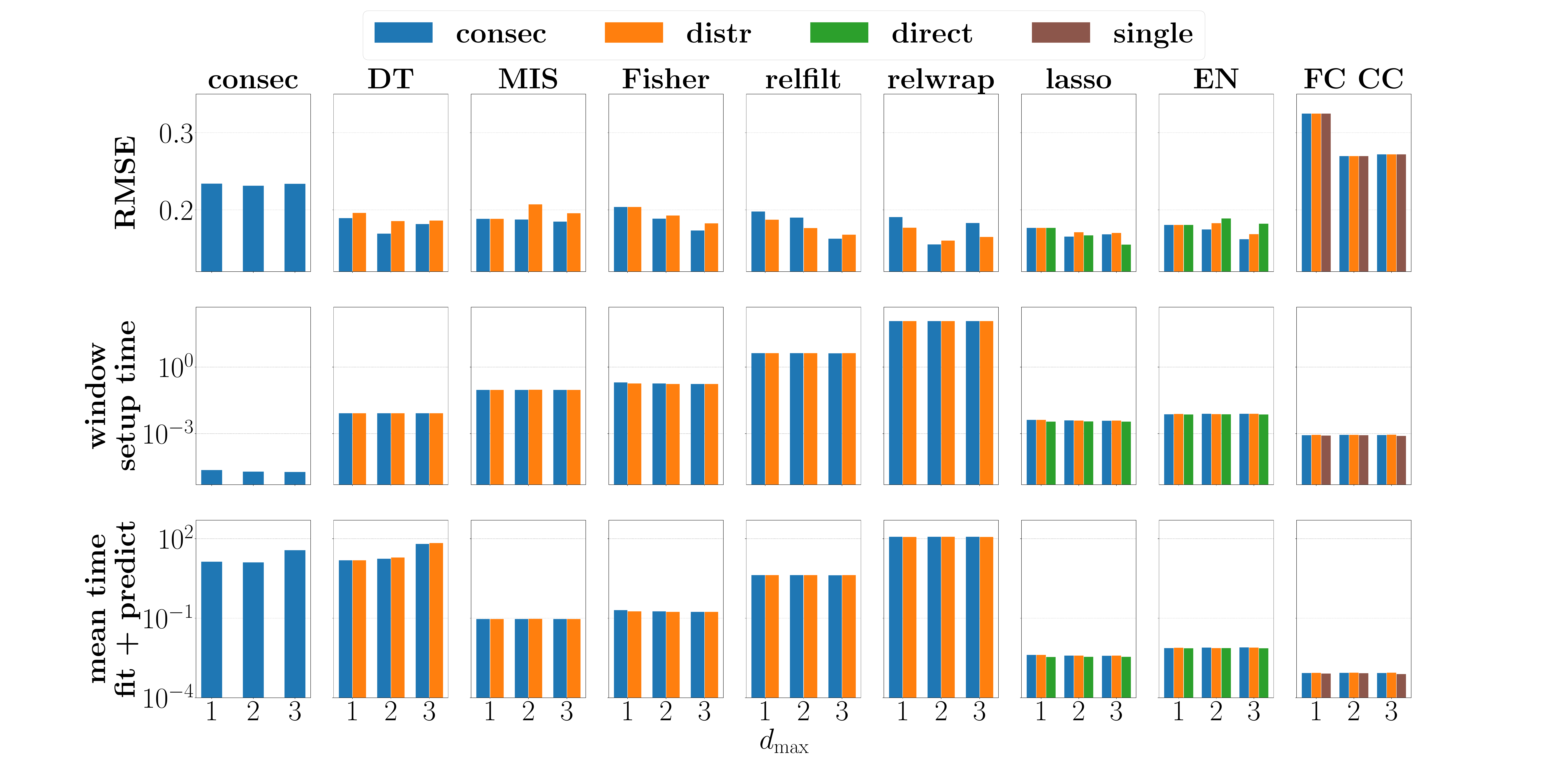}
    \caption{Comparison of RMSE, window setup time and time for fitting and predicting the additive KRR model with the corresponding windows for different feature arrangement techniques and strategies, fixed number of total features included $N_{\text{feat}}=2d/3$ and different maximal window length $d_{\text{max}}$ for the KEGGundir data set.}
    \label{fig:dmax}
\end{figure}

In Figure~\ref{fig:dmax} we analyze the impact of its value on the feature importance ranking based techniques for different arrangement strategies, where the total number of features involved is fixed to two-thirds of $d$. We compare the RMSE of the additive regression model obtained with the corresponding windows, the time for determining the windows and the mean time for fitting and predicting the model in the grid search routine for the KEGGundir data set. In most RMSE plots the bars shrink the larger $d_{\text{max}}$. While FC CC clearly has the best time for fitting and predicting and yields the second best windows setup time, it cannot keep up with the competitors regarding RMSE. relfilt and relwrap take the longest for setting up the windows and are among the slower methods for fitting and predicting. The corresponding RMSE is in the midfield but cannot compensate for the high runtimes however. Lasso and EN provide the second best runtime for fitting and predicting, the third best window setup time and are among the best in RMSE. DT provides one of the best RMSE results and is in the midfield in the window setup time. However, the fitting and predicting takes one of the longest. In comparison, MIS and Fisher yield quite similar RMSE values as DT but take longer for generating the windows. Fitting and predicting is faster by several orders of magnitude though. In total, MIS performs slightly better than Fisher in all categories. Naturally, consec is fastest in determining the windows. The RMSE is far from the best and fitting and predicting is among the slowest. While some feature arrangement strategies beat others in particular techniques, no clear trend of one of them outperforming the others can be identified.

As expected the choice of $d_{\text{max}}$ mostly does not impact the window setup time. However, it generally does not strongly affect the time for fitting and predicting the model either. For most techniques the runtime increases by factor $2$ to $4$ when changing $d_{\text{max}}$ from $1$ to $3$, what is barely visible in the figure. The larger $d_{\text{max}}$ the more Fourier coefficients have to be computed per sub-kernel $K_s$. A smaller value of $d_{\text{max}}$ however leads to a larger number of windows and sub-kernels $P$ for a fixed number $N_{\text{feat}}$. Therefore, both aspects mostly balance each other out.

\subsection{Variation of the Total Number of Features Included}

The experiment on the maximal window length has illustrated that $d_{\text{max}}=3$ can be a good choice since it usually yields the smallest RMSE while it only leads to an insignificantly greater computational effort. Next, we investigate how the total number of features included impacts the overall performance for fixed $d_{\text{max}}=3$.

\begin{figure}
    \centering
    \includegraphics[width=\textwidth]{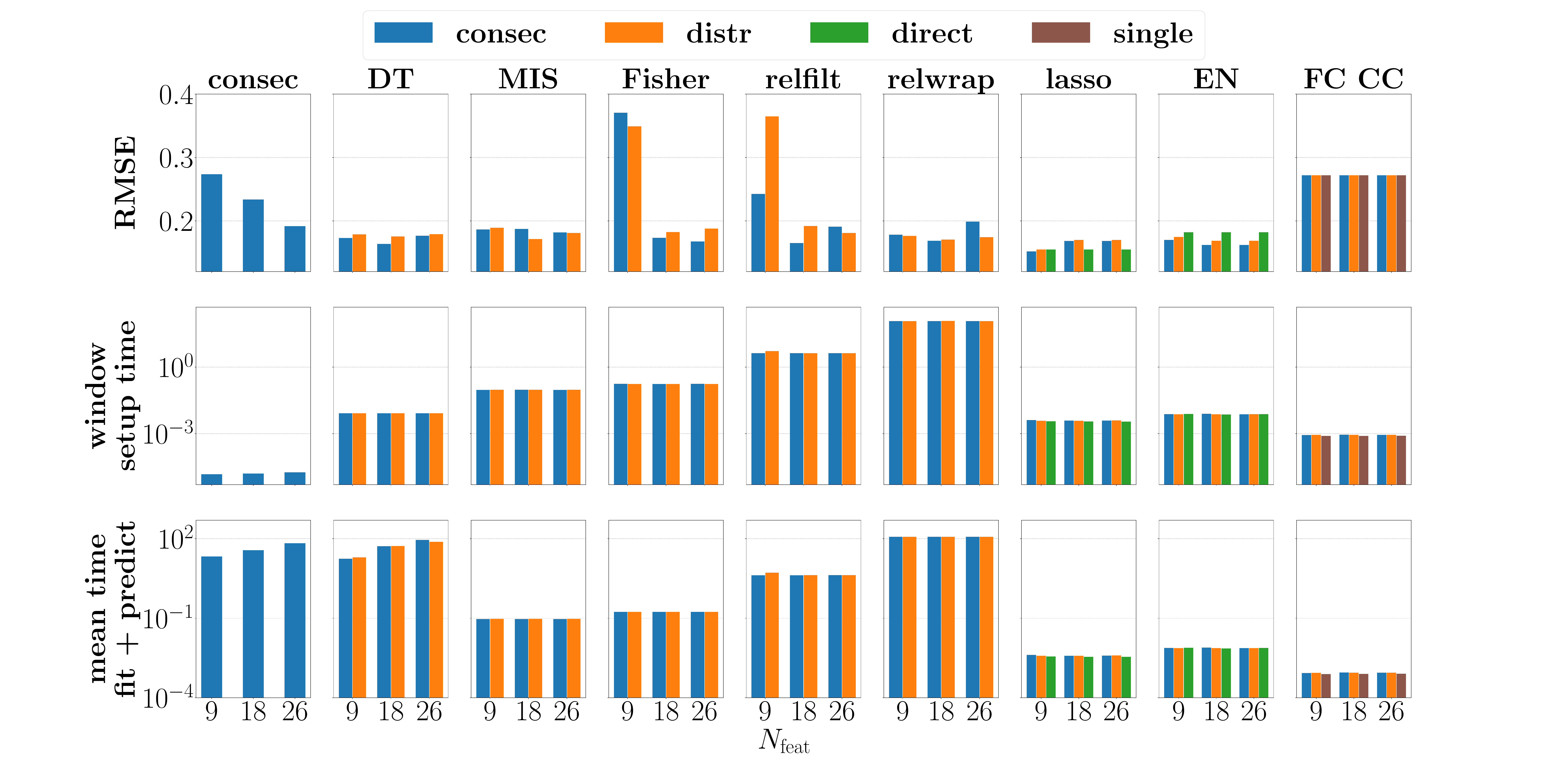}
    \caption{Comparison of RMSE, window setup time and time for fitting and predicting the additive KRR model with the corresponding windows for different feature arrangement techniques and strategies, fixed $d_{\text{max}}=3$ and different number of total features included $N_{\text{feat}}$ for the KEGGundir data set.}
    \label{fig:Nfeat}
\end{figure}

Figure~\ref{fig:Nfeat} shows the performance of the feature importance ranking based techniques for different arrangement strategies, fixed $d_{\text{max}}=3$ and different values $d/3$, $2d/3$ and $d$ for $N_{\text{feat}}$. The runtime plots behave similarly as the ones in Figure~\ref{fig:dmax} for the different feature arrangement techniques. Again, we cannot recognize that one of the arrangement strategies is superior to the other ones and the choice of $N_{\text{feat}}$ does not seem to have an impact on the time for running the model. For all but one technique, the RMSE is largest for $N_{\text{feat}}=d/3$ and smallest for $N_{\text{feat}}=2d/3$. In most cases, the RMSE for $N_{\text{feat}}=d$ is either at the same level or larger than for $N_{\text{feat}}=2d/3$.

\subsection{Comparison to GSI, FGO and Full Kernel Ridge Regression}

In the previous subsections we observed that using lasso and EN to determine the feature windows usually led to the smallest RMSE. Moreover, the window setup time and the time for fitting and predicting the model with the corresponding windows is superior to most of the other methods. MIS can be considered as the best technique that is not based on a regularization. The only other method that could keep up with MIS is DT that reached similar RMSE. While the window setup time of DT is actually smaller than for MIS, the time for running the model is larger by up to $3$ orders of magnitude. Even though none of the feature arrangement strategies is clearly preferable for those techniques, `distr' might be slightly the best for MIS, `direct' for lasso and `consec' for EN.

In Figure~\ref{fig:final_comparison}, we compare those leading feature importance ranking based techniques MIS, lasso and EN to GSI, FGO and the state-of-the-art sklearn kernel ridge regression with the full kernel on $4$ benchmark data sets.

\begin{figure}
    \centering
    \includegraphics[width=\textwidth]{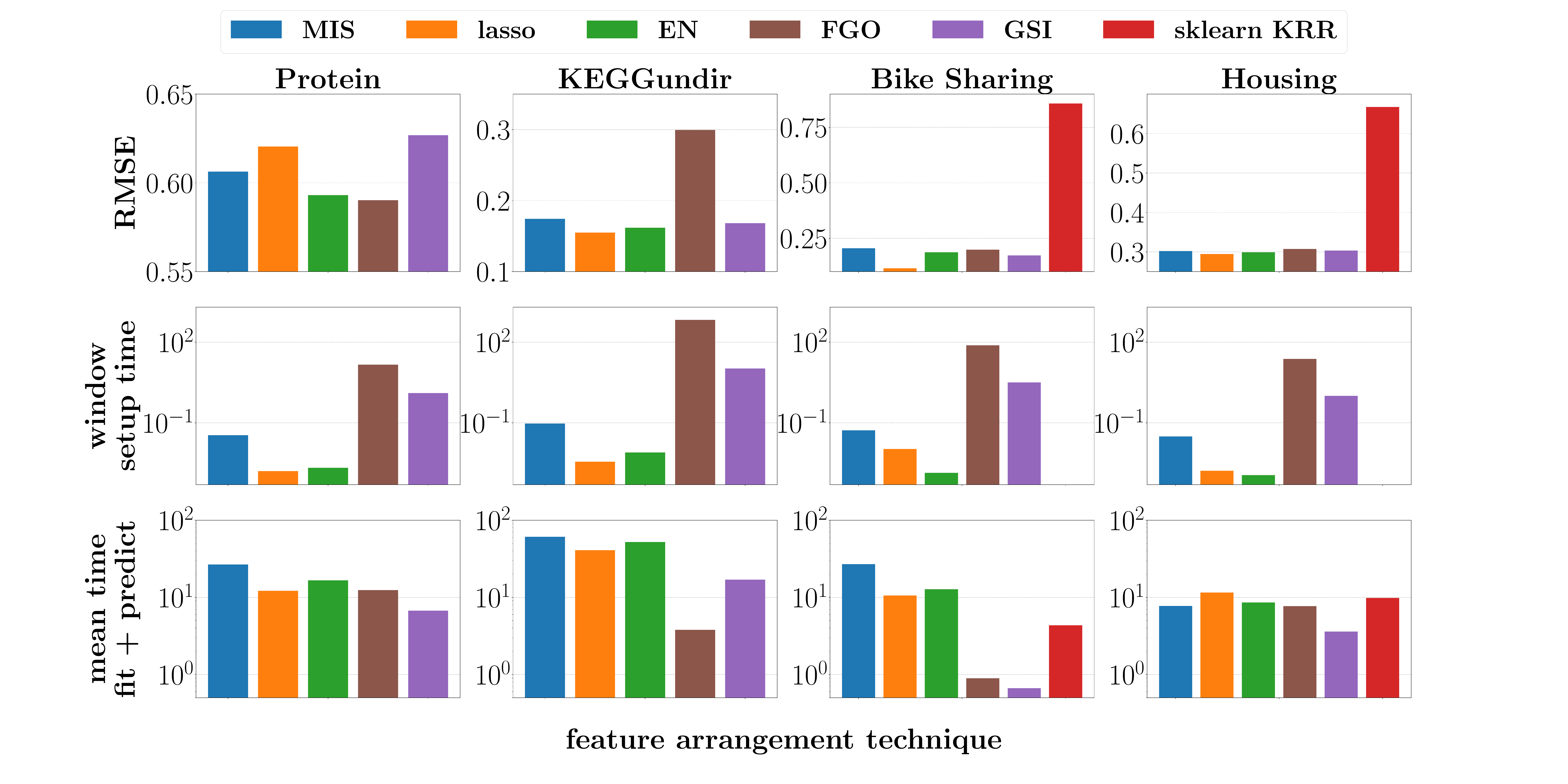}
    \caption{Comparison of the feature importance ranking based techniques MIS, lasso and EN with FGO and GSI for the additive KRR model, and sklearn KRR on the full kernel, with $d_{\text{max}}=3$, $N_{\text{feat}}^{\text{Protein}}=N_{\text{feat}}^{\text{Bike}}=9$, $N_{\text{feat}}^{\text{KEGGundir}}=18$, $N_{\text{feat}}^{\text{Housing}}=8$, $\beta_{\text{FGO}}^{\text{Protein}}=2.5$, $\beta_{\text{FGO}}^{\text{KEGGundir}}=0.5$, $\beta_{\text{FGO}}^{\text{Bike}}=1.0$, $\beta_{\text{FGO}}^{\text{Housing}}=1.5$.}
    \label{fig:final_comparison}
\end{figure}

Note that other than for the KEGGundir data set $N_{\text{feat}}=d$ can lead to a further RMSE improvement for other data sets, in particular when $d$ is small for instance. Therefore, we choose $N_{\text{feat}}=2d/3$ for the KEGGundir and Bike Sharing data set and $N_{\text{feat}}=d$ for the Protein and Housing data set. Moreover, we set $d_{\text{max}}=3$ for the feature importance ranking based techniques and refer to Table~\ref{tab:params} for the further parameter setting. For all 4 data sets considered, EN performs better than MIS in all three categories. Comparing lasso and EN, we cannot recognize an obvious trend of one outperforming the other. For the Bike Sharing and Housing data, lasso yields a better RMSE than EN but worse runtimes and for the Protein data set EN clearly returns a better RMSE but slightly worse runtimes. As expected, FGO and GSI require by far the longest window setup time but often yield a very competitive runtime for training and fitting the model. Once the windows are set up with these methods running the model is usually quite efficient since FGO and GSI usually return fewer windows of shorter length since they are not affected similarly by the choices of $d_{\text{max}}$ and $N_{\text{feat}}$. However, the RMSE obtained with those windows cannot always keep up with the competitors. Note that a careful adjustment of the model parameters in FGO and GSI can lead to a competitive RMSE for the Protein and KEGGundir data sets. The RMSE obtained with windows generated via GSI is already competitive for the KEGGundir, Bike Sharing and Housing data sets. Since GSI windows usually incorporate many features separately, the intuition that windows with larger $d_s$ yield better RMSE cannot be confirmed in general. The red bar represents the performance of sklearn KRR with the full kernel. The additive models clearly provide a better RMSE that is more than two or three times smaller than for the full KRR model. The time for fitting and predicting the model can be slightly smaller if the number of data points is small such as for the Bike Sharing data set. As motivated above in Figure~\ref{fig:nfft_approx_error}, the computational complexity of NFFT-based additive kernel evaluations is evidently smaller for large scale problems. Note that the red bar is missing in the Protein and KEGGundir plots since the computations break for bigger matrices in the sklearn KRR model due to a lack of memory.

\subsection{Extension to Other Kernels}

Naturally, the NFFT-accelerated kernel matrix and derivative kernel evaluations and the presented feature arrangement techniques are not only tailored to the Gaussian kernel but can be applied to other kernels directly. One of the most popular alternatives is the Mat\'ern$(\frac{1}{2})$ kernel
\begin{align*}
    \kappa_{\text{Mat\'ern}}^{1/2} ( \bm{x}, \bm{x}') = \exp \left( - \frac{\| \bm{x} - \bm{x}' \|_2}{\ell} \right).
\end{align*}
Analogously to~\eqref{eq:wind_gauss_anova_kernel} for the additive Gaussian kernel, we can define the Mat\'ern$(\frac{1}{2})$ kernel additively as
\begin{align*} \label{eq:wind_matern_kernel}
    \kappa_{\text{M}}^{1/2} ( \bm{x}_i , \bm{x}_j ) = \sigma_f^2 \sum_{s=1}^P \underbrace{\exp \left( - \frac{\| \bm{x}_i^{\mathcal{W}_s} - \bm{x}_j^{\mathcal{W}_s} \|_2}{\ell} \right)}_{\kappa_{\text{M}_s}^{1/2}}.
\end{align*}
In the \texttt{prescaledFastAdj} repository $\kappa_{\text{Mat\'ern}}^{1/2}$ is referred to as $\text{kernel}=3$ and embedded as `laplacian\_rbf' in the underlying \texttt{NFFT} repository~\citep{NFFTrepo}. Differentiation with respect to the signal variance parameter $\sigma_f$ gives
\begin{align*}
    \frac{\partial K_{{\text{M}}_{ij}}^{1/2}}{\partial \sigma_f} = 2 \sigma_f \sum_{s=1}^P \exp \left( - \frac{\| \bm{x}_i^{\mathcal{W}_s} - \bm{x}_j^{\mathcal{W}_s} \|_2}{\ell} \right) = \frac{2}{\sigma_f} K_{{\text{M}}_{ij}}^{1/2}
\end{align*}
and with respect to the length-scale parameter $\ell$ we obtain
\begin{align*}
    \frac{\partial K_{{\text{M}}_{ij}}^{1/2}}{\partial \ell} = \sigma_f^2 \sum_{s=1}^P \frac{ \| \bm{x}_i^{\mathcal{W}_s} - \bm{x}_j^{\mathcal{W}_s} \|_2}{\ell^2} \exp \left( - \frac{\| \bm{x}_i^{\mathcal{W}_s} - \bm{x}_j^{\mathcal{W}_s} \|_2}{\ell} \right) = \sigma_f^2 \sum_{s=1}^P \frac{C_{\text{M}_s}}{\ell^2} \circ K_{{{\text{M}}_s}_{ij}}^{1/2},
\end{align*}
with $C_{{\text{M}_s}_{ij}} = \| \bm{x}_i^{\mathcal{W}_s} - \bm{x}_j^{\mathcal{W}_s} \|_2$. For the latter we added the `der\_laplacian\_rbf' kernel
\begin{align*}
    \kappa_{\text{der}\text{M}_s}^{1/2} (\bm{x}_i, \bm{x}_j) = \frac{\| \bm{x}_i^{\mathcal{W}_s} - \bm{x}_j^{\mathcal{W}_s} \|_2}{\ell} \exp \left( - \frac{\| \bm{x}_i^{\mathcal{W}_s} - \bm{x}_j^{\mathcal{W}_s} \|_2}{\ell} \right)
\end{align*}
to the \texttt{NFFT} repository that is referred to as $\text{kernel}=4$ within \texttt{prescaledFastAdj}. With that, we obtain $\frac{\partial K_{\text{M}}^{1/2}}{\partial \sigma_f} v = 2\sigma_f \sum_{s=1}^P K_{\text{M}_s}^{1/2} v$ and $\frac{\partial K_{\text{M}}^{1/2}}{\partial \ell} v = \sigma_f^2 \sum_{s=1}^P \frac{1}{\ell} K_{\text{der}\text{M}_s}^{1/2} v$.

\begin{figure}
    \centering
    \includegraphics[width=\textwidth]{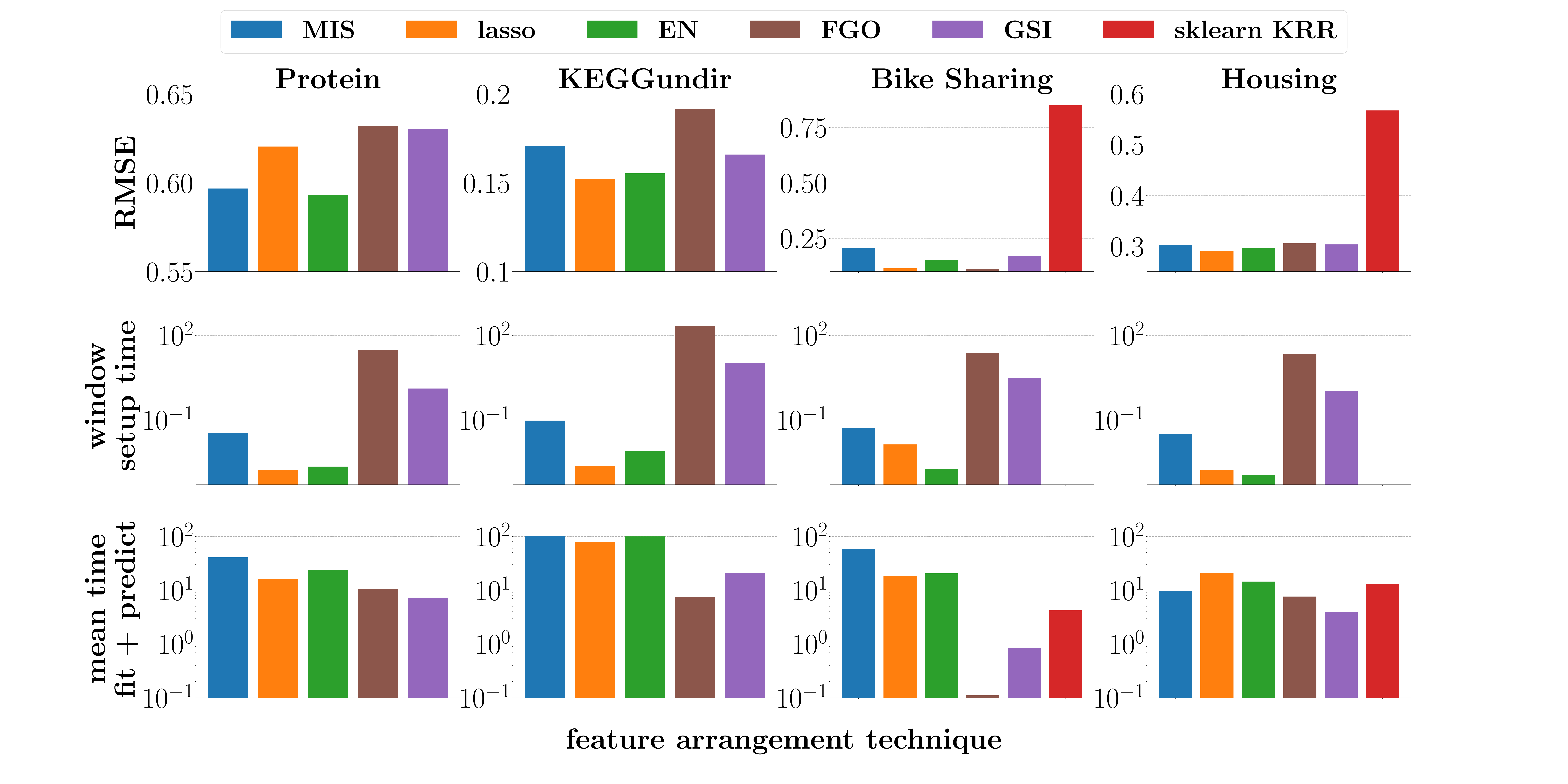}
    \caption{Comparison of the feature importance ranking based techniques MIS, lasso and EN with FGO and GSI for the additive KRR model, and sklearn KRR on the full kernel, with $d_{\text{max}}=3$, $N_{\text{feat}}^{\text{Protein}}=N_{\text{feat}}^{\text{Bike}}=9$, $N_{\text{feat}}^{\text{KEGGundir}}=18$, $N_{\text{feat}}^{\text{Housing}}=8$, $\beta_{\text{FGO}}^{\text{Protein}}=2.5$, $\beta_{\text{FGO}}^{\text{KEGGundir}}=0.5$, $\beta_{\text{FGO}}^{\text{Bike}}=1.0$, $\beta_{\text{FGO}}^{\text{Housing}}=1.5$ for the Mat\'ern$(\frac{1}{2}$) kernel.}
    \label{fig:final_comparison_matern}
\end{figure}

In Figure~\ref{fig:final_comparison_matern} we compare the performance of different feature arrangement techniques for an additive KRR model working with the Mat\'ern$(\frac{1}{2}$) instead of the Gaussian kernel as in Figure~\ref{fig:final_comparison}. In comparison to the model with the Gaussian kernel, the Mat\'ern$(\frac{1}{2}$) kernel does not lead to huge differences in the runtimes for fitting and predicting the model. The different kernel definition does not modify the MIS, lasso and EN techniques but also for FGO and GSI we cannot recognize huge variations in the window setup time in comparison to Figure~\ref{fig:final_comparison}. The RMSE plots however show greater alternation. Especially for the FGO technique, the RMSE obtained with the Mat\'ern$(\frac{1}{2}$) kernel can improve as for the KEGGundir and Bike Sharing data set but also deteriorate as for the Protein data set. The other feature arrangement techniques do not show great variations in performance between the two kernels.

It is also possible to include further kernels such as the Mat\'ern$(\frac{3}{2}$) by specifying the function and its derivatives within the \texttt{NFFT} package. It remains to derive the corresponding Fourier error estimates in future work.

\section{Conclusion} \label{Sec:Conclusion}

In this paper we have analyzed feature arrangement techniques for additive regression models and their applicability to NFFT-accelerated kernel evaluations. We presented several options for splitting the original feature space into smaller feature groups and examined their performance. For simplicity, we demonstrated the numerical results on an additive KRR model, but the computations can easily be applied to other kernel methods.  Moreover, we developed an NFFT-acceleration procedure for kernel evaluations with the derivative kernel and motivated its computational power empirically. This is of great relevance in hyperparameter optimization tasks for GPs, for instance. We derived the corresponding Fourier error estimates for the trivariate Gaussian kernel and its derivative kernel analytically and demonstrated its quality. Finally, we compared the additive KRR model to the state-of-the-art sklearn KRR model with the full kernel matrix. In our experiments, the additive model could consistently yield clearly better RMSE while requiring smaller runtimes for fitting and predicting the model if the data is large enough. We mostly focused on the Gaussian kernel in this paper, but briefly motivate the extension to other kernels such as the Mat\'ern$(\frac{1}{2}$) kernel and present first numerical results. It remains to derive additional theoretical guarantees for this kernel in future work.


\section*{Acknowledgments}
The authors gratefully acknowledge their support from the Bundesministerium f\"{u}r Bildung und Forschung (BMBF) grant 01\,$\vline$\,S20053A (project SA$\ell$E).

\vskip 0.2in
\bibliographystyle{plainnat}

\end{document}